\newcommand{\olive}{\textsc{Olive}\xspace}
\newcommand{\moffle}{\textsc{Moffle}\xspace}
\newcommand{\algname}{\textsc{Briee}\xspace}
\newcommand{\homer}{\textsc{Homer}\xspace}
\newcommand{\repucb}{\textsc{Rep-UCB}}
\newcommand{\alglong}{Block-structured Representation learning with Interleaved Explore Exploit\xspace}
\newcommand{\featurestep}{\hyperref[line:least_square_phi]{line 7}}
\newcommand{\discstep}{\hyperref[line:discriminator_selection]{line 5}}
\definecolor{Gray}{gray}{0.9}
\newcommand{\wen}[1]{\noindent{\textcolor{purple}{\{{\bf WS:} \em{#1}\}}}}
\begin{document}
\title{Efficient Reinforcement Learning in Block MDPs: A Model-free Representation Learning Approach}

\author[1]{Xuezhou Zhang\footnote{\{xz7392, mengdiw\}@princeton.edu}}
\author[2]{Yuda Song\footnote{yudas@andrew.cmu.edu}}
\author[3]{Masatoshi Uehara\footnote{\{mu223, ws455\}@cornell.edu}}
\author[1]{Mengdi Wang${}^\ast$}
\author[4]{Alekh Agarwal\footnote{alekhagarwal@google.com}}
\author[3]{Wen Sun${}^\ddagger$}
\affil[1]{Princeton University}
\affil[2]{Carnegie Mellon University}
\affil[3]{Cornell University}
\affil[4]{Google Research}
\date{}
\maketitle
\begin{abstract}
We present \algname~(\alglong), an algorithm for efficient reinforcement learning in Markov Decision Processes with block structured dynamics (i.e., Block MDPs), where rich observations are generated from a set of unknown latent states. \algname~interleaves latent states discovery, exploration, and exploitation together, 
and can provably learn a near-optimal policy with sample complexity
scaling polynomially in the number of latent states, actions, and the time horizon, with no dependence on the size of the potentially infinite observation space.
Empirically, we show that \algname is more sample efficient than the state-of-art Block MDP algorithm \homer  and other empirical RL baselines on challenging rich-observation combination lock problems which require deep exploration.
\end{abstract}

\section{Introduction}

Representation learning in Reinforcement Learning (RL) has gained increasing attention in recent years from both theoretical and empirical research communities \citep{schwarzer2020data,laskin2020curl} due to its potential in enabling sample-efficient non-linear function approximation, the benefits in multitask settings \citep{zhang2020learning,yang2020provable,sodhani2021block}, and the potential to leverage advances on representation learning in related areas such as computer vision and natural language processing. Despite this interest, there remains a gap between the theoretical and empirical literature, where the theoretically sound methods are seldom evaluated or even implemented and often rely on strong assumptions, while the empirical techniques are not backed with any theoretical guarantees even under stylistic assumptions. This leaves open the key challenge of designing representation learning methods that are both theoretically sound and empirically effective. 

\begin{figure*}[!th]
\begin{tabular}{cc}
\begin{minipage}{0.65\textwidth}
\begin{center}
\begin{tabular}{|c|c|c|}
\hline
&Sample Complexity & Reward\\ \hline
\olive~\citep{jiang2017contextual}& $\frac{|\Zcal|^3H^3|\Acal|^2\log|\Phi|}{\epsilon^2}$ & Yes\\ \hline
\textsc{Rep-UCB}~\citep{uehara2021representation} & $\frac{|\Zcal|^4H^5|\Acal|^2\ln(|\Phi| {\textcolor{red}{|\Upsilon|}})}{\epsilon^2}$ & Yes\\ \hline
\moffle~\citep{modi2021model} & $\frac{|\Zcal|^7H^8|\Acal|^{13}\ln|\Phi|}{\min(\epsilon^2\eta_{\min},\eta_{\min}^5)}$ & No \\[1ex] \hline
\homer~\citep{misra2019kinematic} & $\frac{|\Zcal|^6H|\Acal|(|\Zcal|^2|\Acal|^3 + \ln|\Phi|)}{\min(\eta_{\min}^3,\epsilon^2)}$ & No \\[0.5ex] \hline
\rowcolor{Gray}\algname (this paper) & $\frac{|\Zcal|^8H^9|\Acal|^{14}\ln|\Phi|}{\epsilon^4}$ & Yes \\\hline
\end{tabular}\\~\\
(a)
\end{center}
\end{minipage}
&
\begin{minipage}{0.32\textwidth}\label{exp:fig:comblock}
\centering
\includegraphics[width=\textwidth]{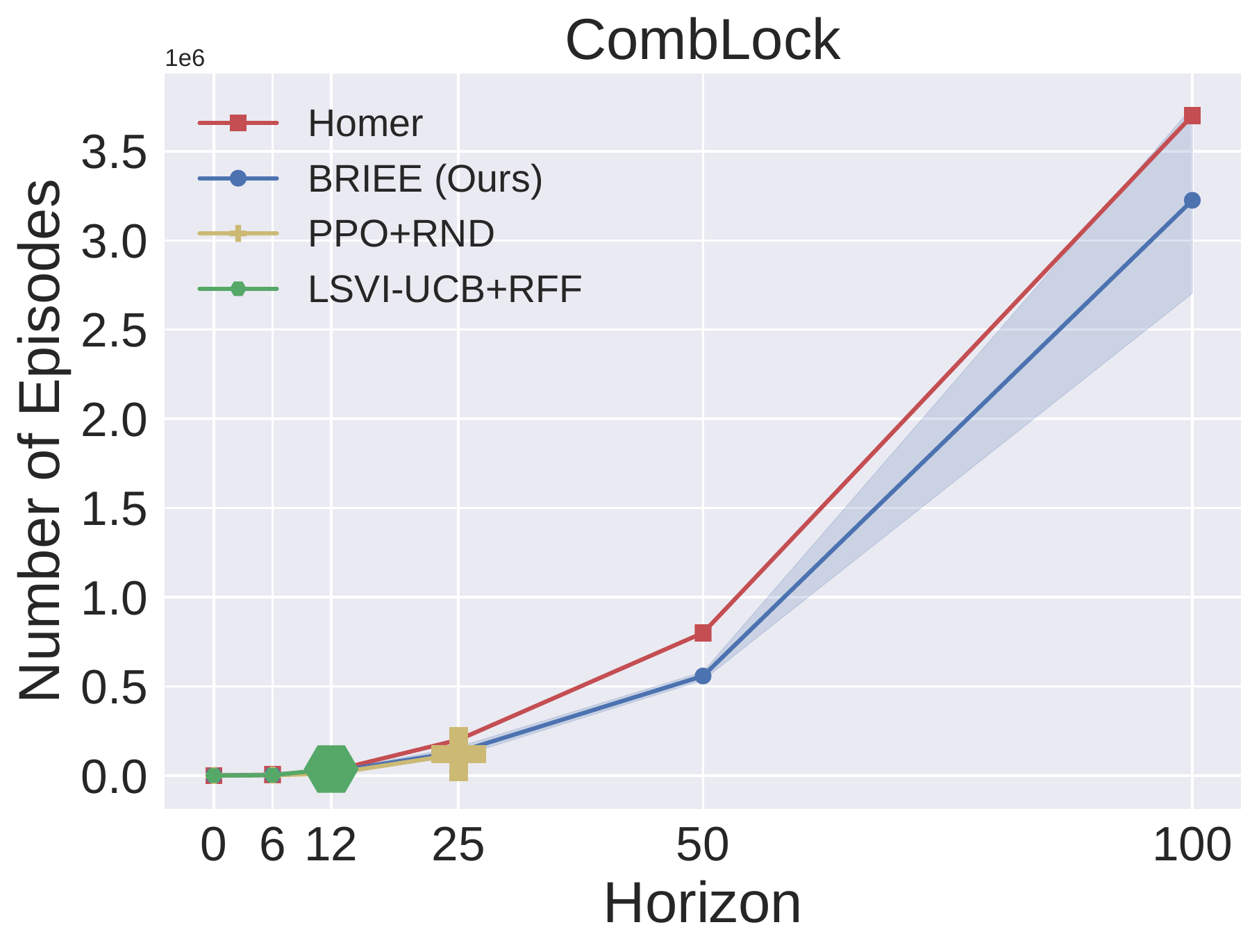}
(b)
\end{minipage}
\end{tabular}
\caption{\textbf{(a)} Sample complexities of different approaches to learning an $\epsilon$-optimal policy for Block MDPs on a given reward function. Some approaches are reward-free and we only count the sample complexity of exploration in these cases. For \moffle and \homer, $\eta_{\min}$ is the minimum probability of reaching any latent state and can be arbitrarily small or even zero in the worst case. $\Upsilon$ in \textsc{Rep-UCB} refers to the function class for modeling the emission distribution since their approach is model-based, which is an additional inductive bias. We omit all log factors other than those depending on function class sizes. Our bound for \olive is a factor of $|\Zcal||\Acal|$ larger than that of~\citep{jiang2017contextual} as we include the uniform convergence over the linear function class $\{w^\top\phi~:~\|w\|_2 \leq 1, \phi\in\Phi\}$, where $\phi\in\RR^{|\Zcal||\Acal|}$, to capture the $Q^\star$ function in a Block MDP. \textbf{(b)} Empirical evaluation of \algname against baselines in a challenging block MDP (see Section~\ref{sec:exp} for details), showing the number of episodes required to find a near-optimal solution for varying horizon lengths. Large marker indicates that the method fails to solve the problem (within the sample budget) for larger horizon. \homer and ours can solve the problem up to $H=100$, but our method consistently outperforms \homer.}
\label{tbl:results}
\end{figure*}

In this work, we tackle this challenge for a special class of problems called \emph{Block MDPs}, where the high dimensional and rich observations of the agent are generated from certain latent states and there exists some fixed, but unknown mapping from observations to the latent states (each observation is generated only by one latent state). Prior works~\citep{dann2018oracle, du2019provably,misra2020kinematic,zhang2020learning,sodhani2021block} have motivated the Block MDP model through scenarios such as navigation tasks and image based robotics tasks where the observations can often be reasonably mapped to the latent physical location and states. 
We develop a new algorithm \algname, which finds a provably good policy for any Block MDP. It performs model-free representation learning with a form of adversarial training to learn the features, interleaved with deep exploration and exploitation. Unlike prior theoretical works, our new approach does not require \textit{uniform reachability}, i.e., every latent state is reachable with sufficient probability, which is a strong assumption that cannot be guaranteed or verified.  
We also demonstrate the empirical effectiveness of our algorithm in Block MDPs that are challenging to explore. Importantly, our technique is model-free which means there is no need to model the observation generation process that can be complex for high dimensional sensory data.

\paragraph{Contributions} Our key contributions are three folds: 
\begin{enumerate}[leftmargin=*,itemsep=0pt]
\item We design a new algorithm \algname  that solves any Block MDP with polynomial sample complexity, with no explicit dependence on the number of states which could be infinite;
\item \algname does not require reachability assumption, and can directly optimize a given reward function; \looseness=-1
\item \algname's computation oracles can be easily implemented using standard gradient based optimization. Our experiments show that \algname is more sample efficient than \homer \citep{misra2020kinematic}, can be extended to richer MDPs where the block structure does not hold, and can leverage dense reward structure to achieve improved sample efficiency.
\end{enumerate}
We summarize our theoretical results in Figure~\ref{tbl:results}(a).
Note that our strong empirical performance beyond Block MDPs (e.g., low-rank MDPs) suggests that it might be possible to extend the theoretical analysis to more general settings, and we leave this as an important direction for future work.

\subsection{Related Work}

We survey some of the relevant literature here. 

\paragraph{RL with function approximation.} There has been considerable progress on sample-efficient RL with function approximation in recent years. While some of it focuses on the linear case (e.g.~\citep{jin2020provably,yang2019sample}) which does not involve representation learning, other works have developed information-theoretically efficient methods for non-linear function approximation~\citep{jiang2017contextual,sun2019model,du2021bilinear,jin2021bellman}, some of which subsume our setup in this paper. Particularly relevant is \olive ~\citep{jiang2017contextual} which can solve Block MDPs in a model-free manner  and has a better sample complexity than \algname. However, it is known to be computationally intractable~\citep{dann2018oracle} even for tabular MDPs. 


\paragraph{Low-rank MDPs} Low-rank MDP is \emph{strictly more general} than linear MDPs which assume representation is known a priori. There are several related papers come from the recent literature on provable representation learning for low-rank MDPs~\citep{agarwal2020flambe,modi2021model,uehara2021representation,ren2021free}. Low-rank MDPs generalize Block MDPs, so these algorithms are applicable in our setting. Of these, however, only \citet{modi2021model} handles the model-free case, while the other approaches are model-based and pay a significant sample complexity overhead in modeling the generative process of the observations. 
The model-free approach of \citet{modi2021model} is the closest to our work and we build on some of their algorithmic and analysis ideas. However, their work makes a significantly stronger assumption that each latent state must be reachable with at least constant probability and do not interleave exploration and exploitation. They also do not provide any empirical validation of their approach. Taking a slightly different approach, \citet{sekhari2021agnostic} studies low-rank MDP from an agnostic policy-based perspective and only requires a policy class that not necessarily contains the optimal policy (thus the name agnostic). However, they show that exponential sample complexity in this setting is not avoidable, which indeed justifies the need for using function approximation to capture representations.
\citet{zhang2021provably} and \citet{papini2021reinforcement} also refer to their settings as \textit{representation learning}, their goal is to choose the most efficient representation among a set of correct representations (i.e., every representation still linearizes the transition), which is stronger than the typical notion of feature learning from an arbitrary function class. 

\paragraph{Block MDPs} There are two prior results on sample-efficient and practical learning in Block MDPs~\citep{du2019provably,misra2019kinematic}. \citet{du2019provably} requires the start state to be deterministic, makes the reachability assumption on latent states, and their sample complexity has an undesirable polynomial dependency on the failure probability $1/\delta$. \citet{misra2019kinematic} removes the deterministic start state assumption but still requires reachability. Both approaches are tailor-made for Block MDPs. These approaches also have to learn in a layer-by-layer forward fashion, which is not ideal in practice (i.e., they cannot learn stationary policies for episodic infinite horizon discounted setting, while our approach can be extended straightforwardly).  
In contrast, \algname learns in all layers simultaneously. \citep{zhang2020learning} extends the Block MDP to multi-task learning and study how the error from a \emph{given} state abstraction affects the multi-task performance, but do not theoretically study how to learn such an  abstraction.  \citet{feng2020provably} assume a high level oracle that can decode latent states. \citet{foster2020instance} focus on instance-dependent bounds, but their bounds scale with a \textit{value function disagreement coefficient} and inverse value gap, both of which can be arbitrarily large in general Block MDPs (e.g., disagreement coefficient is a stronger notation than the usual classic notation of uniform convergence which is what we use here). Finally, \citet{duan2018state} and \citet{ni2021learning} study state abstraction learning from logged data, without identifying the optimal policy.

\paragraph{Approaches from the empirical literature} There are exploration techniques with non-linear function approximation from the deep reinforcement learning literature (e.g.~\citep{bellemare2016unifying,pathakICMl17curiosity,burda2018exploration,machado2020count,sekar2020planning}. Of these, we include the RND approach of~\citet{burda2018exploration} in our empirical evaluation. The use of adversarial discriminators for feature learning is somewhat related to the insights in~\citet{bellemare2019geometric}, but unlike our approach, they use random adversaries in the empirical evaluation, and do not focus on strategic exploration and data collection.

\section{Preliminaries}
\label{sec:prelim}
We consider a finite horizon episodic Markov Decision Process $\left\langle \Scal, \Acal, \{r_h\}_{h=0}^{H-1}, H, \{P_h\}_{h=0}^{H-1} , d_0 \right\rangle$, where $\Scal$ and $\Acal$ are the state and action space, $P_h, r_h$ are the transition and reward at time step $h\in[H]$, $H$ being the episode length; $d_0 \in \Delta(\Scal)$ is the initial state distribution. For normalization, we assume the trajectory cumulative reward is bounded in $[0,1]$.

An MDP is called a \textbf{low-rank MDP}~\citep{rendle2010factorizing,yao2014pseudo,jiang2017contextual} if the transition matrix at any time step $h$ is low-rank, i.e., there exist two mappings $\mu^\star_h :\Scal\mapsto \mathbb{R}^d$, and $\phi^\star_h:\Scal\times\Acal\mapsto \mathbb{R}^d$, such that for any $s,a,s'$, we have $P^\star_h(s' | s,a) = \mu^\star_h(s')^{\top} \phi^\star_h(s,a)$. Denote $d$ the rank of $P^\star_h$. Note that for low-rank MDP, \emph{neither $\mu^\star_h$ nor $\phi^\star_h$ are known}, which is fundamentally different from the linear MDP model \citep{jin2020provably,yang2019sample} where $\phi^\star_h$ is known.  Learning in low-rank MDPs requires either directly learning a near-optimal policy through general function approximation, or doing representation learning  first (again through nonlinear function approximation), followed by linear techniques. Either way, low-rank MDPs provide an expressive framework for analyzing non-linear function approximation in RL.

In this work, we mainly focus on analyzing a special case of low-rank MDPs, called \textbf{Block MDPs}~\citep{du2019provably, misra2020kinematic}. 
We denote $\Zcal$ as a latent state space where $|\Zcal|$ is small. Denote $\Zcal\times\Acal$ as a joint space whose size is $|\Zcal||\Acal|$.  In a Block MDP, each state $s$ in generated from a unique latent state $z$ as described below (hence the name block), which means that the latent state is decodable by just looking at the state. Denote the (unknown) ground truth mapping from $s$ to the corresponding $z$ as $\psi_h^\star: \Scal\mapsto \Zcal$ for all $h$. A Block MDP is formally defined as follows. 
\begin{definition}[Block MDP] \label{def:block_mdp}
Consider any $h\in [H]$. A Block MDP has an emission distribution $o_h(\cdot | z) \in \Delta(\Scal)$ and a latent state space transition $T_h(z' | z, a)$, such that for any $s\in\Scal$, $o_h(s|z) > 0$ for a unique $z\in \Zcal$ denoted as $\psi_h^\star(s)$. Together with the ground truth decoder $\psi^\star_h$, it defines the transitions $P^\star_h(s' | s, a) = \sum_{z'\in \Zcal} o_h(s' | z') T_h(z' | \psi^\star_h(s), a)$.
\end{definition}
The Block MDP structure allows us to model the setting where the states $s$ are high dimensional rich observations (e.g., images) and the state space $\Scal$ is exponentially large or even infinite. In the rest of the paper, we use words \emph{state} and \emph{observation} interchangeably for $s$ with the impression that $s$ is a high dimensional object from an extremely large space $\Scal$. 
Block MDPs are generalized by the {low-rank MDP model}. Denote the ground truth feature vector $\phi_h^{\star}(s,a)$ at step $h$ as a $|\Zcal||\Acal|$-dimensional vector $e_{(\psi^\star(s),a)}$ where $e_i$ is the $i_{th}$ basis vector, so that it is non-zero only in the coordinate corresponding to $(\psi_h^\star(s),a)\in\Zcal\times\Acal$. Correspondingly, for any $s\in \Scal$, $\mu_h^{\star}(s)$ is a $|\Zcal||\Acal|$ dimensional vector such that the $(z,a)_{th}$ entry is $\sum_{z'\in\Zcal} o_h(s | z')T_h(z'|z,a)$.
Then $P_h^{\star}(s' | s,a) = \mu_h^{\star}(s')^{\top} \phi_h^{\star}(s,a)$, so that the Block MDP is a low-rank MDP with rank $d=|\Zcal||\Acal|$. We assume that the reward function $r_h(s,a)$ is known.

\paragraph{Function approximation} Our representation learning approach to learn Block MDPs requires a feature class $\{\Phi_h\}_{h=0}^{H-1}$. Since the features $\phi^\star_h$ are one-hot in a Block MDP as described above, it is natural to use the same structure in the class $\Phi_h$ as well, since it yields statistical and algorithmic advantages as we will explain in the sequel. So any $\phi_h\in\Phi_h$ is parameterized by a candidate decoder $\psi_h~:~\Scal\to\Zcal$ that aims to approximate $\psi^\star_h$, with $\phi_h(s,a) = e_{(\psi_h(s),a)}$.  
In algorithm and analysis, we will mostly work with the state-action representation class $\Phi_h$ directly, but discuss the benefits of the specific Block MDP structure when important. This is because we intend to make our algorithm as general as possible and indeed as we will see, our algorithm can be directly applied to low-rank MDP, although our analysis only focuses on Block MDPs. 

We aim to learn a near optimal policy with sample complexity scaling polynomially with respect to $|\Zcal|, |\Acal|, H$, and the statistical complexity of $\Phi_h$ rather than the size of the state space $|\Scal|$ which could be infinite here. In this work, we will focus our analysis on the setting of finite $\Phi_h$ and thus the statistical complexity $\Phi_h$ is simply $\log(|\Phi_h|)$. Extending to continuous $\Phi_h$ is straightforward by using statistical complexities such as covering number, since our analysis only uses the standard uniform convergence property on $\Phi_h$.

\paragraph{Model-free vs model-based} Like \homer \citep{misra2020kinematic},  we only model $\phi^\star$ via function approximation (hence model-free), while FLAMBE and Rep-UCB  \citep{ agarwal2020flambe,uehara2021representation} additionally model omission distributions $o(\cdot | z)$ (i.e., model-based), which could be complex when observations are high dimensional. 

\paragraph{Notation} We denote $\pi = \{\pi_0,\dots, \pi_{H-1}\}$ as the non-stationary Markovian policy, where each $\pi_h$ maps from a state $s$ to a distribution over actions $\Delta(\Acal)$, and $V^{\pi}_h(s)$ as the value function of $\pi$ at time step $h$, i.e., $V^{\pi}_h(s) = \EE\left[ \sum_{\tau = h}^{H-1} r_h | \pi, s_h = s \right]$. We denote $Q^\pi_h(s,a) = r_h(s,a) + \EE_{s'\sim P^\star_h(s,a)} V^\pi_{h+1}(s')$.  We denote $V^{\pi}_{P,r} \in \mathbb{R}^+$ as the expected total reward of $\pi$ under non-stationary transitions $P:=\{P_h\}_h$ and rewards $r:=\{r_h\}_h$.

We define $d^{\pi}_h(s,a)$ as the probability of $\pi$ visiting a state-action pair $(s,a)$ at time step $h$. We abuse notation a bit and denote $d^{\pi}_h(s)$ as the marginalized state distribution, i.e., $d^{\pi}_h(s) = \sum_a d^{\pi}_h(s,a)$. Given $d^{\pi}_h$, we denote $s\sim d^{\pi}_h$ as sampling a state at time step $h$ from $d^{\pi}_h$, which can be done by executing $\pi$ for $h-1$ steps starting from $h = 0$. We denote $U(\Acal)$ as a uniform distribution over action space $\Acal$. For a vector $x$ and a PSD matrix $\Sigma$, we denote $\|x\|^2_{\Sigma} = x^{\top} \Sigma x$. For $n \in \mathbb{N}^+$, we use $[n] = \{0,1,\dots, n-1\}$. Lastly, we denote $|\Phi|\coloneqq\max_{h\in[H]}|\Phi_h|$, and $a\wedge b = \min(a,b)$.

\begin{algorithm}[t!] 
\caption{\alglong (\algname)} \label{alg:main}
\begin{algorithmic}[1]
  \STATE {\bf  Input:} Representation classes $\{\Phi_h\}_{h=0}^{H-1}$, discriminator classes $\{\Fcal_h\}_{h=0}^{H-1}$, parameters $N, T_n,\alpha_n,\lambda_n$
  \STATE Initialize policy $\hat\pi^0 = \{\pi_0,\dots, \pi_{H-1}\}$ arbitrarily and replay buffers $\Dcal_h = \emptyset, {\Dcal}'_h = \emptyset$ for all $h$
  \FOR{$n = 1 \to N$}
    \STATE Data collection from $\hat \pi^{n-1}$: $\forall h \in [H]$,\\
    \quad $ s\sim d^{\hat \pi^{n-1}}_h, a\sim U(\Acal), s' \sim P^\star_h(s,a);$ \\
    \quad $\tilde{s} \sim d^{\hat \pi^{n-1}}_{h-1}, \tilde a\sim U(\Acal), \tilde s' \sim P^\star_{h-1}(\tilde s,\tilde a)$,\\
    \quad $\tilde a'\sim U(\Acal), \tilde s''\sim P^\star_h(\tilde s',\tilde a')$\\
    \quad $\Dcal_h = \Dcal_h \cup \{s,a,s'\}$ and $\Dcal_h' = \Dcal_h' \cup \{\tilde s',\tilde a',\tilde s''\}$.
    \label{line:data}
    \STATE Learn representations for all $h\in[H]$:\\
        $\hat\phi^n_h = \textsc{RepLearn}\left(\Dcal_h\cup\Dcal'_h, \Phi_h, \Fcal_h, \lambda_n, T_n, \ell_n\right)$
    \vspace{0.1cm}
    \STATE Define exploration bonus for all $h\in[H]$:\\
        $\hat b^n_h(s,a) := \min\Big\{ \alpha_n \sqrt{ {\hat \phi_h^n}(s,a)^\top \Sigma^{-1}_{h} {\hat \phi_h^n}(s,a)  }, 2 \Big\}$,  
    with $\Sigma_h := \sum_{s,a,s'\sim \Dcal_h} \hat\phi^n_h(s,a) {\hat\phi^n_h}(s,a)^{\top} + \lambda_n I$.
    \label{line:bonus}
    \STATE Set $\hat\pi^n$ as the policy returned by:\\
    \hspace{-0.4cm}$\textsc{LSVI}\big( \{r_h+ \hat b^n_h\}_{h=0}^{H-1}, \{ \hat\phi^n_h \}_{h=0}^{H-1}, \{ \Dcal_h \cup \Dcal'_h \}_{h=0}^{H-1}, \lambda_n \big)$.
  \ENDFOR 
  \STATE \textbf{Return } $\hat\pi^0,\dots, \hat\pi^{N}$
\end{algorithmic}
\end{algorithm}

\section{Our Algorithm}

In this section, we present our algorithm \textsc{Briee}: \emph{\alglong}. We first give an overview of our algorithm and then describe how to perform representation learning. 

\paragraph{Algorithm Overview} \pref{alg:main} operates in an episodic setting. In episode $n$, we use the latest policy $\hat\pi^{n-1}$ to collect new data for every time step $h$. 
Note that in our data collection scheme, for each time step $h$, we maintain two replay buffers $\Dcal_h$ and ${\Dcal}'_h$ of transitions $(s,a,s')$ which draw the state $s$ from slightly different distributions (\hyperref[line:data]{line 4}).
With $\Dcal_h$ and ${\Dcal}'_h$, we update the representation $\hat\phi_h$ for time step $h$ by calling our \textsc{RepLearn} oracle which is described in \pref{alg:rep_learn}. 
We then formulate the linear-bandit and linear MDP style bonus $\hat b^n_h(s,a)$ using the latest representation $\hat\phi^n_h$. 
Note that the bonus is constructed using only the replay buffer $\Dcal_h$. When the features $\hat\phi^n_h$ are one-hot for a Block MDP, the first term inside the minimum in the bonus definition (\hyperref[line:bonus]{line 6}) simplifies to $\alpha_n/\sqrt{\lambda_n+N_h^n(\hat\psi^n_h(s),a)}$, where $\hat\psi^n_h(s)$ is the estimated latent state for $s$ corresponding to the index of the non-zero entry in $\hat\phi^n_h(s,a)$ and $N_h^n(\hat\psi^n_h(s),a)$ is the number of times we observe a transition $(\bar{s},\bar a,\bar s')$ in $\mathcal{D}_h$ with $\hat\psi^n_h(\bar{s}) = \hat\psi^n_h(s)$ and $\bar a = a$. 
With bonus $\hat b^n_h$, the representation $\hat\phi^n_h$, and the dataset $\Dcal_h + {\Dcal}'_h$, we use the standard Least Square Value Iteration (LSVI) (\pref{alg:lsvi}) to update our policy to $\pi^{n}$ using the combined reward $r_h + \hat b^n_h$.

Our algorithm is conceptually simple: it resembles the UCB style LSVI algorithm designed for linear MDPs where the ground truth features $\phi^\star$ are known. However, since $\phi^\star$ is unknown, we additionally update the representation $\hat \phi^n_h$ in every episode. Note that if the features $\hat \phi^n_h$ are one-hot, we can alternatively use the counts $N(\hat \psi^n_h(s),a)$ to estimate a tabular transition model over the inferred latent states and do tabular value iteration when the rewards only depend on the latent states. We choose to use the more general LSVI approach as it keeps our algorithm more general and we will comment more on this aspect at the end of this section.

\begin{algorithm}[t] 
\caption{Representation Learning Oracle (\textsc{RepLearn})} \label{alg:rep_learn}
\begin{algorithmic}[1]
  \STATE {\bf  Input:} Dataset $\Dcal = \{s,a,s'\}$, representation class $\Phi$, discriminator class $\Fcal$, regularization $\lambda$,  iterations $T$, termination threshold $\ell$.
  
\STATE Initialize $\phi^0 \in \Phi$ arbitrarily
\STATE Denote least squares loss:\\
        $\Lcal_{\lambda,\Dcal}(\phi,w,f) :=\EE_{\Dcal} \left( w^\top \phi(s,a) - f(s')\right)^2 + \lambda \|w\|_2^2$.
\FOR{$t = 0 \to T-1$}
    \STATE \label{line:discriminator_selection} Discriminator selection:\\  {\textcolor{blue}{\emph{\# find a discriminator that cannot be linearly predicted by the current features }}}\\
    $f^{t} = \argmax_{f} \max_{\tilde \phi\in \Phi} \Big[ \min_{w} \big[ \Lcal_{\lambda,\Dcal}(\phi^t,w,f)\big]$  \\
    \qquad\qquad\qquad\qquad$-\min_{\tilde w} \big[ \Lcal_{\lambda,\Dcal}(\tilde\phi,\tilde w,f)\big] \Big]$
    
    \STATE \label{line:earlystop} \textbf{If} $f^t$ achieves an objective value at most $\ell$:
    \textbf{Break} and \textbf{Return} $\phi^{t}$. 
    \STATE\label{line:least_square_phi} Feature selection via Least Square minimization:   \\ {\textcolor{blue}{\emph{\# find a feature map that can linearly predict all discriminators' values at next states }}}\\
    $\phi^{t+1} = \argmin_{\phi\in\Phi} \min_{\{w_i\}_{i=0}^t } \sum_{i=0}^t \Lcal_{\lambda,\Dcal}(\phi,w_i,f^i)$.
  \ENDFOR 
\end{algorithmic}
\end{algorithm}

\paragraph{Representation Learning} Now we explain our representation learning algorithm (\pref{alg:rep_learn}). This representation learning oracle follows the algorithm from \moffle \citep{modi2021model}. For completeness, we explain the intuition of the representation learning oracle here. Given a dataset $\Dcal = \{(s,a,s')\}$, \pref{alg:rep_learn} aims to learn a representation via adversarial training using the following \emph{ideal objective}:
\begin{equation*}
\min_{\phi \in \Phi_h} \max_{f\in\Fcal_h}  \left[  \min_{w} \EE_{s,a \in \Dcal}   \left( w^{\top} \phi(s,a) - \EE_{s'\sim P^\star(s,a)} f(s')  \right)^2 \right]
\end{equation*}  
where $\Fcal_h \subset [\Scal \to \RR]$ are the discriminators. In \pref{sec:analysis}, we instantiate $\Fcal_h$ as a class of linear functions on top of the representations in $\Phi_{h+1}$. To understand the intuition here, first note that regardless of $f$,  $\EE_{s'\sim P^\star_h(s,a)} f(s')$ is always a linear function with respect to the ground truth features $\phi^\star_h$ (see e.g. Proposition 2.3 in~\citet{jin2020provably}). Hence, $\phi^\star_h$ is always a minimizer of the objective above for any class $\Fcal_h$, and by using a sufficiently rich class $\Fcal_h$, we hope that any other approximate optimum is also a good approximation to $\phi^\star_h$ under the same distribution.

However, the conditional expectation inside the squared loss in our ideal objective precludes easy optimization, or even direct unbiased estimation from samples, related to the "double sampling" issue in Bellman Residual objectives from the policy evaluation literature. Following a standard approach from offline RL~\citep{antos2008learning} also used in \moffle, we instead rewrite the ideal objective with an additional term to remove the residual variance:
\begin{align}
\label{eq:final_rep_learn_obj}
\textstyle \min_{\phi \in \Phi_h}& \max_{f\in\Fcal_h} \Big[  \min_{w} \EE_{ \Dcal} \big( w^{\top} \phi(s,a) -  f(s')  \big)^2 - \min_{\tilde w, \tilde \phi\in \Phi_h} \EE_{\Dcal} \big( \tilde w^{\top} \tilde \phi - f(s') \big)^2 \Big]. 
\end{align}

\pref{alg:rep_learn} optimizes Eq.~\pref{eq:final_rep_learn_obj} through alternating updates over $f$ and $\phi$. At each iteration, it first picks features that can linearly capture expectations of all the discriminators found so far by solving a least squares problem (\featurestep). If $\phi$ is a parametric function, we can \emph{solve the least square regression problem via gradient descent} on the parameters of $\phi$ and $w_i$. Given the latest representation $\phi^t$, we simply search for a discriminator which cannot be linearly predicted by $\phi^t$ for any choice of weights (\discstep). If no such discriminator can be found (\hyperref[line:earlystop]{line 6}), then the current features $\phi^t$ are near optimal and the algorithm terminates and returns $\phi^t$. For a Block MDP, our analysis shows this process will terminate in 
polynomial number of rounds. 
Note that in \discstep, since we use ridge linear regression for $w$ and $\tilde w$, both $w$ and $\tilde w$ have closed form solutions given $\phi^t$ and $\tilde \phi$. Thus, if $f$ and $\tilde \phi$ are parameterized functions, we can compute the gradient of the objective function, which allows us to \emph{directly optimize $f$ and $\tilde \phi$ jointly via gradient ascent}.

\paragraph{Extensions and Computation} Note that our algorithm is stated in a general way that does not use any Block MDP structures. This means that the algorithm can be applied to any low-rank MDP directly. While our theoretical results only hold for Block MDPs, our experimental results indicate that the algorithm can work for more general low-rank MDPs. Note that all prior Block MDP algorithms cannot be directly applied to low-rank MDPs, and use very different approaches than general low-rank MDP learning algorithms.

Another benefit of the algorithm being not tailored to the Block MDP structure is that when we apply our algorithm to linear MDPs where $\phi^\star_h$ are known a priori,  we  can simply set $\Phi_h = \{\phi^\star_h\}$ as a singleton, rendering it as efficient as LSVI-UCB in this setting.

Compared to more general approaches such as \olive, our main benefits are algorithmic and computational. Note that \olive is provably intractable for even a tabular or linear MDP \citep{dann2018oracle}, due to complicated version space constraints. Empirically, the constraints in \olive are unsuitable for gradient-based approaches due to the presence of indicator functions involving the parameters.

\begin{algorithm}[t] 
\caption{Least Square Value Iteration \textsc{LSVI}} \label{alg:lsvi}
\begin{algorithmic}[1]
  \STATE {\bf  Input:} Reward $\{r_h(s,a)\}_{h=0}^{H-1}$, features $\{\phi_h\}_{h=0}^{H-1}$, datasets $\{\Dcal_h\}_{h=0}^{H-1}$, regularization $\lambda$
  \STATE Initialize $V_{H}(s) = 0, \forall s$
  \FOR{$h = H-1 \to 0$}
    \STATE $\Sigma_h = \sum_{s,a,s'\in \Dcal_h} \phi_h(s,a) \phi_h(s,a)^{\top} + \lambda I$ \label{line:covariance_lsvi}
    \STATE $w_h = \Sigma_h^{-1} \sum_{s,a,s'\in \Dcal_h} \phi_h(s,a) V_{h+1}(s')$.
    \STATE Set $Q_h(s,a) = w_h^{\top} \phi_h(s,a) + r_h(s,a)$, and $V_h(s) = \max_{a} Q_h(s,a)$
    \STATE Set $\pi_h(s) = \argmax_a Q_h(s,a)$
  \ENDFOR 
  \STATE \textbf{Return } $\pi := \{ \pi_0,\cdots,\pi_{H-1} \}$
\end{algorithmic}
\end{algorithm}

\section{Analysis}

\label{sec:analysis}

We start by specifying the discriminator class $\Fcal_h$ constructed using the representations from $\Phi_{h+1}$. 
Define two sets of discriminators $\Fcal^{(1)}_h$ and $\Fcal^{(2)}_h$ as
\begin{align*} 
    \Fcal^{(1)}_h &= \Big\{f(s) :  \EE_{a\sim U(\Acal)}\big[\phi(s,a)^\top \theta-\phi'(s,a)^\top \theta'\big]\phi,\phi'\in\Phi_{h+1}\; \max(\|\theta\|_\infty,\|\theta'\|_\infty){\leq} 1\Big\},\\
   \Fcal^{(2)}_h &= \Big\{ f(s){:} \max_a \Big( {\tfrac{r_{h+1}(s,a) + w^\top \phi(s,a)\wedge 2}{2H+1} + \phi(s,a)^\top w'}\Big) \phi\in \Phi_{h+1};  \;\|w\|_\infty\leq c,\|w'\|_\infty\leq 1 \Big\}\\
\end{align*}
respectively, where $c \in \mathbb{R}^+$ is some positive constant that we will specify in the main theorem.
We let our discriminator class be $\Fcal_h = \Fcal^{(1)}_h\cup \Fcal^{(2)}_h$. Note that $\Fcal_h$ contains linear functions of the features in $\Phi_{h+1}$ which makes bounding the statistical complexity (e.g., covering number) of $\Fcal_h$ straightforward.
Using this definition of $\Fcal_h$ in~\pref{alg:main}, we have the following guarantee when the environment is a Block MDP:
\begin{theorem}[PAC bound of \algname]\label{thm:main}
Consider a Block MDP (\pref{def:block_mdp}) and assume that $\phi_h^\star \in \Phi_h$ for all $h\in [H]$. Fix $\delta, \epsilon\in(0,1)$, and let $\hat\pi$ be a uniform mixture of $\hat\pi^0,...,\hat\pi^{N-1}$. By setting the parameters as
\begin{align*}
    \alpha_n = \tilde\Theta\Big((nd^5)^{\frac{1}{4}}|\Acal|\ln\tfrac{|\Phi|n}{\delta}\Big),~\lambda_n = \Theta\Big(d\ln\tfrac{|\Phi|n}{\delta}\Big),
    T_n = \left\lceil\sqrt{\frac{n}{d\ln(\frac{|\Phi|}{\delta})}}\right\rceil, 
    \ell_n = \Theta\left(d^{2} \sqrt{\frac{\ln(\frac{|\Phi|}{\delta})}{n}}\right), ~ c  = \frac{\alpha_N}{\sqrt{\lambda_N}}
    \end{align*}
with probability at least $1-\delta$, we have $V^{\pi^{\star}}_{P^{\star}}-V^{\hat\pi}_{P^{\star}}\leq \epsilon$, after at most
\begin{equation*}
    H\cdot N\leq O\left(\frac{H^9|\Acal|^{14}|\Zcal|^{8}\ln(H|\Acal||\Zcal||\Phi|/\delta\epsilon)}{\epsilon^{4}}\right)
\end{equation*}
episodes of interaction with the environment.
\end{theorem}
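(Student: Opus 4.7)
The plan is to decompose the analysis of \algname\ into three pillars --- a representation-learning guarantee for \pref{alg:rep_learn}, an optimistic planning guarantee for the LSVI subroutine, and a cumulative exploration-bonus bound --- and then close an induction over the $N$ outer episodes. The structural fact I would exploit throughout is that for any Block MDP the ground truth $\phi^\star_h$ is one-hot and lies in $\Phi_h$, so the ideal minimax objective in Eq.~\pref{eq:final_rep_learn_obj} attains value zero in the population with $\phi^\star_h$ as a feasible minimizer, which in turn makes every discriminator's Bellman backup exactly linear in $\phi^\star_h$. This is the ``realizability'' hook that will drive both the termination of \pref{alg:rep_learn} and the optimism of \pref{alg:lsvi}.

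The first block of the proof analyzes \pref{alg:rep_learn}. A standard potential/telescoping argument (analogous to that of \moffle) shows that each iteration $t$ strictly grows a covariance-style progress measure on the set of previously chosen discriminators, so the early-stopping criterion $\ell_n$ is reached after $T_n = \tilde O(\sqrt{n/d})$ rounds. Coupled with uniform concentration over $\Phi_h \times \Fcal_h$ --- whose log covering number is $\tilde O(\log |\Phi|)$ because $\Fcal_h$ is parametric in $\Phi_{h+1}$ with bounded weights --- this yields the key guarantee: for every $f \in \Fcal_h$ there exists $w$ with
\begin{equation*}
\EE_{(s,a)\sim\rho_h}\!\Bigl(w^\top \hat\phi^n_h(s,a) - \EE_{s'\sim P^\star_h(s,a)} f(s')\Bigr)^2 \;\lesssim\; \ell_n + \tilde O\!\left(\tfrac{d\log(|\Phi|/\delta)}{n}\right),
\end{equation*}
where $\rho_h$ is the exploratory distribution that generated $\Dcal_h\cup \Dcal'_h$.

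The second block imports this on-data guarantee into the LSVI-UCB template of \citet{jin2020provably}. The role of the auxiliary buffer $\Dcal'_h$ is to make the \emph{next}-state distribution of $\Dcal_{h-1}$ approximately match the \emph{current}-state distribution of $\Dcal'_h$, so that the regression error on $\hat\phi^n_{h-1}$ propagates cleanly backward through value iteration, and the class $\Fcal^{(2)}_h$ is engineered so that every bonus-augmented and clipped $Q$-function produced by \pref{alg:lsvi} at layer $h+1$ lies in its span once $c=\alpha_N/\sqrt{\lambda_N}$; similarly $\Fcal^{(1)}_h$ captures the ``one-step-look-ahead'' functions needed for bonus validity. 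Together with the standard optimism argument this yields $\hat V^n_0(s_0) \ge V^\star(s_0) - \mathrm{err}_n$ and
\begin{equation*}
V^\star_{P^\star} - V^{\hat\pi^n}_{P^\star} \;\le\; \EE_{\hat\pi^n}\!\Big[\sum_{h=0}^{H-1} 2\,\hat b^n_h(s_h,a_h)\Big] + \mathrm{err}_n,
\end{equation*}
with $\mathrm{err}_n$ controlled by $\ell_n$ and the uniform concentration terms above.

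The third block aggregates over $n=1,\dots,N$. A per-layer elliptic-potential argument gives $\sum_{n=1}^N \EE_{\hat\pi^{n-1}}[\min(1,\hat b^n_h)] = \tilde O(\alpha_N\sqrt{dN})$, which in the Block MDP case collapses since the bonus simplifies to $\alpha_n/\sqrt{\lambda_n + N_h^n(\hat\psi^n_h(s),a)}$ ranging over at most $|\Zcal||\Acal|$ latent cells, giving an $\tilde O(\sqrt{|\Zcal||\Acal|N}\,\alpha_N)$ bound by Cauchy--Schwarz. Returning the uniform mixture $\hat\pi$ converts this average regret to a PAC statement, and substituting the prescribed $\alpha_n,\lambda_n,T_n,\ell_n$ balances the representation-learning error $\tilde O(\ell_n)$ against the exploration error, producing $HN \le \tilde O(H^9|\Acal|^{14}|\Zcal|^8/\epsilon^4)$; the $1/\epsilon^4$ scaling is the natural product of the $1/\sqrt n$ representation-learning rate and the $1/\sqrt n$ exploration rate. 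The hardest step, I expect, is closing the loop between \textsc{RepLearn} and \textsc{LSVI}: \textsc{RepLearn} only certifies accuracy under $\rho_h$, while LSVI must generalize to the on-policy distributions of all future iterates. This demands both that $\Fcal_h$ be rich enough to contain every bonus-augmented $Q$-function the algorithm will ever instantiate (hence the somewhat elaborate $\Fcal^{(2)}_h$ with the scaling $c$), and that the coverage of $\rho_h$ dominate the reachable next-state distribution --- a property that is true for Block MDPs precisely because one-hot features render $\Sigma_h$ tabular over latent states, and is the point at which a naive extension of the argument to a general low-rank MDP would fail.
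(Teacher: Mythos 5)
Your skeleton (a \textsc{RepLearn} guarantee, an optimism step, and an aggregation of bonuses over $n$) matches the paper's, and your first block on \pref{alg:rep_learn} is essentially right. The gap is in how you close the loop between the on-distribution guarantee and the on-policy evaluation --- the step you yourself flag as hardest --- and both mechanisms you propose for it would fail. First, ``the standard optimism argument'' of LSVI-UCB is point-wise ($Q_h \geq Q^\star_h$ at every $(s,a)$) and relies on self-normalized concentration showing the regression error is dominated by the bonus everywhere; with learned features that are only accurate under $\rho_h$, no such point-wise control exists. The paper instead proves optimism \emph{only at the initial distribution} (\pref{lem:optimism}): it applies the simulation lemma under $d^{\pi^\star}_{\hat P,h}$ and then a one-step-back inequality in the learned model (\pref{lem:tsbt1}) that uses the bilinear form $\EE_{d^\pi_h}[g] = \langle \EE_{d^\pi_{h-1}}\hat\phi_{h-1}, \int\hat\mu_{h-1}(s)\EE_{a\sim\pi}g(s,a)\,ds\rangle$ and Cauchy--Schwarz to turn the model error into $\|\hat\phi_{h-1}\|_{\Sigma^{-1}_{\rho_{h-1},\hat\phi_{h-1}}}$ times an $\EE_{\rho}[g^2]$ term that \textsc{RepLearn} controls; the bonus is calibrated exactly to dominate that product. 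Your alternative claim that ``the coverage of $\rho_h$ dominates the reachable next-state distribution'' is essentially a reachability assumption in disguise, which the paper is explicitly designed to avoid.

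Second, your aggregation step runs the elliptic potential / pigeonhole over the latent cells of $\hat\psi^n_h$. This does not telescope: the decoder changes at every iteration, so the counts $N^n_h(\hat\psi^n_h(s),a)$ are taken with respect to a different partition of $\Scal$ at each $n$, and a cell with a large count at iteration $n$ can split or migrate at iteration $n+1$, so visits do not accumulate in any fixed direction. The paper avoids this by applying the one-step-back inequality for the \emph{true} model (\pref{lem:tsbt2}) to move the bonus and model-error expectations onto the \emph{fixed} features $\phi^\star_{h-1}$, bounding the residual $n\EE_{\rho_{h+1}}[(\hat b_{h+1})^2] \leq \alpha_n^2 d$ via a trace identity, and only then running the elliptic potential over $n$ on $\Sigma_{\gamma^n_h,\phi^\star}$ --- a quantity that does accumulate across iterations. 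Without anchoring the potential argument on $\phi^\star$, the $N^{3/4}$ pseudo-regret, and hence the $\epsilon^{-4}$ sample complexity, does not follow.
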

\pref{thm:main} certifies a polynomial sample complexity of \algname for all Block MDPs. Furthermore, the bound on $T$ means that our overall computational complexity is favorable as long as one iteration of \textsc{RepLearn} can be efficiently executed. 
Compared to other efficient methods for solving Block MDPs, our result is fully general and places no additional assumptions on the reachability or stochasticity in the underlying dynamics. Compared with statistically superior methods like \olive, our algorithm is amenable to practical implementation as we demonstrate through our experiments. 
Finally, compared with prior works which are primarily model-based, our model-free guarantees do not require modeling the emission process of observations in a Block MDP. Compared with the prior model-free representation learning guarantee of~\citet{modi2021model}, we do not require reachability of latent states. 

Our next result shows that our \algname framework is flexible enough that a variation of the algorithm can solve the reward-free exploration problem. To perform reward-free exploration, we simply run \algname by setting reward $r_h(s,a) = 0, \forall h, s,a$. After the exploration phase, once we are given a new reward function that is linear in the ground truth feature $\phi^\star$, we can find a near optimal policy via LSVI. The detailed procedure is described in  \pref{alg:reward_free} in appendix. 
\begin{theorem}[Reward-free Exploration]\label{thm:reward_free}
Consider a Block MDP and assume that $\phi_h^\star \in \Phi_h$ for all $h\in [H]$. Fix $\delta, \epsilon\in(0,1)$, by setting the parameters as in \pref{thm:main}, given $N$ episodes in the exploration phase,
with probability at least $1-\delta$, for all reward function $r(s,a)$ that is linear in $\phi^\star$, \pref{alg:reward_free} finds a policy $\hat\pi$ in the planning phase such that
\begin{align*}
    V^{\pi^\star}_{P^\star,r}-V^{\hat\pi}_{P^\star,r}\leq O\left(H^{5/2}|\Acal|^{1/2}d\log(|\Phi|/\delta)^{1/4}N^{-1/4}\right).
\end{align*}
\end{theorem}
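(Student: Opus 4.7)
The plan is to reduce the reward-free guarantee to the intrinsic bonus-convergence that is already implicit in the proof of Theorem~\ref{thm:main}. When \algname is run with $r_h\equiv 0$, the LSVI step in each episode makes $\hat\pi^{n-1}$ a (near-)maximizer of the bonus value $V^{\pi}_{P^\star,\hat b^n}$, so the same elliptical-potential argument behind Theorem~\ref{thm:main}, combined with the \textsc{RepLearn} accuracy guarantee at level $\ell_N$, yields, on a good event of probability at least $1-\delta$,
\begin{equation*}
\frac{1}{N}\sum_{n=1}^{N} V^{\pi}_{P^\star,\hat b^n}\;\le\;\frac{1}{N}\sum_{n=1}^{N} V^{\hat\pi^{n-1}}_{P^\star,\hat b^n}+o(1)\;\le\;\varepsilon_N\,:=\,O\bigl(H^{5/2}|\Acal|^{1/2}d\log(|\Phi|/\delta)^{1/4}N^{-1/4}\bigr)
\end{equation*}
for every comparator policy $\pi$. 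I would first extract this inequality cleanly from the proof of Theorem~\ref{thm:main} as a standalone lemma, since it does not use the extrinsic reward at all and only depends on the data collection scheme and the representation-learning oracle.

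Next I would analyze \pref{alg:reward_free}, which I expect is plain LSVI run with the final learned features $\hat\phi^N$, the replay data, and the given reward $r(s,a)=\langle\phi_h^\star(s,a),\theta_r\rangle$. By the usual performance-difference decomposition, the planning suboptimality under $\pi^\star$ reduces to a sum over $h$ of pointwise Bellman residuals of $\hat w_h^\top\hat\phi^N_h$. Two facts carry over from the proof of Theorem~\ref{thm:main}: (i) because $r$ is linear in $\phi^\star$ its per-step integral $\EE_{s'}r_h(s',\cdot)$ is captured by $\Fcal^{(1)}_h$, while the Bellman backup $\EE_{s'}\hat V_{h+1}(s')$ lies in $\Fcal^{(2)}_h$ by the choice of $c$; and (ii) the termination condition at level $\ell_N$ in \textsc{RepLearn} ensures that any $f\in\Fcal_h$ is linearly predicted by $\hat\phi^N_h$ up to squared error $\ell_N$ on the training distribution. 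Combining these with standard ridge-regression concentration on $\Sigma^N_h$ then bounds each pointwise Bellman residual by a constant times $\alpha_N^{-1}\hat b^N_h(s,a)+\sqrt{\ell_N}$.

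Integrating this pointwise bound against $d^{\pi^\star}_h$ and invoking the intrinsic-convergence lemma from the first step with $\pi=\pi^\star$ gives a total suboptimality of $O\bigl(H\alpha_N^{-1}\varepsilon_N+H\sqrt{H\ell_N}\bigr)$. Plugging in the parameter schedule $\alpha_N=\tilde\Theta((Nd^5)^{1/4}|\Acal|\log(|\Phi|N/\delta))$ and $\ell_N=\tilde\Theta(d^2\sqrt{\log(|\Phi|/\delta)/N})$ from Theorem~\ref{thm:main} and tracking the $H,|\Acal|,d$ factors produces the claimed $H^{5/2}|\Acal|^{1/2}d\log(|\Phi|/\delta)^{1/4}N^{-1/4}$ rate. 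The main obstacle I foresee is technical rather than conceptual: the $N$ covariance matrices $\Sigma^n_h$ are built from different learned features $\hat\phi^n_h$, whereas planning uses only $\hat\phi^N_h$. Transferring coverage across representations requires the \textsc{RepLearn} accuracy guarantee to imply that all $\hat\phi^n_h$ agree with $\phi^\star_h$ up to a bounded linear reparametrization on the data distribution, so that a single elliptical-potential / coverage argument in the $\phi^\star$ basis can be pulled back to $\hat\phi^N$. Handling this change of basis cleanly, together with the usual uniform convergence over $\Phi_h$ and $\Fcal_h$, is where the $H^{5/2}|\Acal|^{1/2}d$ prefactor in the theorem emerges.
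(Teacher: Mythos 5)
Your high-level plan (reduce to a bonus-convergence statement, use linearity of the reward, reuse the \textsc{RepLearn} guarantee) points in the right direction, but there are two concrete gaps that the paper's proof is specifically engineered to avoid.

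First, you plan with the \emph{final} features $\hat\phi^N$ and the final covariance, and you correctly flag that this forces you to transfer coverage across the $N$ different learned representations — a change-of-basis argument you do not actually supply. The paper never faces this problem: during exploration it records, for every iteration $n$, the optimistic value $\hat V^n_0(s_0) = V^{\hat\pi^n}_{\hat P^n,\hat b^n}$ of the pure-bonus MDP, and then selects $\tilde n = \argmin_n \hat V^n_0(s_0)$. The regret-style bound $\sum_n V^{\hat\pi^n}_{\hat P^n,\hat b^n} \leq O(N^{3/4})$ (Lemma~\ref{lem:rf1}, which is Theorem~\ref{thm:pseudo_regret} with $r\equiv 0$) only controls the \emph{sum}, so by pigeonhole the minimizer $\tilde n$ has value $O(N^{-1/4})$ — but iteration $N$ itself need not; the per-iteration bonus values are not monotone. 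Planning is then done entirely with $\hat\phi^{\tilde n}$, $\hat b^{\tilde n}$, and $\Dcal^{\tilde n}$, so a single self-consistent representation is used and no basis transfer is needed. Without the argmin selection your argument does not close.

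Second, the discriminator class must be enlarged for the reward-free setting, and this happens \emph{during exploration}, before the reward is revealed. The class $\Fcal^{(2)}_h$ in Eq.~\eqref{eq:Fcal} hard-codes the known reward $r_{h+1}(s,a)$ inside the $\max_a$; with $r\equiv 0$ during exploration, the planning-phase value functions $V^{\hat\pi}_{\hat P^{\tilde n}, r+\hat b^{\tilde n},h+1}$ (which depend on the revealed reward and on the greedy policy it induces) are not members of that class, so Lemma~\ref{lem:rep_learn_informal} would not apply to them. The paper replaces $r_{h+1}(s,a)$ by a generic linear term $\phi'(s,a)^\top\theta$ covering all rewards linear in $\phi^\star$, and replaces $\max_a$ by a finite policy class $\tilde\Pi$ of all decodable policies, checking that the log-covering number changes only by constants. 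You gesture at capturing $r$ via $\Fcal^{(1)}_h$ but do not modify the class, so the key step where the Bellman backup of the planning value function is certified accurate on the training distribution is unsupported. Finally, your opening inequality $\frac1N\sum_n V^{\pi}_{P^\star,\hat b^n}\leq \varepsilon_N$ uniformly over all comparators $\pi$ is stronger than what the paper proves or needs; the paper instead applies the optimism lemma to $\pi^\star$ in the MDP $(\hat P^{\tilde n}, r+\hat b^{\tilde n})$, uses $\hat\pi=\argmax_\pi V^{\pi}_{\hat P^{\tilde n},r+\hat b^{\tilde n}}$, and then bounds $V^{\hat\pi}_{\hat P^{\tilde n},r+\hat b^{\tilde n}}-V^{\hat\pi}_{P^\star,r}\leq C\,V^{\hat\pi}_{\hat P^{\tilde n},\hat b^{\tilde n}}+\sqrt{|\Acal|\zeta_n}$ (Lemma~\ref{lem:rf2}, via the simulation lemma and the one-step-back trick), which is finally controlled by $V^{\hat\pi^{\tilde n}}_{\hat P^{\tilde n},\hat b^{\tilde n}}$ and Lemma~\ref{lem:rf1}.
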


\subsection{The Representation Learning Oracle's Guarantee}

The following lemma gives a bound on the reconstruction error with the learned features at any iteration of \algname, under the data distribution used for representation learning. 

\begin{lemma}[\textsc{RepLearn} guarantee]\label{lem:rep_learn_informal}
Consider parameters defined in \pref{thm:main} and time step $h$. Denote $\rho$ as the joint  distribution for $(s,a,s')$ in the dataset $\Dcal$ of size $n$.  Let $\hat\phi = \textsc{RepLearn}(\Dcal,\Phi_h, \Fcal_h, \lambda_n, T_n, \ell_n)$. Denote $\widehat{w}_f = \argmin_w \sum_{s,a,s'\in\Dcal}(w^\top \hat\phi(s,a) - f(s'))^2 + \lambda_n \|w\|_2^2$~for any $f:\Scal\to\mathbb{R}$. Then, with probability at least $1-\delta$:
\begin{align*}
    &\max_{f\in\Fcal_h} \EE_{s,a\sim \rho} \left[\left( \hat w_f^\top \hat\phi(s,a) - \EE_{s'\sim P^\star_h(s,a)}[f(s')]\right)^2\right]
    \leq \zeta_n =  O\left(d^2\sqrt{\log\left(n|\Phi| / \delta \right)/n}\right).
\end{align*}
\end{lemma}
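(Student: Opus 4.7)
The plan is to establish two things in sequence: (i) by the realizability assumption $\phi^\star_h \in \Phi_h$, the ``variance-corrected'' objective in \pref{alg:rep_learn} equals, up to concentration and a ridge bias, the population reconstruction error of $\EE_{s'\sim P^\star_h(s,a)}[f(s')]$ by $\hat\phi$, and (ii) the alternating procedure either halts early with this objective below $\ell_n$ or must do so within $T_n$ rounds. Combining the two, together with a uniform convergence bound over $\Phi_h$, $\Fcal_h$, and bounded linear weights, yields the stated guarantee at rate $\zeta_n$.

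First I would exploit the low-rank (hence Block MDP) structure: for any bounded $f:\Scal\to\RR$, the conditional expectation $g_f(s,a) := \EE_{s'\sim P^\star_h(s,a)}[f(s')] = \mu^\star_h(f)^\top \phi^\star_h(s,a)$ is exactly linear in $\phi^\star_h$. Plugging $\tilde\phi = \phi^\star_h$ into the inner minimization shows that in population,
\begin{equation*}
\EE_\rho\bigl[(w^\top\phi(s,a) - f(s'))^2\bigr] - \min_{\tilde\phi\in\Phi_h,\tilde w}\EE_\rho\bigl[(\tilde w^\top\tilde\phi(s,a)-f(s'))^2\bigr]
= \EE_\rho\bigl[(w^\top\phi(s,a) - g_f(s,a))^2\bigr],
\end{equation*}
with the irreducible noise term $\EE[(f(s')-g_f(s,a))^2]$ cancelling on both sides. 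Hence any $\hat\phi$ that drives the empirical surrogate below $\ell_n$ uniformly over $f\in\Fcal_h$ directly bounds the reconstruction error, up to uniform convergence error and the $\lambda_n\|w\|_2^2$ ridge bias which is controlled by the chosen scale of $\lambda_n$.

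Second, I would establish uniform convergence of $\Lcal_{\lambda,\Dcal}(\phi,w,f)$ to its population counterpart over $(\phi,\phi')\in\Phi_h^2$, $f\in\Fcal_h$, and $(w,\tilde w)$ living in balls whose radius is controlled by the problem parameters. Because $\Fcal_h=\Fcal_h^{(1)}\cup\Fcal_h^{(2)}$ is a parametric class built linearly on top of $\Phi_{h+1}$ with $\ell_\infty$-bounded coefficients, a standard $\epsilon$-net over the coefficient vectors plus a union bound over $\Phi_{h+1}$ gives an effective log-cardinality of order $d\log(|\Phi|/(\epsilon\delta))$. Plugging into a Bernstein-type tail bound yields a deviation of order $\tilde O(d\sqrt{\log(|\Phi|/\delta)/n})$ between empirical and population objectives, which matches $\zeta_n$ up to polynomial factors in $d$.

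Third, and this is the main obstacle, I need to bound the number of rounds before the early-stop test in \hyperref[line:earlystop]{line 6} triggers. Suppose it has not triggered at round $t$; then the discriminator $f^t$ achieves empirical objective exceeding $\ell_n$, and by realizability this certifies a direction $\theta_t^\star\in\RR^d$ with $g_{f^t}(s,a)=\theta_t^{\star\top}\phi^\star_h(s,a)$ that the current features $\phi^t$ cannot fit to accuracy $\ell_n$ in the $\lambda_n$-ridge-regularized sense. The next iterate $\phi^{t+1}$, however, is chosen to jointly fit all previously found $f^0,\dots,f^t$. Following the MOFFLE-style elliptical potential / determinant argument on the $d\times d$ covariance of $\{\theta_i^\star\}$ under the empirical distribution, one shows that this ``new direction at each step'' phenomenon cannot persist for more than roughly $\tilde O(d)$ rounds before the cumulative loss across past discriminators forces the gap to fall below $\ell_n$. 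The choice $T_n \geq \sqrt{n/(d\log(|\Phi|/\delta))}$ dominates this bound, so termination is guaranteed. Finally, translating the empirical bound $\le\ell_n$ through the uniform concentration of step two and a ridge-perturbation argument relating $\hat w_f$ (the empirical regularized minimizer) to the population minimizer produces the claimed $\zeta_n = O(d^2\sqrt{\log(n|\Phi|/\delta)/n})$ bound. The delicate points are (i) calibrating the weight-ball radius $c\sqrt{d/\lambda_n}$ used in the covering so that $\hat w_f$ is contained yet the class complexity remains controlled, and (ii) carefully tracking how the ridge regularization interacts with the potential argument so that the elimination of ``new directions'' actually forces termination within the claimed $T_n$.
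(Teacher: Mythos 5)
Your three-part architecture (realizability-based cancellation of the double-sampling term, uniform convergence over $\Phi_h\times\Fcal_h\times\{\text{bounded }w\}$ via covering numbers, and an elliptic-potential bound on the number of rounds) is exactly the paper's route, and steps (i) and (ii) are essentially right. However, step (iii) contains a genuine gap. You claim the "new direction at each step" phenomenon cannot persist for more than roughly $\tilde O(d)$ rounds, as in the standard elliptic potential argument. That is not what happens here, and the standard argument does not apply: a non-terminating round $t$ only certifies that \emph{no} $w$ fits $\theta^\star_{t+1}$ to squared error $\epsilon_1$ under $\phi^t$, and converting this into a statement about $\theta^\star_{t+1}$ relative to $\Lambda_t=\sum_{i\le t}\theta^\star_i\theta^{\star\top}_i+\lambda' I$ requires constructing the explicit candidate $\hat w_t = W_tA_t^\top\Lambda_t^{-1}\theta^\star_{t+1}$ from the weights $W_t$ that $\phi^t$ \emph{does} use to fit the past discriminators. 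The resulting certificate is only $\|\Lambda_t^{-1}\theta^\star_{t+1}\|_2\geq\sqrt{\epsilon_1/(2(2L^2dT^2\tilde\epsilon+\lambda'^2))}$ --- a quantity that shrinks with $T$ and with the accumulated fitting error $T^2\tilde\epsilon$, not a constant-size "new direction." Combining this with the generalized elliptic potential bound $\sum_{t\le T}\|\Lambda_t^{-1}\theta^\star_{t+1}\|_2\le 2\sqrt{dT/\lambda'}$ yields a self-referential inequality in $T$, $\epsilon_1$, and $\tilde\epsilon$ whose resolution forces $\epsilon_1=\Theta(Ld^{3/2}\tilde\epsilon^{1/2})$ and $T=O(\sqrt{L^2d/\tilde\epsilon})=O(\sqrt{n/\log(|\Phi|/\delta)})$ --- consistent with the setting of $T_n$ in Theorem~\ref{thm:main}, and far larger than $\tilde O(d)$. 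This calibration is not a technicality: it is precisely why the final squared-error bound $\zeta_n$ decays only as $d^2 n^{-1/2}$ rather than at the fast rate $\tilde O(dn^{-1})$ that your $\tilde O(d)$-round claim would imply. The termination threshold $\ell_n$ must be set at the level $\Theta(\epsilon_1)\gg\tilde\epsilon$, so the returned feature's error is dominated by $\epsilon_1$, not by the statistical error $\tilde\epsilon$.

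Two secondary points. First, your stated uniform-convergence deviation $\tilde O(d\sqrt{\log(|\Phi|/\delta)/n})$ is the slow rate; the paper needs the self-bounding (Bernstein) form, deviation $\le\tfrac12(\text{excess risk})+O(L^2d\log(\cdot)/n)$, so that the cumulative fitting error over $t$ past discriminators is $t\tilde\epsilon$ with $\tilde\epsilon=O(d\log(\cdot)/n)$; feeding a slow-rate $t\sqrt{\tilde\epsilon}$ into the potential argument degrades the final exponent. Second, your weight-ball radius "$c\sqrt{d/\lambda_n}$" is the generic ridge bound, which grows with $n$ and would inflate both the covering and the range constant $L$ that enters the Bernstein variance and the potential argument as $L^2$; the paper instead uses the Block MDP structure (one-hot features) to get $\|\hat w_f\|_\infty\le\|f\|_\infty$ exactly (Lemma~\ref{lem:bound_w}), which is what keeps all the constants dimension- and sample-independent.
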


Note that the guarantee above holds under the data distribution $\rho$, which might not be sufficiently exploratory at intermediate iterations of the algorithm. 
However, it shows that our representation is always good for the entire set of discriminators on the available data distribution, and any deficiencies of the representation can only be addressed by improving the data coverage in subsequent iterations. 
As mentioned earlier, the number of iterations $T$ of \textsc{RepLearn} before this guarantee is achieved is bounded by the setting of $T$ in Theorem~\ref{thm:main}.

Indeed the main challenge in the proof of Lemma~\ref{lem:rep_learn_informal} is to establish that \textsc{RepLearn} terminates in a small number of iterations $T$. 
While this need not be true for a general MDP, the low-rank property which implies that $\EE[f(s')|s,a] = w^{*\top}_f \phi^\star(s,a)$ allows us to achieve this. We do so by following a similar elliptical potential argument proposed in \moffle. The main difference here is that we need to incorporate ridge regularization to make it consistent with the ridge regression used in LSVI. Note that ridge linear regression is important in implementation as well since it has closed-form solution which enables direct gradient ascent optimization on discriminators.  While the key ideas follow from \moffle, there are technical differences in our analysis, as well as room to additionally leverage the Block MDP structure, which results in an improved bound. To contrast, \moffle requires $\tilde O(d^{7/3}n^{-1/3})$ samples (c.f. Lemma 5 in \citet{modi2021model}).

\subsection{Proof Sketch}

We now give a high-level proof sketch for Theorem~\ref{thm:main}. Even though our algorithm is model-free, it would be helpful to leverage an equivalent non-parametric model-based interpretation which has also been discussed before in~\citet{parr2008analysis,jin2020provably,lykouris2021corruption}. We describe this model-based perspective for completeness here, before discussing how we utilize it.

\paragraph{A Model-based Perspective:}
Let us focus on an iteration $n$. For notational simplicity, we drop the superscript $n$ here. Denote the learned features as $\hat\phi_h$ for $h\in [H]$. Define the non-parametric model $\hat P_h$ as follows:
\begin{align*}
\hat P_{h}(s' \vert s,a)&= \hat\phi_h(s,a)^{\top} \Sigma_h^{-1} \sum_{\tilde{s},\tilde a,\tilde{s}'\in \Dcal_h \cup \Dcal'_h }  \hat\phi_h(\tilde s,\tilde a)\one(s'=\tilde{s'})
= \frac{N_{\Dcal_h \cup \Dcal'_h}(\hat\phi(s,a),s')}{N_{\Dcal_h \cup \Dcal'_h}(\hat\phi(s,a))+\lambda},
\end{align*}
where $\Sigma_h :=  \sum_{s,a\in \Dcal_h \cup \Dcal'_h}  \hat\phi_h(s,a)\hat\phi_h(s,a)^{\top} + \lambda I$, $\one(s = s')$ is the indicator function, and $N_{\Dcal}(x)$ is the number of occurrences of $x$ in the dataset $\Dcal$. The intuition behind this model is that we learn the model via multi-variate linear regression from representation $\hat\phi_h(s,a)$ to the vector $e_{s'}$. 
In particular, the conditional expectation $\EE_{\hat{P}_h}[f(s')|s,a]$ is $w^\top\hat\phi_h(s,a)$, where $w$ minimizes $\sum_{(s,a,s')\in\Dcal_h \cup\Dcal'_h}(w^\top\hat\phi_h(s,a) - f(s'))^2$, underscoring the connection between this model definition and our LSVI planner. The second equality is specific to the block structure of our features and does not hold in general low-rank MDPs. 
Interestingly, this is perhaps the only ingredient of our analysis which does not generalize beyond Block MDPs. 
Notice that $\hat{P}_h$ is not a normalized conditional distribution, in the sense that $\sum_{s'\in \Scal}\hat{P_h}(s'|s,a)\neq1$ for $\lambda>0$. Nevertheless, it is still non-negative and we can still define the occupancy measures and value functions inductively as follows:
\begin{align*}    
    &d^{\pi}_{\hat P; 0}(s,a) = d_0(s) \pi(a|s);\quad V^\pi_{(\hat P,r);H}(s) = 0;
    &d^{\pi}_{\hat P; h+1}(s,a) = \sum_{\tilde{s},\tilde{a}} d^{\pi}_{\hat P; h}(\tilde{s},\tilde{a}) \hat{P}_h(s | \tilde{s},\tilde{a}) \pi(a| s);\\
    &V^\pi_{(\hat P,r);h}(s) = \EE_{a\sim\pi(s)}\left[r(s,a) + \hat P_h(\cdot\vert s,a)^\top V^\pi_{(\hat P,r);h+1}\right].
\end{align*}
where $P(\cdot | s,a)^\top f$ is in short of $\sum_{s'\in\Scal} P(s' | s,a)  f(s')$.
As we will see, these constructions are sufficient for us to carry out a model-based analysis on our model-free algorithm.

Using our constructed representation $\{ \hat \phi^n_h\}_{h\in [H]}$ and bonus $\{ \hat b^n_h\}_{h\in [H]}$, we can prove the following near-\emph{optimism} claim. Note that the optimism only holds at the initial state distribution, in contrast to the stronger versions that hold in a point-wise manner. Note that from here, our analysis significantly departs from the prior Block MDP works' analysis and the analysis of \moffle which rely on a reachability assumption. This part of our proof leverages some ideas in the analysis of a recent model-based representation learning algorithm \repucb~\citep{uehara2021representation}.

For $\hat\pi:=  \textsc{LSVI}\left(\{ r_h + \hat b_h \}, \{\hat\phi_h\}, \{\Dcal_h + \Dcal'_h\}, \lambda_n\right) $, it is not hard to see that $\hat\pi$ is indeed the optimal policy for the MDP model with transitions $\{ \hat P_h\}$ and rewards $\{ r_h + \hat b_h \}$, i.e., $\hat\pi = \argmax_{\pi'}V^{\pi'}_{\hat P,r+\hat b}$ from our construction of $\hat P_h$.
The following lemma shows that $V^{\pi}_{\hat P, r + \hat b}$ almost upper bounds $V^{\pi^\star}_{P^\star, r}$, i.e., we achieve almost optimism. 
\begin{lemma}[Optimism]\label{lem:opt_main}
Using the settings of \pref{thm:main},
with probability $1-\delta$, we have for all iterations $n\in [N]$,
$
  V^{\hat\pi^n}_{\hat P^n,r+\hat b^n}- V^{\pi^\star}_{P^{\star},r}\geq -\sigma_n,
  $
where $\sigma_n = O(|\Acal|^{1/2}d\left(\log(n|\Phi|/\delta)/n\right)^{1/4})$. 
\end{lemma}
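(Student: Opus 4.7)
The plan is to establish near-optimism via the standard ``bonus-dominates-model-error'' argument, adapted to the learned representation in the non-parametric model-based interpretation introduced above. Since $\hat\pi^n$ is the greedy policy returned by LSVI on the non-parametric model $\hat P^n$ with rewards $r+\hat b^n$, we have $\hat\pi^n \in \arg\max_\pi V^{\pi}_{\hat P^n, r+\hat b^n}$, hence $V^{\hat\pi^n}_{\hat P^n, r+\hat b^n} \geq V^{\pi^\star}_{\hat P^n, r+\hat b^n}$. It therefore suffices to prove $V^{\pi^\star}_{\hat P^n, r+\hat b^n} \geq V^{\pi^\star}_{P^\star, r} - \sigma_n$. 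Applying the value-difference (simulation) lemma with reference policy $\pi^\star$ yields
\[
V^{\pi^\star}_{\hat P^n, r+\hat b^n} - V^{\pi^\star}_{P^\star, r} \;=\; \sum_{h=0}^{H-1}\EE_{(s,a)\sim d^{\pi^\star}_h}\!\left[\hat b^n_h(s,a) + \bigl(\hat P^n_h - P^\star_h\bigr)(\cdot\mid s,a)^\top V^{\pi^\star}_{\hat P^n, r+\hat b^n;\, h+1}\right],
\]
so the task reduces to showing, layer-by-layer, that the (possibly negative) model-error term is absorbed by the bonus up to an additive $O(\sigma_n/H)$ slack.

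Next, I would show that a suitable rescaling of the value function $V^{\pi^\star}_{\hat P^n, r+\hat b^n;\, h+1}$ lies in the discriminator class $\Fcal^{(2)}_h$. By the LSVI-style Bellman recursion against $\hat P^n$, this value has the form $\max_a\bigl\{r_{h+1}(s,a)+\hat\phi^n_{h+1}(s,a)^\top w_{\mathrm{val}} + \hat b^n_{h+1}(s,a)\bigr\}$; for a Block MDP the one-hot structure of $\hat\phi^n_{h+1}$ makes the bonus $\hat b^n_{h+1}(s,a)$ itself linear in $\hat\phi^n_{h+1}(s,a)$, with weight norm bounded by $\alpha_n/\sqrt{\lambda_n}=c$. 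The factor $(2H+1)^{-1}$ in $\Fcal^{(2)}_h$ normalizes the range of $r+V+\mathrm{bonus}$ and the $\wedge 2$ clipping matches the bonus truncation in \pref{alg:main}. Granted this inclusion, \pref{lem:rep_learn_informal} applied to the RepLearn data distribution $\rho_h$ (induced by $\Dcal_h \cup \Dcal'_h$) gives, for the ridge solution $\hat w_V$,
\[
\EE_{(s,a)\sim \rho_h}\bigl(\hat w_V^\top\hat\phi^n_h(s,a) - \EE_{s'\sim P^\star_h(s,a)}V(s')\bigr)^2 \;\leq\; O(H^2)\,\zeta_n,
\]
and by construction of the non-parametric model, $\hat w_V^\top\hat\phi^n_h(s,a) = \EE_{\hat P^n_h}[V\mid s,a]$, so the left-hand side is exactly the per-point squared model error under $\rho_h$.

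The remaining and most delicate step is transferring this squared-loss bound from $\rho_h$ to the expected absolute error under $d^{\pi^\star}_h$, using the exploration bonus and a standard elliptical argument. Using Cauchy--Schwarz in the $\Sigma_h$-norm on the ridge residual, together with a matrix-concentration step to pass from empirical to population loss on $\rho_h$, one obtains
\[
\bigl|(\hat P^n_h - P^\star_h) V(s,a)\bigr| \;\leq\; \|\hat\phi^n_h(s,a)\|_{\Sigma_h^{-1}} \cdot \bigl(O(\sqrt{nH^2\zeta_n}) + O(H\sqrt{\lambda_n})\bigr),
\]
which, after another Cauchy--Schwarz against $d^{\pi^\star}_h$, is dominated by $\EE_{d^{\pi^\star}_h}\hat b^n_h$ plus a residual of order $\sigma_n/H$, under the choice $\alpha_n = \tilde\Theta((nd^5)^{1/4}|\Acal|\log(|\Phi|n/\delta))$. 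Summing over $h$ gives the claim. The main obstacle is precisely this distribution transfer: RepLearn controls the squared loss only on $\rho_h$, while the simulation lemma forces us to evaluate errors under $d^{\pi^\star}_h$, and there is a circularity in that the value function whose error we bound itself depends on $\hat\phi^n$ and $\hat b^n$. The discriminator class $\Fcal^{(2)}_h$ is engineered to uniformly cover all such value functions (over $\phi\in\Phi_{h+1}$ and bounded weights), which lets us invoke the RepLearn guarantee in spite of this circularity, while the elliptical bonus calibrated by $\alpha_n$ is exactly what converts the $\rho_h$-bound into a $d^{\pi^\star}_h$-bound without requiring any reachability assumption.
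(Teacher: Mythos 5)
Your high-level skeleton (reduce to $V^{\pi^\star}_{\hat P^n,r+\hat b^n}\geq V^{\pi^\star}_{P^\star,r}-\sigma_n$ via greediness of $\hat\pi^n$, then simulation lemma, then show the bonus absorbs the model error using the \textsc{RepLearn} guarantee) matches the paper, but the specific route you take has two genuine gaps. First, you apply the simulation lemma in the form whose model-error term involves $V^{\pi^\star}_{\hat P^n, r+\hat b^n;\,h+1}$ and then claim this function lies in $\Fcal^{(2)}_h$ because it ``has the form $\max_a\{\cdots\}$.'' It does not: only the LSVI-greedy value $V^{\hat\pi^n}_{\hat P^n, r+\hat b^n}$ is a max over actions of a linear-plus-bonus expression; $V^{\pi^\star}_{\hat P^n, r+\hat b^n}$ evaluates the \emph{fixed, unknown} policy $\pi^\star$ (greedy with respect to the true model, not $\hat P^n$) inside the learned model, so it is an expectation over $\pi^\star$'s action choices and is not captured by $\Fcal^{(2)}_h$. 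The paper sidesteps this by using the other form of the simulation lemma: the occupancy is taken under $d^{\pi^\star}_{\hat P,h}$ and the model-error term acts on $V^{\pi^\star}_{P^\star,r,h+1}$ — the true value function, bounded in $[0,1]$, whose conditional expectations under $\hat P_h$ and $P^\star_h$ are linear in $\hat\phi_h$ and $\phi^\star_h$ respectively, so the relevant error function sits in $\Fcal^{(1)}_h$ (this is exactly what $\Fcal^{(1)}_h$ was built for). This choice also avoids the extra $(2H+1)$ normalization, which is why the stated $\sigma_n$ carries no $H$ factor; your route would inflate it.

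Second, the transfer from the data distribution to $d^{\pi^\star}$ cannot be done by the pointwise bound $|(\hat P^n_h-P^\star_h)V(s,a)|\leq \|\hat\phi^n_h(s,a)\|_{\Sigma_h^{-1}}\cdot(\cdots)$ that you invoke. That inequality is the standard self-normalized LSVI-UCB argument, and it requires $\EE_{s'\sim P^\star_h(s,a)}[V(s')]$ to be (nearly) linear in the \emph{learned} features $\hat\phi^n_h$ pointwise. Here the learned decoder may misclassify individual observations, so the misspecification error is controlled only in expectation over $\rho_h$ (and $\beta_h$) by Lemma~\ref{lem:rep_learn_informal}; at a misclassified $(s,a)$ the model error can be $\Omega(1)$ while $\|\hat\phi^n_h(s,a)\|_{\Sigma_h^{-1}}$ is tiny. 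The paper's mechanism is instead the one-step-back inequality in the learned model (Lemma~\ref{lem:tsbt1}): it uses the exact linearity of $\hat P_{h-1}$ in $\hat\phi_{h-1}$ to convert $\EE_{d^{\pi^\star}_{\hat P,h}}[|f_h|]$ into $\EE_{d^{\pi^\star}_{\hat P,h-1}}\|\hat\phi_{h-1}\|_{\Sigma^{-1}_{\rho_{h-1},\hat\phi_{h-1}}}$ times $\sqrt{n|\Acal|^2\EE_{\beta_h}[f_h^2]+B^2\lambda_n d+n|\Acal|^2\zeta_n}$, where the expected squared error under $\beta_h$ (the distribution of the second buffer $\Dcal'_h$, which your proposal never uses) is what \textsc{RepLearn} controls, and the resulting elliptical term at step $h-1$ is exactly the bonus $\hat b_{h-1}$ up to the concentration step of Lemma~\ref{lem:con} and the choice of $\alpha_n$. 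Without this step the per-iteration cancellation between bonus and model error — which is what makes optimism hold for \emph{every} $n$ rather than only on average over $n$ — does not go through.
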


Optimism allows us to upper bound policy regret as follows. For $\hat\pi^n$, conditioned on the optimism event and via the standard simulation lemma (which also works on the unnormalized transitions $\hat P^n$), we  have:
\begin{align}\label{eq:regret_simulation}
 V^{\pi^\star}_{P^\star, r}  & - V^{\hat\pi^n}_{P^\star, r}
\leq V^{\hat\pi^n}_{\hat P^n, r + \hat b^n} - V^{\hat\pi^n}_{P^\star, r} + \sigma_n =  \sum_{h=0}^{H-1} \EE_{d^{\hat\pi^n}_h} [ \hat b^n_h(s,a) +  \underbrace{ (\hat{P}^n_h-P^\star_h)(s,a)^{\top} \hat V_{h+1}^{\hat\pi^n}}_{\xi^n_h(s,a)}]+ \sigma_n
\end{align} 
where we denote $\hat V^{\hat\pi^n}_{h+1}(\cdot )$ as the expected total reward  function of $\hat\pi^n$ starting from time step $h+1$, under model $\hat P^n$ and reward $r+\hat b^n$. 

The rest of the proof is to directly control the expectation of $\hat b^n_h+\xi^n_h$ under $d^{\hat\pi^n}_h$, which is done in \pref{lem:tsbt2}. \pref{lem:tsbt2} uses a key property of the low-rank MDP which is that for any $f$, $\EE_{s,a\sim d^{\pi}_h} f(s,a)$ can be written in a bilinear form:  $$\left\langle \EE_{\tilde s,\tilde a\sim d^{\pi}_{h-1}} \phi^\star_{h-1}(\tilde s,\tilde a), \int_{s} \mu^\star_{h-1}(s)\EE_{a\sim \pi(s)} f(s,a) ds  \right\rangle.$$ This ``one step back" trick (i.e., moving from $h$ to $h-1$) leverages the bilinear structure in $P^\star_{h-1}$, and was first used in \citet{agarwal2020flambe}.  Denote $\gamma_h^n(s,a) = \sum_{i=0}^{n-1} d^{\pi^i}_h(s,a) / n$ as the mixture state-action distribution, and $\Sigma_{\gamma^n_h, \phi^\star} = n\EE_{s,a\sim \gamma^n_h} \phi_h^\star(s,a)(\phi_h^\star(s,a))^\top + \lambda I$ as the regularized covariance matrix under the ground truth representation $\phi^\star$. We can further upper bound the above bilinear form via Cauchy-Schwart under norm induced by $\Sigma_{\gamma^n_h, \phi^\star}$: 
\begin{align*}
& \left\langle \EE_{\tilde s,\tilde a\sim d^{\pi}_{h-1}} \phi^\star_{h-1}(\tilde s,\tilde a), \int_{s} \mu^\star_{h-1}(s)\EE_{a\sim \pi(s)} f(s,a) ds  \right\rangle \\
\leq&  \EE_{\tilde s,\tilde a\sim d^{\pi}_{h-1}} \left\| \phi^\star_{h-1}(\tilde s,\tilde a) \right\|_{\Sigma^{-1}_{\gamma^n_{h-1},\phi^\star}} 
\times \left\| \int_{s} \mu^\star_{h-1}(s)\EE_{a\sim \pi(s)} f(s,a) ds\right\|_{\Sigma^n_{\gamma^n_{h-1}, \phi^\star}}.
\end{align*} The first term on the RHS of the above inequality is related to the classic elliptical potential function defined with the true $\phi^\star$. The second term on the RHS can be further converted to the expectation of $f^2$ under the training data distribution (i.e., the distribution over $\Dcal_h$), which is controllable when $f := \hat b^n_h + \xi^n_h$. This step of distribution change from $d^\pi$ to the training data distribution is the key to directly upper bounding regret. 
The detailed proof for \pref{thm:main} can be found in Appendix \ref{sec:app_proof_main}.

\section{Experiments}
\label{sec:exp}

We test \algname on MDPs motivated by the rich observation combination lock benchmark created by \citep{misra2019kinematic}, which contains latent states but the observed states are high-dimensional continuous observations. We ask:

\begin{enumerate}[leftmargin=*,itemsep=0pt]
\item Is \algname~more sample efficient than \homer on their own benchmark? 
\item Can \algname solve an MDP where the block structure does not hold, e.g., low-rank MDP? 
\item If the MDP happens to have dense rewards that make it easy for policy optimization, can \algname leverage that? 
\end{enumerate}
In short, our experiments provide affirmative answers to all three questions above. Note that prior algorithms such as \homer and PCID cannot be applied to low-rank MDPs, and are not able to leverage dense reward structures, since they require a reward-free exploration phase. In what follows we describe the experiment setup.

\paragraph{Reproducibility} Our code can be find at \url{https://github.com/yudasong/briee}. We include experiment details in Appendix~\ref{sec:app:exp}.

\paragraph{The Environment} We evaluate our algorithm on the \textit{diabolical combination lock} (comblock) problem (Fig. \ref{exp:fig:comblock_pic}), which has horizon $H$ and 10 actions. At each step $h$, there are three latent states $z_{i;h}$ for $i \in \{0,1,2\}$. We call $z_{i;h}$ for $i\in \{0,1\}$ good states and $z_{2;h}$ bad states. 
For each $z_{i;h}$ with $i\in\{0,1\}$, we randomly pick an action $a^*_{i;h}$ from the 10 actions. 
While at $z_{i;h}$ for $i\in\{0,1\}$, taking action $a^*_{i;h}$ transits the agent to $z_{0;h+1}$ and $z_{1;h+1}$ with equal probability. 
Taking other actions transits the agent to $z_{2;h+1}$ deterministically. At $z_{2;h}$, regardless what action the agent takes, it will transit to $z_{2;h+1}$. For reward function, we give reward $1$ at state $z_{i;H}$ for $i\in \{0,1\}$, i.e., good states at $H$ have reward 1. With probability 0.5, the agent will also receive an anti-shaped reward $0.1$ from transiting from a good state to a bad state.  We have reward zero for any other states and transitions.
The observation $s$ has dimension $2^{\ceil{\log(H+4)}}$, created by concatenating the one-hot vectors of latent state $z$ and the one-hot vectors of horizon $h$, followed by adding noise sampled from $\mathcal{N}(0,0.1)$ on each dimension, appending 0 if necessary, and multiplying with a Hadamard matrix. The initial state distribution is uniform over $z_{i;0}, i\in \{0,1\}$. Note that the optimal policy picks the special action $a^*_{i;h}$ at every $h$. Once the agent hits a bad state, it will stay in bad states for the entire episode, missing the large reward signal at the end. This is an extremely challenging exploration problem, since a  uniform random policy will only have $10^{-H}$ probability of hitting the goals. 

\paragraph{\algname Implementation} Here we provide the details for implementing \pref{alg:rep_learn}. For features $\phi_h$ we have: $\phi_h(s,a) = \text{softmax}(A_h s / \tau) \otimes \one_{a}$, where $A_h \in \mathbb{R}^{3 \times \text{dim}_s}$, $\tau$ is the temperature, and $\one_{a} \in \{0,1\}^{|\Acal|}$ is the one-hot indicator vector. This design of decoder follows from \homer, for the purpose of a fair comparison. For discriminator we use two-layer neural network with tanh activation. In each step of \discstep, we perform gradient ascent on $\tilde{\phi}$ and $f$ jointly. Similarly for \featurestep, we perform gradient descent on $A_h$.  

\paragraph{Baselines} In the following experiments, in addition to \homer, we compare with empirical deep RL baselines Proximal Policy
Optimization (PPO) \citep{schulman2017proximal} and Random Network Distillation (PPO+RND)  \citep{burda2018exploration}. We also include LSVI-UCB  \citep{jin2020provably} with ground-truth features (i.e., it is an aspirational baseline with access to the latent state information), and LSVI-UCB with Random Fourier Features, i.e., RFF directly on top of $(s,a)$ \citep{rahimi2007random} (LSVI-UCB+RFF) as baselines.

\begin{figure*}[ht!]
     \centering
     \begin{subfigure}[b]{0.33\textwidth}
         \centering
         \includegraphics[width=\textwidth]{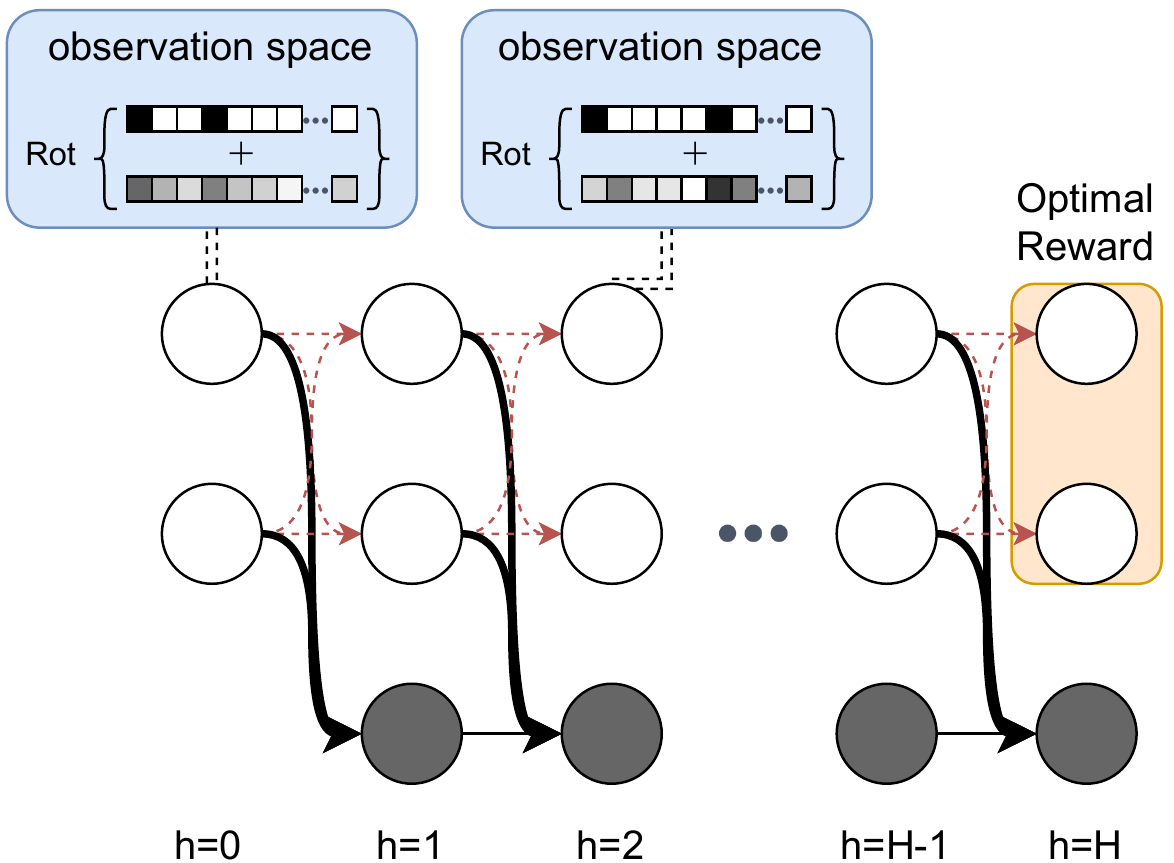}
         \caption{Rich-obs Comblock w/ sparse reward.}
         \label{exp:fig:comblock_pic}
     \end{subfigure}
     \hfill
     \begin{subfigure}[b]{0.33\textwidth}
         \centering
         \includegraphics[width=\textwidth]{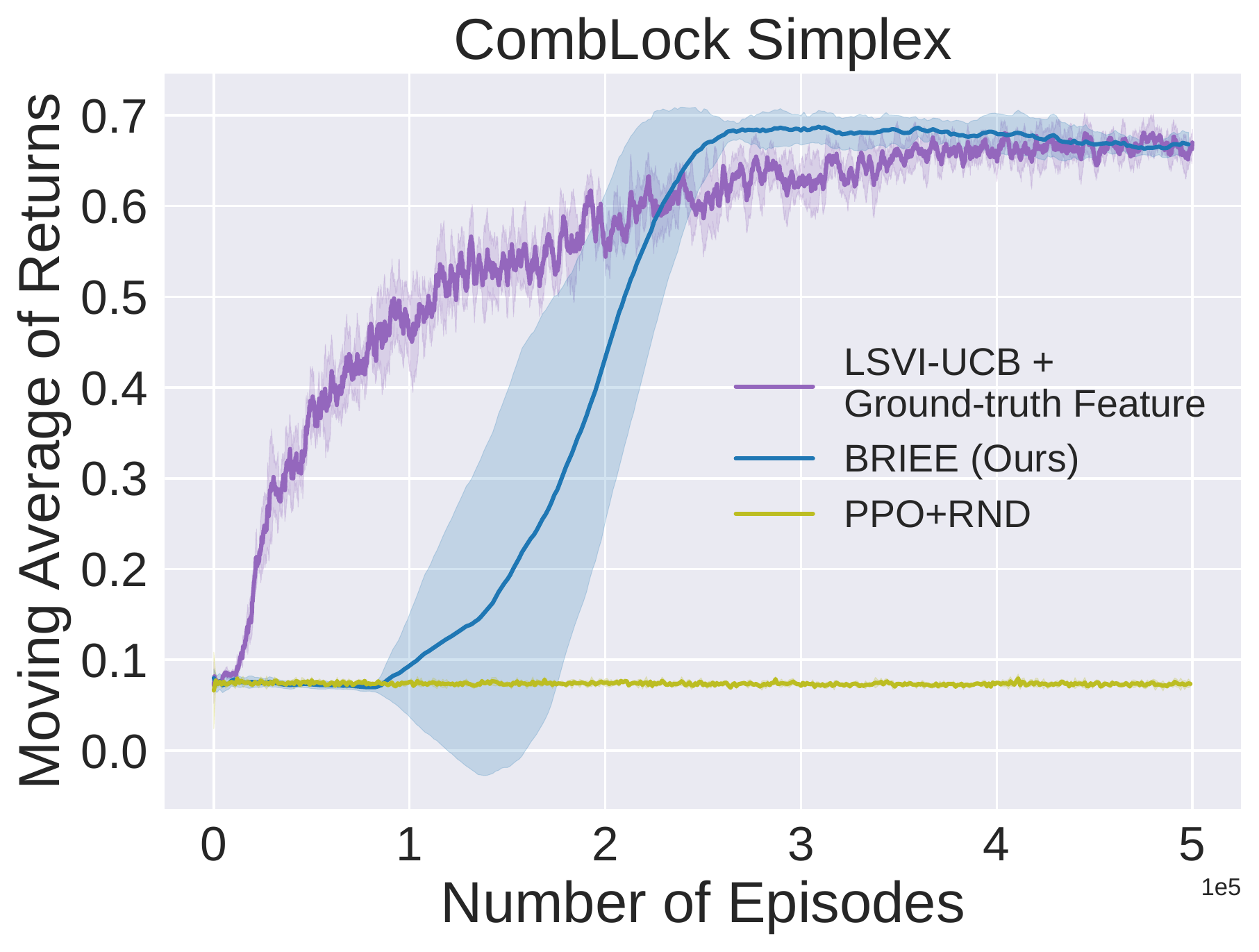}
         \caption{Comblock with simplex features.}
         \label{exp:fig:simplex}
     \end{subfigure}
     \hfill
     \begin{subfigure}[b]{0.32\textwidth}
         \centering
         \includegraphics[width=\textwidth]{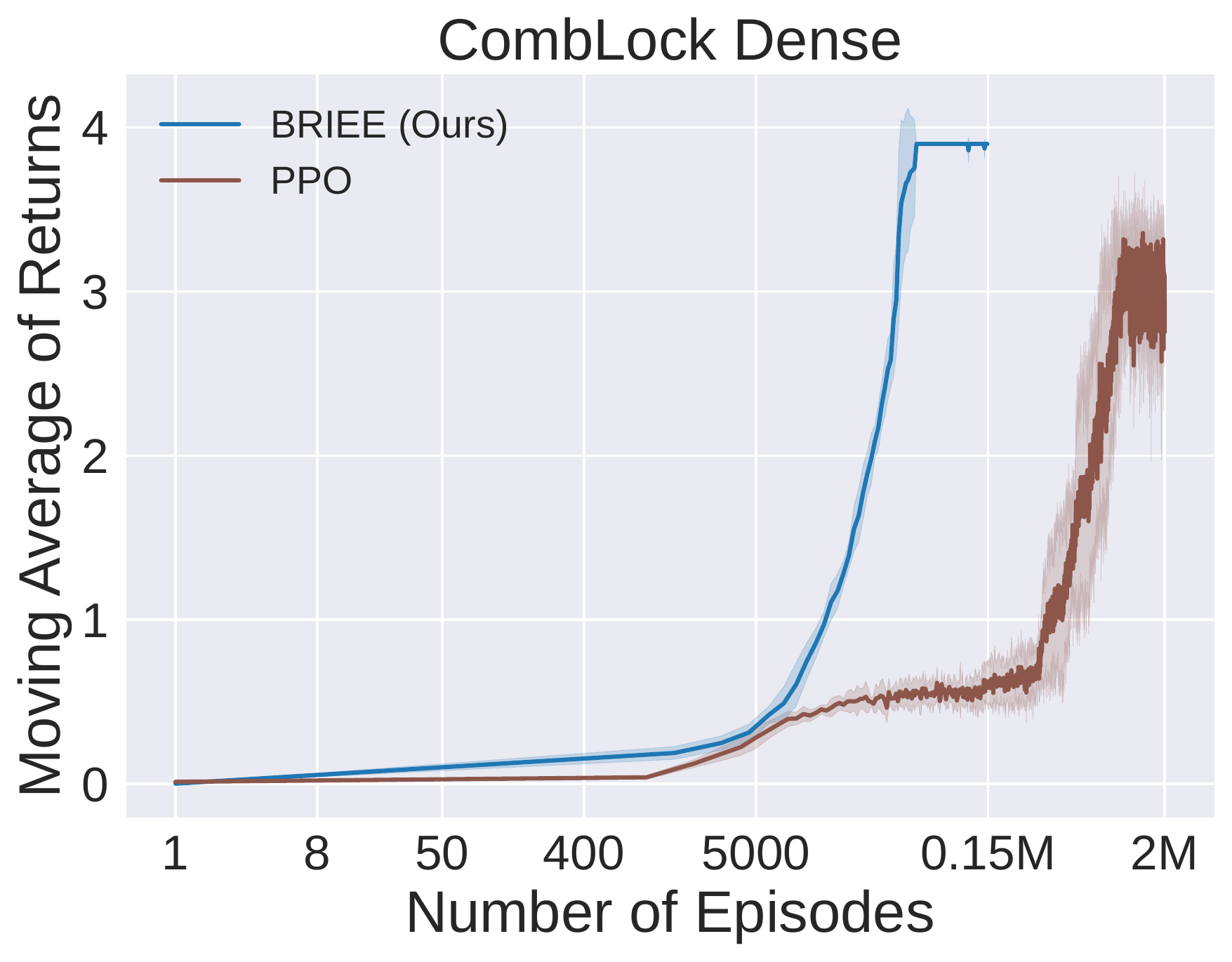}
         \caption{Comblock dense reward.}
         \label{exp:fig:dense}
     \end{subfigure}
        \caption{\textbf{(a):} Visualization of comblock. The blue area denotes the (infinitely many) possible observations from one latent state. The top vector denotes the one-hot vector and the bottom vector denotes the noise.  $\text{Rot}\{\cdot\}$ denotes multiplying with the Hadamard matrix. The dark arrows denote transiting from good states (white) to bad states (gray). Once the agent transits to a bad state, it stays in bad states for the entire episode thus missing the goal. Note that in order to reach the goal (orange), one has to stay in good states which can only be achieved by picking the right action (red) at every time step (thus the name \emph{combination lock}). 
        \textbf{(b):} Moving average of the evaluation returns for comblock with simplex feature and $H=30$. The converged return of LSVI-UCB with ground truth feature suggests the optimal value. \textbf{(c):} Moving average of evaluation returns for comblock with dense reward and $H=30$. The x-axis is scaled in natural log.}
\label{exp:fig}
\end{figure*}

\paragraph{Comparison with \homer} In this experiment we focus on the comblock environment. We test \algname and baselines for different horizon values. We record the number of episodes that each algorithm needs to identify the optimal policy (i.e., a policy that achieves the optimal total reward 1).  
The results are shown in Fig.~\ref{tbl:results}a. We compare with \homer, PPO+RND and LSVI-UCB+RFF. We reuse the results of \homer and PPO-RND from \citep{misra2019kinematic}. For \algname, we run 5 random seeds and plot the confidence interval within 1 std. Note that LSVI-UCB + RFF can only solve $H$ up to $12$, indicating that  simply lifting linear MDPs to RBF kernel space is not enough to capture the underlying nonlinearity in our problem. PPO+RND can solve up to $H = 25$. Both \homer and \algname solve up to $H = 100$ with \algname being more sample efficient.

\paragraph{Visualization of decoders} We visualize the outputs of decoders in Fig.~\ref{app:fig:decoder} in Appendix~\ref{app:visualization_latent}. In short, our decoders successfully decode latent states (up to a permutation).

\paragraph{Comblock with Simplex Feature}  Here we extend the above environment beyond Block MDP. Instead of decoding $s$ to a unique latent state, we modify the ground truth decoder to make it stochastic. Given $s,a$, the ground truth decoder maps $s$ to a distribution over latent state space, then a latent state is sampled from this distribution, and then together with $a$, it transits to the next latent state, followed by generating the next $s'$ from the emission distribution. This is not a block MDP anymore, and indeed it is a low-rank MDP (i.e., $\phi^\star_h(s,a)$ is not one hot, and it is from $\Delta(\Zcal\times\Acal)$). Note that \homer provably fails in this example. 
We show the results for $H=30$  in Fig.~\ref{exp:fig:simplex}. While PPO+RND completely fails, \algname matches the return from  LSVI-UCB with the features $\phi^{\star}$ (that are unknown to \algname and PPO+RND). 

\paragraph{Dense Reward Comblock} We also test if \algname can leverage dense reward signals to further speed up learning.  We modify the reward as follows. Instead of getting an anti-shaped reward from transition to a bad state, we get a positive reward every time one transits to a good state. Thus a greedy algorithm that collects one step immediate reward should be able to reach the final goals. In Fig.~\ref{exp:fig:dense} we show the mean evaluation returns in the $H=30$ dense-reward comblock environment. Compared with PPO -- a greedy policy gradient algorithm, \algname can consistently reach the optimal value (3.9) and uses orders of magnitude fewer samples. Note that compared to the results in Fig.~\ref{tbl:results}b where reward is sparse and anti-shaped, \algname is able to solve the problem two times faster, indicating that it indeed can leverage the dense reward structure for further speedup. Note that \homer cannot leverage such informative reward signals and will have to perform reward-free exploration regardless, which is a huge waste of samples.

\section{Conclusion}

In this paper, we present a new algorithm \algname that provably solves block MDPs. Unlike prior block MDP algorithms, \algname does not require any reachability assumption and can directly optimize the given reward function. Unlike \textsc{Flambe} and \textsc{Rep-UCB}, \algname is model-free which means that it is more suitable for tasks where states are high dimensional objects. Experimentally, on the benchmarks motivated by \homer, we show our approach is more sample efficient than \homer and other empirical RL baselines. We also demonstrate the efficacy of our approach on a low-rank MDP where the block structure does not hold.

\section*{Acknowledgement}
Xuezhou Zhang and Mengdi Wang acknowledge support by NSF grants IIS-2107304, CMMI-1653435, AFOSR grant and ONR grant 1006977. Masatoshi Uehara is partly supported by MASASON Foundation.

\bibliography{ref}


\newpage
\appendix
\onecolumn

\section{Sample Complexity Analysis}\label{sec:app_proof_main}

Recall that we define the non-parametric model $\hat  P^{n}_h$ as follows:
\begin{align*}
\hat P^n_{h}(s' \vert s,a)=& \hat \phi^n_h(s,a)^{\top} \Sigma_h^{-1} \sum_{\tilde{s},\tilde a,\tilde{s}'\in \Dcal_h \cup \Dcal'_h }  \hat \phi^n_h(\tilde s,\tilde a)\one(s'=\tilde{s'}), \quad \Sigma_h := \Big( \sum_{s,a\in \Dcal'_h + \Dcal'_h}  \hat \phi^n_h(s,a)\hat \phi^n_h(s,a)^{\top} + \lambda_n I \Big)^{-1}.
\end{align*}
and we define 
$$\hat \mu^n_h = \Sigma_h^{-1} \sum_{\tilde{s},\tilde a,\tilde{s}'\in \Dcal_h \cup \Dcal'_h }  \hat\phi^n_h(\tilde s,\tilde a)\one(s'=\tilde{s'}) .$$
Throughout the appendix, we also abuse the notation $\EE_{(s,a)\sim p(s,a)}[f(s,a)] = \int  f(s,a)p(s,a)d(s,a)$ for any non-negative function $p(s,a)\geq 0$.

Recall that in the case of Block MDP where $\hat\phi^n_h$ are one-hot vectors, and thus $\hat P^n_{h}(\cdot \vert s,a)$ can be further simplified as
\begin{align*}
    \hat P^n_{h}(s' \vert s,a) = \frac{N_{\Dcal_h + \Dcal'_h}(\hat\phi^n_h(s,a),s')}{N_{\Dcal_h + \Dcal'_h}(\hat\phi^n_h(s,a))+\lambda_n}
\end{align*}
where $N_{\Dcal}(\hat\phi^n_h(s,a),s')$ denotes the number of triples $(\tilde s, \tilde a, \tilde s')\in \Dcal$ such that $\hat\phi^n_h(\tilde s, \tilde a) = \hat\phi^n_h(s,a)$ and $\tilde s'=s'$, and $N_{\Dcal}(\hat\phi^n_h(s,a)) = \sum_{s'\in\Scal} N_{\Dcal}(\hat\phi^n_h(s,a),s')$. Therefore, we can clearly see that 
$\hat P^n_{h}(s' \vert s,a)>0$ and $\sum_{s'\in\Scal} \hat P^n_{h}(s' \vert s,a)<1$ for $\lambda_n>0$.

For $\hat\pi^{n} :=  \textsc{LSVI}\left(\{ r_h + \hat b^n_h\}_{h=0}^{H-1}, \{\hat\phi^n_h\}_{h=0}^{H-1}, \{\Dcal_h + \Dcal'_h\}_{h=0}^{H-1},\lambda_n\right) $, $\hat\pi^n$ is indeed the optimal policy for the MDP model with transitions $\{ \hat P^n_h\}$ and rewards $\{ r_h + \hat b^n_h \}$ (i.e., the output of Value Iteration in $\{\hat P^n_h, r_h+\hat b^n_h\}_{h=0}^{H-1}$). In this section, we will take the model-based perspective and analyze based on this fitted model $\hat P^n$.

We define a few mixture distributions that will be used extensively in the analysis. For any $n,h$, define $\rho^n_h \in \Delta(\Scal\times\Acal)$ as follows:
\begin{align*}
\rho^n_h(s,a) = \frac{1}{n}\sum_{i=0}^{n-1} d^{\hat\pi_i}_h(s) U(a), \forall s,a\in\Scal\times\Acal.
\end{align*} For any $n, h \geq 1$ define $\beta^n_h$ as follows:
\begin{align*}
\beta^n_h(s,a) = \frac{1}{n}\sum_{i=0}^{n-1}\EE_{\tilde s \sim d^{\hat\pi_i}_{h-1}, \tilde a\sim U(\Acal)}  P^\star(s | \tilde s, \tilde a) U(a).
\end{align*}
For any $n, h$, we also define $\gamma^n_h \in \Delta(\Scal\times\Acal)$ as follows:
\begin{align*}
\gamma^n_h(s,a) = \frac{1}{n} \sum_{i=0}^{n-1} d^{\hat\pi_i}_h(s,a).
\end{align*}
For an iteration $n$, a distribution $\rho$ and a feature $\phi$, we denote the expected feature covariance as
\begin{align*}
    \Sigma^n_{\rho,\phi} = n\EE_{(s,a)\sim\rho}[\phi(s,a)\phi(s,a)^\top] + \lambda_n I
\end{align*}
which in the case of Block MDP is a diagonal matrix.
Notice that the dataset $\Dcal_h$ of size $n$ is sampled from $\rho^n_h$ and the dataset $\Dcal'_h$ of size $n$ is sampled from $\beta^n_h$.
Below we focus on a particular iteration $n$, and drop the $n$ superscript. 

For the remainder of this section, we assume that we have learned a representation $\hat\phi_h$ such that the following generalization bound holds:
\begin{align}\label{eq:rep-learn}
\max_{f\in\Fcal_h}\EE_{(s,a)\sim \rho_h}[(\hat P_h(\cdot \mid s,a)^\top f-P^{{\star}}_h(\cdot \mid s,a)^\top f)^2]&\leq \zeta_n, \forall h\in [H].\\
\max_{f\in\Fcal_h}\EE_{(s,a)\sim \beta_h}[(\hat P_h(\cdot \mid s,a)^\top f-P^{{\star}}_h(\cdot \mid s,a)^\top f)^2]&\leq \zeta_n, \forall h\in [H]\backslash\{0\}\nonumber.
\end{align}
where again the discriminator class is set to $\Fcal = \Fcal^{(1)}_h \bigcup \Fcal^{(2)}_h$.
\begin{align} \label{eq:Fcal}
    \Fcal^{(1)}_h &= \left\{f(s): =  \EE_{a\sim U(\Acal)}\left[\phi(s,a)^\top \theta-\phi'(s,a)^\top \theta'\right]  \;\Big\vert \; \phi,\phi'\in\Phi_{h+1}\; \max(\|\theta\|_\infty,\|\theta'\|_\infty)\leq 1\right\},\\
    \Fcal_h^{(2)} &= \left\{ f(s): = \max_a \left( \frac{r_{h+1}(s,a) + \min\left\{ w^\top \phi(s,a)  , 2\right\}}{2H+1} + {w'}^{\top} \phi(s,a)     \right) \Big \vert  \phi\in \Phi_{h+1};  \;\|w\|_\infty\leq c,\|w'\|_\infty\leq 1 \right\}\nonumber
\end{align}
In the analysis below, we actually need $\Fcal_h$ to capture the follow two forms of function:
\begin{align}
    f^{(1)}(s) = &\EE_{a\sim U(\Acal)}\left[\EE_{\hat P_h(s'_{h}\mid s_h,a_h)}[V^{\pi^{\star}}_{P^{\star},r,h+1}(s_h')]-\EE_{P^{\star}_h(s'_{h}\mid s_h,a_h)}[V^{\pi^{\star}}_{P^{\star},r,h+1}(s_h')]\right]\label{eq:F1}\\
    f^{(2)}(s) = &\frac{1}{2H+1}V^{\hat\pi^n}_{\hat P,r+\hat b,h+1}(s) = \frac{1}{2H+1}\left(\max_a r_{h+1}(s,a)+\hat b_{h+1}(s,a)+\hat\phi^{\top}_{h+1}(s,a)\int \hat \mu_{h+1}(s')V^{\hat\pi^n}_{\hat P,r+\hat b,h+2}(s') \text{d}s'\right)\label{eq:F2}
\end{align}
$f^{(1)}(s)$ is naturally captured by $\Fcal^{(1)}(s)$ since $V^{\pi^{\star}}_{P^{\star},r,h+1}(s_h')$ is bounded in $[0,1]$, and by \pref{lem:bound_w}, the expectation of any bounded function under $\hat P_h$ (resp. $P^*_h$) is a linear function of $\hat\phi_h$ (resp. $\phi^*_h$). For $f^{(2)}(s)$, the last term is captured by ${w'}^{\top} \phi(s,a)$ in $\Fcal^{(2)}$, because $V^{\pi^n}_{\hat P,r+b,h+2}(s)$ is bounded in $[0,2H]$. For the bonus term, recall that due to the Block MDP structure, the bonus takes the form of 
\begin{equation*}
 \hat b^n_h(s,a) = \alpha_n\sqrt{\hat\phi^n_h(s,a)^\top\Sigma_h^{-1}\hat\phi^n_h(s,a)} = \hat\phi^n_h(s,a)\cdot \sqrt{\frac{\alpha^2_n}{N(\hat\phi^n_h(s,a))+\lambda_n}}
\end{equation*}
which is in fact linear in $\hat\phi^n_h(s,a)$ and can be captured by the 2nd term in $\Fcal^{(2)}_h$, with $c=\frac{\alpha_N}{\sqrt{\lambda_N}}$. Note that $\frac{\alpha_N}{\sqrt{\lambda_N}}\geq \frac{\alpha_n}{\sqrt{\lambda_n}}$ for all $n\in[N]$.

To begin with, we establish two forms of one-step-back tricks that are central to our analysis. They are of close resemblance to the one-step-back lemmas in \repucb~\citep{uehara2021representation}.

\begin{lemma}[One-step-back inequality in the learned model]\label{lem:tsbt1}
Consider a set of functions $\{g_h\}_{h=0}^{H}$ that satisfies $g_h\in \Scal\times \Acal \to \RR$, s.t. $\|g_h\|_{\infty}\leq B$ and $\EE_{a\sim U(\Acal)}g_{h+1}(s,a)\in\Fcal_h$ for all $h\in[H]$. We condition on the event where the \textsc{RepLearn} guarantee \eqref{eq:rep-learn} holds,
where $\Fcal_h$ are defined as in Eq.~\pref{eq:Fcal}.
Then, we have for any policy $\pi$,
\begin{align*}
\sum_{h=0}^{H-1}\EE_{(s,a)\sim d^{\pi}_{\hat P,h }}&[g_h(s,a)]  \leq 
\sum_{h=0}^{H-2}\EE_{(\tilde s,\tilde a)\sim d^{\pi}_{\hat P ,h}} \min\Bigl\{\|\hat \phi_h(\tilde s,\tilde a)\|_{\Sigma_{\rho_h,\hat \phi_h}^{-1}} \cdot
\\
&\sqrt{n|\Acal|^2\EE_{(s,a)\sim \beta_{h+1}}\bracks{g_{h+1}^2(s,a)}+ B^2\lambda_n d+ n |\Acal|^2\zeta_n},B\Bigl\}
+\sqrt{|\Acal|\EE_{(s,a)\sim \rho_0}[g_0^2(s,a)]}. 
\end{align*}
\end{lemma}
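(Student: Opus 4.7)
The plan is to split the sum at $h = 0$ and apply a "one-step-back" unrolling for $h \geq 1$, in each step using the low-rank factorization $\hat P_{h-1}(s'|s,a) = \hat\phi_{h-1}(s,a)^\top \hat\mu_{h-1}(s')$ available because $\hat P$ is built from $\hat\phi$ via ridge regression. Concretely, for the base case $h=0$, since $\rho_0(s,a) = d_0(s) U(a)$, an importance-sampling step that rewrites $\pi(a|s) = |\Acal|\,U(a)\,\pi(a|s)$, followed by $\pi(a|s) \leq 1$ and Cauchy--Schwarz, produces the boundary term $\sqrt{|\Acal|\,\EE_{\rho_0}[g_0^2]}$.

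For $h \geq 1$, I would write
\begin{align*}
\EE_{(s,a)\sim d^\pi_{\hat P,h}}[g_h(s,a)]
&= \EE_{(\tilde s,\tilde a)\sim d^\pi_{\hat P,h-1}}\bigl[\hat\phi_{h-1}(\tilde s,\tilde a)^\top v_h\bigr], \quad \text{where } v_h := \int \hat\mu_{h-1}(s)\,\EE_{a\sim\pi(s)}[g_h(s,a)]\,ds,
\end{align*}
and then apply Cauchy--Schwarz in the norm induced by $\Sigma_{\rho_{h-1},\hat\phi_{h-1}}$ to obtain the $\|\hat\phi_{h-1}(\tilde s,\tilde a)\|_{\Sigma^{-1}_{\rho_{h-1},\hat\phi_{h-1}}}$ factor. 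The remaining work is to bound $\|v_h\|^2_{\Sigma_{\rho_{h-1},\hat\phi_{h-1}}}$ by expanding it into the data-dependent piece $n\EE_{\rho_{h-1}}[(\hat\phi_{h-1}(s,a)^\top v_h)^2]$ and the ridge piece $\lambda_n\|v_h\|^2$. For the data-dependent piece, I rewrite $\hat\phi_{h-1}^\top v_h = \EE_{s'\sim \hat P_{h-1}}[\EE_{a\sim \pi(s')}g_h(s',a)]$, then switch the inner policy to uniform (picking up a $|\Acal|$ factor), then invoke the hypothesis $\bar g_h := \EE_{a\sim U(\Acal)}g_h(\cdot,a)\in\Fcal_{h-1}$ together with the RepLearn guarantee~\eqref{eq:rep-learn} and $(a+b)^2 \le 2a^2+2b^2$ to switch $\hat P$ to $P^\star$ at cost $\zeta_n$. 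A final application of Jensen, $(\EE_{P^\star_{h-1}}\bar g_h)^2 \le \EE_{P^\star_{h-1},U(a)}[g_h^2]$, converts the bound into $\EE_{\beta_h}[g_h^2]$, exactly matching the definition of $\beta_h$.

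For the ridge piece, I would exploit the block structure: since $\hat\mu_{h-1}(\cdot)[i]$ is the row of $\Sigma_{h-1}^{-1}\sum \hat\phi_{h-1}(\tilde s,\tilde a)\mathbf{1}(s'=\tilde s')$ and in a Block MDP $\int \hat\mu_{h-1}(s')[i]\,ds' \leq 1$ for every coordinate $i\in[d]$ (this is the $\sum_{s'}\hat P(s'|s,a)\le 1$ observation specialized to one-hot $\hat\phi$), the crude bound $|\int \hat\mu_{h-1}(s')[i]\,\EE_{a\sim\pi(s')}g_h(s',a)\,ds'| \le B$ yields $\|v_h\|^2 \le dB^2$, so the ridge term contributes $\lambda_n dB^2$. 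Combining the two pieces under the square root and reindexing from $h$ to $h-1$ produces the $\sum_{h=0}^{H-2}$ term of the statement; the trivial bound $\EE_{d^\pi_{\hat P,h}}[g_h]\le B$ then justifies the minimum.

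The main technical obstacle I anticipate is the ridge-term bound $\|v_h\|^2 \le dB^2$: it is where the Block MDP structure enters crucially, because for a generic low-rank factorization $\hat\mu_h$ is not guaranteed to have bounded coordinate integrals, and the naive bound $\|v_h\|^2 \le \|\hat\mu_h\|_{\text{op}}^2\|f\|_\infty^2 \cdot (\text{vol})$ is useless. A secondary nuisance is keeping track of the $|\Acal|$ factors that arise from two separate importance-sampling steps (once from $\pi$ to $U(\Acal)$ to land inside $\Fcal_{h-1}$, and once more when applying Jensen to pass from $(\EE \bar g_h)^2$ to $\EE[g_h^2]$), which together produce the $|\Acal|^2$ appearing inside the square root.
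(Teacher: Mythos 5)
Your proposal follows essentially the same route as the paper's proof: the same importance-weighting plus Cauchy--Schwarz base case at $h=0$, the same one-step-back factorization through $\hat\mu_{h-1}$ with Cauchy--Schwarz in the $\Sigma_{\rho_{h-1},\hat\phi_{h-1}}$-norm and the trivial bound $B$ inside the minimum, and the same split of $\|v_h\|^2_{\Sigma}$ into a data term (importance sampling to $U(\Acal)$, the \textsc{RepLearn} guarantee, then Jensen, landing on $\EE_{\beta_h}[g_h^2]$) and a ridge term (the Block-MDP bound $\|v_h\|_2\le B\sqrt{d}$). The only cosmetic differences are that your $(a+b)^2\le 2a^2+2b^2$ step introduces a factor of $2$ that the paper does not track, and that the second factor of $|\Acal|$ really comes from squaring the single $\pi$-to-$U(\Acal)$ importance ratio rather than from a second importance-sampling step inside the Jensen application.
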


\begin{proof}
For step $h=0$, we have
\begin{align*}
\EE_{(s,a)\sim d^{\pi}_{\hat P,0 }} [g_0(s,a)] =& \EE_{s\sim d_0,a\sim\pi_0(s)} [g_0(s,a)]\\
\leq &\sqrt{\max_{(s,a)}\frac{d_0(s)\pi_0(a\mid s)}{\rho_0(s,a)}\EE_{(s,a)\sim \rho_0}\bracks{g_0^2(s,a)}}\tag{Jensen}\\
\leq& \sqrt{ \max_{(s,a)}\frac{d_0(s)\pi_0(a\mid s)}{d_0(s)u(a)}\EE_{(s,a)\sim \rho_0}\bracks{g_0^2(s,a) }}\tag{behavior policy has uniform action}\\
\leq &
\sqrt{|\Acal|\EE_{(s,a)\sim \rho_0}\bracks{g_0^2(s,a) }}. 
\end{align*}

For step $h = 1,...,H-1$, we observe the following one-step-back decomposition:
\begin{align*}
    &\EE_{(s,a)\sim d^{\pi}_{\hat P,h }} [g_h(s,a)]
    \\
    =& \EE_{(\tilde s,\tilde a)\sim d^{\pi}_{\hat P,{h-1}},s\sim \hat P_{h-1}(\tilde s,\tilde a),a\sim \pi_{h-1}(s)}[g_h(s,a)]
    \\
    =&\EE_{(\tilde s,\tilde a)\sim d^{\pi}_{\hat P,{h-1} }}\hat \phi_{h-1}(\tilde s,\tilde a)^{\top}\int \sum_{a}\hat \mu_{h-1}(s)\pi_{h-1}(a\mid s)g_h(s,a) \text{d}s
    \\ 
   \leq& \EE_{(\tilde s,\tilde a)\sim d^{\pi}_{\hat P,{h-1}}} \left[\min\left\{\|\hat \phi_{h-1}(\tilde s,\tilde a)\|_{\Sigma_{\rho_{h-1},\hat \phi_{h-1}}^{-1}}\left \|\int \sum_{a}\hat \mu_{h-1}(s)\pi_{h-1}(a\mid s)g_h(s,a) d(s)\right\|_{\Sigma_{\rho_{h-1},\hat \phi_{h-1}}},B\right\}\right].
\end{align*}
where the last step follows because $\hat \phi_{h-1}(\tilde s,\tilde a)^{\top}\int \sum_{a}\hat \mu_{h-1}(s)\pi_{h-1}(a\mid s)g_h(s,a) \text{d}s\leq B$ for all $(\tilde s,\tilde a)$ pairs.

Next, for any $h$,
\begin{align*}
  & \left \|\int \sum_{a}\hat \mu_{h}(s)\pi_{h}(a\mid s)g_{h+1}(s,a) d(s)\right\|^2_{\Sigma_{\rho_{h},\hat \phi_{h}}}\\
=&  \braces{\int \sum_{a}\hat \mu_{h}(s)\pi_{h}(a\mid s)g_{h+1}(s,a) d(s)}^{\top}\braces{n \EE_{(\tilde s,\tilde a)\sim \rho_{h}}[\hat \phi_{h} \hat \phi^{\top}_{h}]+\lambda_n I  }\braces{\int \sum_{a}\hat \mu_{h}(s)\pi_{h}(a\mid s)g_{h+1}(s,a) d(s)}\\
\leq&  n \EE_{(\tilde s,\tilde a)\sim \rho_h}\braces{\bracks{\int \sum_a \hat \mu_h(s)^{\top}\hat \phi_h(\tilde s,\tilde a)\pi_h(a\mid s)g_{h+1}(s,a) d(s)}^2}+ B^2\lambda_n d \tag{Use the assumption $\|\sum_a \pi_h(a\mid s)g_{h+1}(s,a)\|_{\infty}\leq B$ and $\int \|\hat \mu_h(s)h(s)d(s)\|_2 \leq \sqrt{d}$ for any $h:\Scal \to [0,1]$. }\\
=&  n \EE_{(\tilde s,\tilde a)\sim \rho_h}\bracks{\braces{\EE_{s\sim \hat P_h(\tilde s,\tilde a), a\sim \pi_h(s)}\bracks{g_{h+1}(s,a) }}^2}+ B^2\lambda_n d \\
=&  n |\Acal|^2\EE_{(\tilde s,\tilde a)\sim \rho_h}\bracks{\braces{|\Acal|\EE_{s\sim \hat P_h(\tilde s,\tilde a), a\sim U(\Acal)}\bracks{g_{h+1}(s,a) }}^2}+ B^2\lambda_n d \tag{Importance Sampling}\\
\leq& n|\Acal|^2 \EE_{(\tilde s,\tilde a)\sim \rho_h}\bracks{\braces{\EE_{s\sim P_h^{{\star}}(\tilde s,\tilde a), a\sim U(\Acal)}\bracks{g_{h+1}(s,a) }}^2}+ B^2\lambda_n d + n |\Acal|^2\zeta_n \tag{\textsc{RepLearn}: $\EE_{a\sim U(\Acal)}g_{h+1}(s,a)\in\Fcal_h$}\\
\leq&  n|\Acal|^2\EE_{(\tilde s,\tilde a)\sim \rho_h, s\sim P_h^{{\star}}(\tilde s,\tilde a), a\sim U(\Acal)}\bracks{g_{h+1}^2(s,a) }+ B^2\lambda_n d+ n |\Acal|^2\zeta_n.  \tag{Jensen} \\
=& n|\Acal|^2\EE_{(s,a)\sim \beta_{h+1}}\bracks{g_{h+1}^2(s,a)}+ B^2\lambda_n d+ n |\Acal|^2\zeta_n
\end{align*}
Summing the decomposition for all steps $h$ gives the desired result.
\end{proof}

\begin{lemma}[One-step back inequality for the true model]\label{lem:tsbt2} 
Consider a set of functions $\{g_h\}_{h=0}^{H}$ that satisfies $g_h\in \Scal\times \Acal \to \RR$, s.t. $\|g_h\|_{\infty}\leq B$ for all $h\in[H]$. Then, for any policy $\pi$,
\begin{align*}
\sum_{h=0}^{H-1}\EE_{(s,a)\sim d^{\pi}_{P^{\star},h }}&[g_h(s,a)]  \leq 
\sum_{h=0}^{H-2}\EE_{(\tilde s,\tilde a)\sim d^{\pi}_{P^{\star} ,h}} \|\phi^{\star}_h(\tilde s,\tilde a)\|_{\Sigma_{\gamma_h,\phi^{\star}_h}^{-1}} \cdot\\
&\sqrt{n|\Acal|\EE_{(s,a)\sim \rho_{h+1}}\bracks{g_{h+1}^2(s,a)}+ B^2\lambda_n d}
+\sqrt{|\Acal|\EE_{(s,a)\sim \rho_0}[g_0^2(s,a)]}.
\end{align*}

\end{lemma}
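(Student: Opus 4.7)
The proof will closely mirror Lemma~\ref{lem:tsbt1}, just with the learned quantities $(\hat P_h, \hat\phi_h, \hat\mu_h)$ replaced by their ground-truth counterparts $(P^\star_h, \phi^\star_h, \mu^\star_h)$, and the training distribution $\rho_{h-1}$ in the covariance replaced by $\gamma_{h-1}$ (since that is the natural data distribution for the $\phi^\star$ features). Two simplifications arise: no representation-learning error $\zeta_n$ appears (we are working with the true model, so no \textsc{RepLearn} guarantee is invoked and the containment assumption $\EE_{a\sim U(\Acal)}g_{h+1}\in\Fcal_h$ is not needed), and only a single factor of $|\Acal|$ (rather than $|\Acal|^2$) is incurred, which I will explain below.

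For $h=0$, the same Jensen plus importance-sampling argument as in Lemma~\ref{lem:tsbt1} yields $\EE_{s\sim d_0,a\sim\pi_0}[g_0(s,a)]\leq \sqrt{|\Acal|\,\EE_{(s,a)\sim\rho_0}[g_0^2(s,a)]}$, using that the data distribution has uniform actions at every step.

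For $h\geq 1$, I will use the low-rank factorization of the \emph{true} dynamics to write, with $v:=\int \mu^\star_{h-1}(s)\sum_{a}\pi_h(a\mid s)g_h(s,a)\,ds$,
\[
\EE_{d^\pi_{P^\star,h}}[g_h(s,a)]
=\EE_{(\tilde s,\tilde a)\sim d^\pi_{P^\star,h-1}}\bigl[\phi^\star_{h-1}(\tilde s,\tilde a)^\top v\bigr]
\leq \EE_{d^\pi_{P^\star,h-1}}\Bigl[\|\phi^\star_{h-1}(\tilde s,\tilde a)\|_{\Sigma^{-1}_{\gamma_{h-1},\phi^\star_{h-1}}}\Bigr]\cdot\|v\|_{\Sigma_{\gamma_{h-1},\phi^\star_{h-1}}}
\]
by Cauchy--Schwarz in the $\Sigma_{\gamma_{h-1},\phi^\star_{h-1}}$ norm. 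Then, expanding the definition of the covariance,
\[
\|v\|^2_{\Sigma_{\gamma_{h-1},\phi^\star_{h-1}}}
= n\,\EE_{(\tilde s,\tilde a)\sim\gamma_{h-1}}\bigl[(\phi^\star_{h-1}(\tilde s,\tilde a)^\top v)^2\bigr]+\lambda_n\|v\|_2^2
\leq n\,\EE_{\gamma_{h-1}}\bigl[(\EE_{s\sim P^\star_{h-1},a\sim\pi_h}g_h)^2\bigr]+B^2\lambda_n d,
\]
where $\|v\|_2^2\leq B^2 d$ follows from $\|\sum_a \pi_h(a\mid s)g_h(s,a)\|_\infty\leq B$ and $\int\|\mu^\star(s)\|_2\,ds\leq \sqrt d$. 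For the first term, I avoid the double importance sampling of Lemma~\ref{lem:tsbt1} by applying Jensen directly: $(\EE_{a\sim\pi_h}g_h)^2\leq \EE_{a\sim\pi_h}[g_h^2]\leq |\Acal|\,\EE_{a\sim U(\Acal)}[g_h^2]$, where the second inequality is a single change of measure. Pushing this through the outer expectation gives $n|\Acal|\,\EE_{(\tilde s,\tilde a)\sim\gamma_{h-1},s\sim P^\star_{h-1}(\tilde s,\tilde a),a\sim U(\Acal)}[g_h^2(s,a)]=n|\Acal|\,\EE_{(s,a)\sim\rho_h}[g_h^2(s,a)]$, since propagating $\gamma_{h-1}$ by one step of $P^\star_{h-1}$ and then taking uniform actions produces exactly $\rho_h$ by definition.

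Combining the Cauchy--Schwarz step with this bound on $\|v\|_{\Sigma_{\gamma_{h-1},\phi^\star_{h-1}}}$, summing over $h=1,\ldots,H-1$, reindexing $h-1\to h$, and adding the $h=0$ term gives the desired inequality. There is no substantial obstacle here: the argument is strictly simpler than Lemma~\ref{lem:tsbt1}'s because the true low-rank structure holds exactly, so neither a truncation by $B$ nor a $\zeta_n$ error term needs to be carried through.
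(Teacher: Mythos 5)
Your proposal is correct and follows essentially the same route as the paper's own proof: the low-rank factorization of $P^\star_{h-1}$ to move one step back, Cauchy--Schwarz in the $\Sigma_{\gamma_{h-1},\phi^\star_{h-1}}$ norm, a bound of $B^2\lambda_n d$ on the regularization term, and a Jensen-plus-importance-sampling step that costs a single factor of $|\Acal|$ and lands on $\rho_h$ exactly as you describe. The only cosmetic difference is the order in which you present Jensen and the change of measure over actions, which does not affect the bound.
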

\begin{proof}
For step $h=0$, we similarly have
\begin{align*}
\EE_{(s,a)\sim d^{\pi}_{P^{\star},0 }} [g_0(s,a)] =& \EE_{s\sim d_0,a\sim\pi_0(s)} [g_0(s,a)]\\
\leq &\sqrt{\max_{(s,a)}\frac{d_0(s)\pi_0(a\mid s)}{\rho_0(s,a)}\EE_{(s,a)\sim \rho_0}\bracks{g_0^2(s,a)}}\tag{Jensen}\\
\leq& \sqrt{ \max_{(s,a)}\frac{d_0(s)\pi_0(a\mid s)}{d_0(s)u(a)}\EE_{(s,a)\sim \rho_0}\bracks{g_0^2(s,a) }}\tag{behavior policy has uniform action}\\
\leq &
\sqrt{|\Acal|\EE_{(s,a)\sim \rho_0}\bracks{g_0^2(s,a) }}. 
\end{align*}

For step $h = 1,...,H-1$, we observe the following one-step-back decomposition:
\begin{align*}
    \EE_{(s,a)\sim d^{\pi}_{P^{\star},h }} [g_h(s,a)] =& \EE_{(\tilde s,\tilde a)\sim d^{\pi}_{P^{\star},{h-1}},s\sim P^{\star}_{h-1}(\tilde s,\tilde a),a\sim \pi_{h-1}(s)}[g_h(s,a)]\\
    =&\EE_{(\tilde s,\tilde a)\sim d^{\pi}_{P^{\star},{h-1} }}\phi^{\star}_{h-1}(\tilde s,\tilde a)^{\top}\int \sum_{a} \mu^{\star}_{h-1}(s)\pi_{h-1}(a\mid s)g_h(s,a) \text{d}s\\ 
   \leq& \EE_{(\tilde s,\tilde a)\sim d^{\pi}_{P^{\star},{h-1}}} \|\phi^{\star}_{h-1}(\tilde s,\tilde a)\|_{\Sigma_{\gamma_{h-1}, \phi^{\star}_{h-1}}^{-1}}\left \|\int \sum_{a} \mu^{\star}_{h-1}(s)\pi_{h-1}(a\mid s)g_h(s,a) d(s)\right\|_{\Sigma_{\gamma_{h-1}, \phi^{\star}_{h-1}}}.
\end{align*}
For any $h$,
\begin{align*}
  & \left \|\int \sum_{a} \mu^{\star}_{h}(s)\pi_{h}(a\mid s)g_{h+1}(s,a) d(s)\right\|_{\Sigma_{\gamma_{h},\phi^{\star}_{h}}}\\
\leq&  \braces{\int \sum_{a}\mu^{\star}_{h}(s)\pi_{h}(a\mid s)g_{h+1}(s,a) d(s)}^{\top}\braces{n \EE_{(\tilde s,\tilde a)\sim \gamma_{h}}[\phi^{\star}_{h} \phi^{{\star}\top}_{h}]+\lambda_n I  }\braces{\int \sum_{a}\mu^{\star}_{h}(s)\pi_{h}(a\mid s)g_{h+1}(s,a) d(s)}\\
\leq&  n \EE_{(\tilde s,\tilde a)\sim \gamma_h}\braces{\bracks{\int \sum_a \mu^{\star}_h(s)^{\top}\phi^{\star}_h(\tilde s,\tilde a)\pi_h(a\mid s)g_{h+1}(s,a) d(s)}^2}+ B^2\lambda_n d \tag{Use the assumption $\|\sum_a \pi_h(a\mid s)g(s_i,a_i)\|_{\infty}\leq B$ and $\int \| \mu^{\star}_h(s)h(s)d(s)\|_2\leq \sqrt{d}$ for any $h:\Scal \to [0,1]$. }\\
=&  n \EE_{(\tilde s,\tilde a)\sim \gamma_h}\bracks{\braces{\EE_{s\sim P^{\star}_h(\tilde s,\tilde a), a\sim \pi_h(s)}\bracks{g_{h+1}(s,a) }}^2}+ B^2\lambda_n d \\
\leq&  n\EE_{(\tilde s,\tilde a)\sim \gamma_h, s\sim P_h^{{\star}}(\tilde s,\tilde a), a\sim \pi_h(s)}\bracks{g_{h+1}^2(s,a) }+ B^2\lambda_n d.\tag{Jensen} \\
\leq&  n|\Acal|\EE_{(\tilde s,\tilde a)\sim \gamma_h, s\sim P_h^{{\star}}(\tilde s,\tilde a), a\sim U(\Acal)}\bracks{g_{h+1}^2(s,a) }+ B^2\lambda_n d
\tag{Importance Sampling}\\
\leq& n|\Acal|\EE_{(s,a)\sim \rho_{h+1}}\bracks{g_{h+1}^2(s,a)}+ B^2\lambda_n d
\end{align*}

Then, the final statement is immediately concluded.
\end{proof}

Notice that compared to \pref{lem:tsbt1}, \pref{lem:tsbt2} post no structural assumption on $g_h$ other than boundedness, and does not rely on the \textsc{RepLearn} guarantee. Next, we prove the almost optimism Lemma presented in \pref{lem:opt_main}, restated below.

\begin{lemma}[Almost Optimism at the Initial Distribution]\label{lem:optimism}
Consider an episode $n (1\leq n\leq N)$ and set  
\begin{align*}
    \alpha_n = \sqrt{n|\Acal|^2\zeta_n+ 4\lambda_n d+ n \zeta_n}/c ,\quad \lambda_n=O\prns{d\ln (|\Phi|n/\delta)}. 
\end{align*}
where $c$ is an absolute constant.
Conditioning on the event that the \textsc{RepLearn} guarantee \eqref{eq:rep-learn} holds, then with probability $1-\delta$, we have for all $n\in [1,\cdots,N]$,
\begin{align*}
  V^{\pi^\star}_{\hat P_n,r+\hat b_n}- V^{\pi^\star}_{P^{{\star}},r}\geq -\sqrt{|\Acal|\zeta_n}. 
\end{align*}
\end{lemma}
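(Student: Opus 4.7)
The starting point is the simulation lemma, which (thanks to the paper's inductive definitions of $V^\pi_{\hat P_n, r+\hat b_n}$ and $d^\pi_{\hat P_n, h}$ under the unnormalized $\hat P_n$) gives
\begin{equation*}
V^{\pi^\star}_{\hat P_n, r+\hat b_n} - V^{\pi^\star}_{P^\star, r} \;=\; \sum_{h=0}^{H-1} \EE_{(s,a)\sim d^{\pi^\star}_{\hat P_n, h}}\Bigl[\hat b^n_h(s,a) + (\hat P^n_h - P^\star_h)(s,a)^\top V^{\pi^\star}_{P^\star, r, h+1}\Bigr].
\end{equation*}
Since $\hat b^n_h \geq 0$, it then suffices to show that for each $h$, the bonus nearly dominates the model mismatch term $(\hat P^n_h - P^\star_h)^\top V^{\pi^\star}_{P^\star, r, h+1}$ when averaged under $d^{\pi^\star}_{\hat P_n, h}$.

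For a fixed $h$, I use the non-parametric identity $\hat P^n_h(\cdot|s,a)^\top f = \hat\phi^n_h(s,a)^\top \hat w_f$ together with the population ridge minimizer $w^{\dagger}_f$ of $\EE_{\rho_h + \beta_h}[(\hat\phi^n_h{}^\top w - P^\star_h{}^\top f)^2] + \tfrac{\lambda_n}{2n}\|w\|_2^2$, and decompose, for $V := V^{\pi^\star}_{P^\star, r, h+1}$,
\begin{equation*}
(\hat P^n_h - P^\star_h)(s,a)^\top V \;=\; \underbrace{\hat\phi^n_h(s,a)^\top(\hat w_V - w^{\dagger}_V)}_{\text{statistical}} \;+\; \underbrace{\hat\phi^n_h(s,a)^\top w^{\dagger}_V - P^\star_h(\cdot|s,a)^\top V}_{\text{bias}}.
\end{equation*}
The statistical term is bounded pointwise by Cauchy-Schwarz as $\|\hat\phi^n_h(s,a)\|_{\Sigma_h^{-1}} \cdot \|\hat w_V - w^{\dagger}_V\|_{\Sigma_h}$. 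A standard empirical-process argument for ridge regression, combined with the \textsc{RepLearn} guarantee on the residual, shows $\|\hat w_V - w^{\dagger}_V\|_{\Sigma_h} \leq \alpha_n$ for the prescribed $\alpha_n$, so the truncated bonus $\hat b^n_h(s,a) = \min\{\alpha_n\|\hat\phi^n_h(s,a)\|_{\Sigma_h^{-1}}, 2\}$ dominates $|\text{statistical}|$ pointwise. This reduces the task to controlling $\EE_{d^{\pi^\star}_{\hat P_n, h}}[\,|\text{bias}|\,]$.

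The bias term requires a change of measure because \textsc{RepLearn} is stated only against $\rho_h$ and $\beta_h$. I first verify that the rescaled value $V^{\pi^\star}_{P^\star, r, h+1}/(2H+1)$ belongs to $\Fcal^{(2)}_h$: under the Block MDP assumption $\phi^\star_{h+1} \in \Phi_{h+1}$, one has $V^{\pi^\star}_{P^\star, r, h+1}(s) = \max_a[r_{h+1}(s,a) + \phi^\star_{h+1}(s,a)^\top w^\star]$ with $\|w^\star\|_\infty \leq 1$, which matches the form in $\Fcal^{(2)}_h$ with the $w=0$ branch. Hence \textsc{RepLearn} yields $\EE_{\rho_h}[|\text{bias}|^2] \leq O((2H+1)^2 \zeta_n)$ and likewise under $\beta_h$. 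To move the integration measure to $d^{\pi^\star}_{\hat P_n, h}$, I apply the one-step-back Lemma \ref{lem:tsbt1} with $g_h := |\text{bias}|$; the needed hypothesis $\EE_{a \sim U(\Acal)}[g_{h+1}] \in \Fcal_h$ is verified because $g_{h+1}$ is the difference of two linear functionals in $\Phi_{h+1}$ and $\phi^\star_{h+1} \in \Phi_{h+1}$, which is exactly the form captured by $\Fcal^{(1)}_h$. The $h=0$ contribution is handled by the importance-sampling base case already inside the same lemma, producing the final $\sqrt{|\Acal|\zeta_n}$ term.

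The main obstacle is the calibration of $\alpha_n$: since \textsc{RepLearn} is only an $L_2$ guarantee on $\rho_h$ and $\beta_h$, one cannot directly translate it into a pointwise bound on $\|\hat w_V - w^{\dagger}_V\|_{\Sigma_h}$. The prescribed value $\alpha_n = \sqrt{n|\Acal|^2\zeta_n + 4\lambda_n d + n\zeta_n}/c$ is precisely the sum of the per-step contributions from \pref{lem:tsbt1} and is chosen so that the statistical term is pointwise absorbed into the bonus while still being small enough that the elliptic-potential sum over iterations does not blow up; this is also why $\Fcal^{(2)}_h$ is designed with the $(w^\top\phi)\wedge 2$ factor with $c = \alpha_N/\sqrt{\lambda_N}$, so that the same discriminator class simultaneously captures $V^{\pi^\star}$ and the later value functions $V^{\hat\pi^n}_{\hat P, r+\hat b}$ that appear in \pref{lem:opt_main}. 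Once $\alpha_n$ is fixed, summing the per-$h$ inequalities immediately yields $V^{\pi^\star}_{\hat P_n, r+\hat b_n} - V^{\pi^\star}_{P^\star, r} \geq -\sqrt{|\Acal|\zeta_n}$.
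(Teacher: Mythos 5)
Your skeleton matches the paper's: simulation lemma under $\hat P_n$ with occupancy $d^{\pi^\star}_{\hat P_n,h}$, then control of the model-error term via the \textsc{RepLearn} guarantee and a one-step-back change of measure, with the bonus absorbing the result. But the central step is replaced by a claim that does not follow from what is available. You split $(\hat P^n_h - P^\star_h)^\top V$ into a ``statistical'' part $\hat\phi^\top(\hat w_V - w^\dagger_V)$ and a ``bias'' part, and assert that $\|\hat w_V - w^\dagger_V\|_{\Sigma_h}\leq \alpha_n$ so that the bonus dominates the statistical part pointwise. The \textsc{RepLearn} guarantee \eqref{eq:rep-learn} is an $L_2(\rho_h)$ bound on the \emph{combined} residual $\hat\phi^\top\hat w_V - P^\star{}^\top V$ (which equals $(\hat P^n_h-P^\star_h)^\top V$ exactly, by the non-parametric construction of $\hat P^n_h$); it says nothing about the distance of $\hat w_V$ to a population ridge minimizer in the $\Sigma_h$-norm, and you concede as much in your last paragraph. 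In the paper, $\alpha_n$ is \emph{not} a confidence radius for $\hat w_V$: it is calibrated so that $\alpha_n\|\hat\phi_h\|_{\hat\Sigma_h^{-1}}$ dominates the coefficient $\sqrt{n|\Acal|^2\zeta_n + 4\lambda_n d + n\zeta_n}\,\|\hat\phi_h\|_{\Sigma_{\rho_h,\hat\phi_h}^{-1}}$ produced by Lemma~\ref{lem:tsbt1} (with Lemma~\ref{lem:con} bridging the empirical and population covariances, whence the absolute constant $c$). The paper never needs your decomposition: it feeds the whole signed error $f_h = (\hat P^n_h - P^\star_h)^\top V^{\pi^\star}_{P^\star,r,h+1}$ into Lemma~\ref{lem:tsbt1} in one shot, then observes that the resulting $H-1$ terms $\min\{c\alpha_n\|\hat\phi_h\|_{\Sigma^{-1}_{\rho_h,\hat\phi_h}},2\}$ for $h\in\{0,\dots,H-2\}$ are each cancelled by a bonus term $\hat b_h$, leaving only the $-\sqrt{|\Acal|\zeta_n}$ from the $h=0$ base case. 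That index-shift cancellation is the actual mechanism of the proof and is absent from your write-up.

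A second, smaller but genuine problem: you apply Lemma~\ref{lem:tsbt1} with $g_h := |\mathrm{bias}_h|$. That lemma requires $\EE_{a\sim U(\Acal)}g_{h+1}(\cdot,a)\in\Fcal_h$, and $\Fcal^{(1)}_h$ contains \emph{signed} differences $\EE_{a\sim U(\Acal)}[\phi^\top\theta - \phi'^\top\theta']$, not their absolute values; the hypothesis fails for $|\mathrm{bias}|$. Since the lemma only needs to lower-bound the signed sum $\sum_h\EE[f_h]$, the fix is to apply the one-step-back bound to $-f_h$ (whose uniform-action average lies in $\Fcal^{(1)}_h$ by symmetry of the class), as the paper implicitly does, rather than to the absolute value. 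Finally, your remark about the elliptic-potential sum is irrelevant to this lemma (it enters only in Theorem~\ref{thm:pseudo_regret}), and your $\Fcal^{(2)}_h$ membership check identifies the wrong branch ($w=0$ rather than $w'=0$ with $w=w^\star$), though that slip is cosmetic.
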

\begin{proof}
Then, from simulation lemma (\pref{lem:pdl}), we have 
\begin{align}
    & V^{\pi^{\star}}_{\hat P,r+\hat b}- V^{\pi^{\star}}_{P^{{\star}},r} \nonumber \\
    &=\sum_{h=0}^{H-1}\EE_{(s_h,a_h)\sim d^{\pi^{\star}}_{\hat P,h}}\left[\hat b_h(s_h,a_h)+ \EE_{\hat P_h(s'_{h}\mid s_h,a_h)}[V^{\pi^{\star}}_{P^{\star},r,h+1}(s_h')]-\EE_{P^{\star}_h(s'_{h}\mid s_h,a_h)}[V^{\pi^{\star}}_{P^{\star},r,h+1}(s_h')] \right]\nonumber\\
    &\geq\sum_{h=0}^{H-1}\EE_{(s_h,a_h)\sim d^{\pi^{\star}}_{\hat P,h}}\left[\min \prns{c\alpha_n \|\hat \phi_h(s,a)\|_{ \Sigma^{-1}_{\rho_h,\hat \phi_h}},2 }+ \EE_{\hat P_h(s'_{h}\mid s_h,a_h)}[V^{\pi^{\star}}_{P^{\star},r,h+1}(s_h')]-\EE_{P^{\star}_h(s'_{h}\mid s_h,a_h)}[V^{\pi^{\star}}_{P^{\star},r,h+1}(s_h')] \right] \label{eq:simulation_lemma_bonus_model_error}
\end{align} 
where in the last step, we apply \pref{lem:con} to replace the empirical covariance by the population covariance and $c$ is an absolute constant. Denote
\begin{align*}
   f_h(s,a) = \EE_{\hat P_h(s'_{h}\mid s,a)}[V^{\pi^{\star}}_{P^{\star},r,h+1}(s_h')]-\EE_{P^{\star}_h(s'_{h}\mid s,a)}[V^{\pi^{\star}}_{P^{\star},r,h+1}(s_h')]
\end{align*}
Notice that we have $\|f_h(s,a)\|_\infty\leq 2$,and since $V^{\pi^{\star}}_{P^{{\star}},r,h+1}(s') = \max_a r_{h+1}(s',a)+\PP^{\star}_{h+1}V^{\star}_{P^{\star},r,h+2}$ and $\PP^{\star}_{h+1}V^{\star}_{P^{\star},r,h+2}$ is linear in $\phi^{\star}_{h+1}\in\Phi_{h+1}$, we know $V^{\pi^{\star}}_{P^{{\star}},r,h+1}\in \Fcal_h$.
Then, by the \textsc{RepLearn} guarantee, we have
\begin{equation*}
    \EE_{(s,a)\sim \rho_h}\bracks{f_h^2(s,a)}\leq \zeta_n, \EE_{(s,a)\sim \beta_{h}}\bracks{f_h^2(s,a)}\leq \zeta_n
\end{equation*}
Also, since $\EE_{\hat P_h(s'_{h}\mid s_h,a_h)}[V^{\pi^{\star}}_{P^{\star},r,h+1}(s_h')]$ is linear in $\hat\phi_h$ and $\EE_{P^{\star}_h(s'_{h}\mid s_h,a_h)}[V^{\pi^{\star}}_{P^{\star},r,h+1}(s_h')]$ is linear in $\phi^{\star}_h$, we have $\EE_{a\sim U(\Acal)}f_h(s,a)\in\Fcal_h$ as well.

Then, substituting $g_h(s,a) = f_h(s,a)$ in \pref{lem:tsbt1}, we have: 
\begin{align*}
&\sum_{h=0}^{H-1}\EE_{(s,a)\sim d^{\pi^{\star}}_{\hat P,h }}[g_h(s,a)]\\  
\leq& 
\sum_{h=0}^{H-2}\EE_{(\tilde s,\tilde a)\sim d^{\pi^{\star}}_{\hat P ,h}} \min\Bigl\{\|\hat \phi_h(\tilde s,\tilde a)\|_{\Sigma_{\rho_h,\hat \phi_h}^{-1}} \cdot
\sqrt{n|\Acal|^2\EE_{(s,a)\sim \beta_{h}}\bracks{g_{h+1}^2(s,a)}+ 4\lambda_n d+ n \zeta_n},2 \Bigl\}
+\sqrt{|\Acal|\EE_{(s,a)\sim \rho_0}[g_0^2(s,a)]}\\
\leq& 
\sum_{h=0}^{H-2}\EE_{(\tilde s,\tilde a)\sim d^{\pi^{\star}}_{\hat P ,h}} \min\Bigl\{\|\hat \phi_h(\tilde s,\tilde a)\|_{\Sigma_{\rho_h,\hat \phi_h}^{-1}} \cdot
\sqrt{n|\Acal|^2\zeta_n+ 4\lambda_n d+ n \zeta_n} ,2 \Bigl\}
+\sqrt{|\Acal|\zeta_n}\\
\leq& 
\sum_{h=0}^{H-2}\EE_{(\tilde s,\tilde a)\sim d^{\pi^{\star}}_{\hat P ,h}} \min\Bigl\{c\alpha_n\|\hat \phi_h(\tilde s,\tilde a)\|_{\Sigma_{\rho_h,\hat \phi_h}^{-1}},2 \Bigl\}
+\sqrt{|\Acal|\zeta_n}
\end{align*}
where in the last step we denote
\begin{align*}
    \alpha_n= \sqrt{n|\Acal|^2\zeta_n+ 4\lambda_n d+ n \zeta_n}/c
\end{align*}

Going back to \pref{eq:simulation_lemma_bonus_model_error}, we have 
\begin{align*}
    & V^{\pi^{\star}}_{\hat P,r+b}- V^{\pi^{\star}}_{P^{{\star}},r} \nonumber \\
    =&\sum_{h=0}^{H-1}\EE_{(s_h,a_h)\sim d^{\pi^{\star}}_{\hat P,h}}\left[\min \prns{c\alpha_n \|\hat \phi_h(s,a)\|_{ \Sigma^{-1}_{\rho_h,\hat \phi_h}},2 }+ \EE_{\hat P_h(s'_{h}\mid s_h,a_h)}[V^{\pi^{\star}}_{P^{\star},r,h+1}(s_h')]-\EE_{P^{\star}_h(s'_{h}\mid s_h,a_h)}[V^{\pi^{\star}}_{P^{\star},r,h+1}(s_h')] \right]\\
    =&\sum_{h=0}^{H-1}\EE_{(s_h,a_h)\sim d^{\pi^{\star}}_{\hat P,h}}\left[\min \prns{c\alpha_n \|\hat \phi_h(s,a)\|_{ \Sigma^{-1}_{\rho_h,\hat \phi_h}},2}\right]-
    \sum_{h=0}^{H-2}\EE_{(s_h,a_h)\sim d^{\pi^{\star}}_{\hat P,h}}\left[\min \prns{c\alpha_n \|\hat \phi_h(s,a)\|_{ \Sigma^{-1}_{\rho_h,\hat \phi_h}}+\sqrt{|\Acal|\zeta_n},2}\right]\\
    \geq& -\sqrt{|\Acal|\zeta_n}
\end{align*} 
This concludes the proof.
\end{proof}

With the above preparations, we are now ready to prove our main theorem.
\begin{theorem}
[Pseudo-Regret of \algname]\label{thm:pseudo_regret}
With probability $1-\delta$, we have 
\begin{align*}
    \sum_{n=0}^{N-1}  V^{\pi^{{\star}}}_{P^{{\star}},r}-V^{\hat \pi^n}_{P^{{\star}},r} \leq O\left(H^{5/2}|\Acal|^{1/2}d\log(|\Phi|/\delta)^{1/4}N^{3/4}\right)
\end{align*}
\end{theorem}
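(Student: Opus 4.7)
The plan is to combine the almost-optimism of Lemma~\ref{lem:optimism}, the simulation lemma, and the one-step-back inequality for the true model (Lemma~\ref{lem:tsbt2}), and finally sum over $n$ using an elliptical potential argument in the \emph{ground-truth} features $\phi^\star$. First, I would condition on the intersection of the good events: for every $n\in\{1,\dots,N\}$, both the \textsc{RepLearn} guarantee~\eqref{eq:rep-learn} and the almost-optimism inequality $V^{\hat\pi^n}_{\hat P^n,r+\hat b^n}\geq V^{\pi^\star}_{P^\star,r}-\sqrt{|\mathcal{A}|\zeta_n}$ hold; a union bound over $n$ inflates $\log(1/\delta)$ by $\log N$ only. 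On this event, the simulation lemma (\pref{lem:pdl}) applied between $\hat P^n,r+\hat b^n$ and $P^\star,r$ under $\hat\pi^n$ yields the per-iteration bound
\begin{align*}
V^{\pi^\star}_{P^\star,r}-V^{\hat\pi^n}_{P^\star,r}\;\leq\; \sum_{h=0}^{H-1}\mathbb{E}_{d^{\hat\pi^n}_{P^\star,h}}\bigl[\hat b^n_h(s,a)+\xi^n_h(s,a)\bigr]+\sqrt{|\mathcal{A}|\zeta_n},
\end{align*}
where $\xi^n_h(s,a)=(\hat P^n_h-P^\star_h)(\cdot|s,a)^\top \hat V^{\hat\pi^n}_{h+1}$ and $\|\xi^n_h\|_\infty\leq 2H+1$.

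Next, I would apply \pref{lem:tsbt2} with $g_h=\hat b^n_h+\xi^n_h$ and $B=O(H)$, which bounds the per-iteration regret by
\begin{align*}
\sum_{h=0}^{H-2}\mathbb{E}_{(\tilde s,\tilde a)\sim d^{\hat\pi^n}_{P^\star,h}}\|\phi^\star_h(\tilde s,\tilde a)\|_{\Sigma^{-1}_{\gamma^n_h,\phi^\star_h}}\cdot\sqrt{n|\mathcal{A}|\,\mathbb{E}_{\rho^n_{h+1}}[g_{h+1}^2]+B^2\lambda_n d}+\sqrt{|\mathcal{A}|\,\mathbb{E}_{\rho^n_0}[g_0^2]}.
\end{align*}
The two inner second-moments are then controlled separately: $\mathbb{E}_{\rho^n_h}[(\hat b^n_h)^2]\leq \alpha_n^2 d/n$ by the defining property of the ridge covariance $\Sigma_h$, and $\mathbb{E}_{\rho^n_h}[(\xi^n_h)^2]\leq (2H+1)^2\zeta_n$ by invoking \pref{lem:rep_learn_informal} with the discriminator $f=\hat V^{\hat\pi^n}_{h+1}/(2H+1)$. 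The latter is legal precisely because, with the choice $c=\alpha_N/\sqrt{\lambda_N}$, the bonus $\hat b^n_{h+1}$ inside $\hat V^{\hat\pi^n}_{h+1}$ is linear in $\hat\phi^n_{h+1}$ with $\ell_\infty$-bounded coefficients, and the future value $\hat P^n_{h+1}(\cdot|s,a)^\top \hat V^{\hat\pi^n}_{h+2}$ is also linear in $\hat\phi^n_{h+1}$ (see \pref{lem:bound_w}); together these show that $\hat V^{\hat\pi^n}_{h+1}/(2H+1)\in \mathcal{F}^{(2)}_h$.

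Finally, I would sum the resulting per-iteration bound over $n=0,\dots,N-1$. For each fixed $h$, because $\gamma^n_h=\frac{1}{n}\sum_{i<n}d^{\hat\pi^i}_h$, the standard elliptical-potential lemma gives
\begin{align*}
\sum_{n=0}^{N-1}\mathbb{E}_{(\tilde s,\tilde a)\sim d^{\hat\pi^n}_{P^\star,h}}\|\phi^\star_h(\tilde s,\tilde a)\|_{\Sigma^{-1}_{\gamma^n_h,\phi^\star_h}}\leq \sqrt{dN\ln\bigl(1+\tfrac{N}{d\lambda_N}\bigr)}.
\end{align*}
Plugging in the settings $\lambda_n=\tilde\Theta(d)$, $\zeta_n=\tilde O(d^2/\sqrt{n})$, and $\alpha_n=\tilde O((nd^5)^{1/4}|\mathcal{A}|)$ from \pref{thm:main}, the $\sqrt{n\zeta_n}$ term dominates and contributes $\tilde O(d\log^{1/4}N\cdot N^{1/4})$ per step, giving a pseudo-regret of order $H\cdot\sqrt{dN}\cdot H\sqrt{|\mathcal{A}|}\cdot d\log^{1/4}N\cdot N^{1/4}$ after summing the extra factor of $H$ from the outer $\sum_h$, which matches the claimed $O(H^{5/2}|\mathcal{A}|^{1/2}d\log^{1/4}(|\Phi|/\delta)N^{3/4})$ after absorbing lower-order terms (including $\sum_n\sqrt{|\mathcal{A}|\zeta_n}=\tilde O(d|\mathcal{A}|^{1/2}N^{3/4})$).

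The main obstacle is the bookkeeping step of verifying that $\hat V^{\hat\pi^n}_{h+1}/(2H+1)\in\mathcal{F}^{(2)}_h$ with the stated constant $c$: one must unroll one Bellman backup, use the Block-MDP simplification $\hat b^n_{h+1}(s,a)=\hat\phi^n_{h+1}(s,a)^\top w$ for some $w$ with $\|w\|_\infty\leq\alpha_N/\sqrt{\lambda_N}$, and then match coefficients against the two summands defining $\mathcal{F}^{(2)}_h$. A secondary, purely computational, hurdle is aggregating the $H$-, $|\mathcal{A}|$-, and $d$-dependencies across the two second-moment bounds so that the final exponent of every parameter matches the theorem statement; this is tedious but routine once the preceding structural step has been carried out.
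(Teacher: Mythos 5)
Your proposal is correct and follows essentially the same route as the paper's proof: optimism plus the argmax property of $\hat\pi^n$, the second form of the simulation lemma, the one-step-back inequality (\pref{lem:tsbt2}) in the ground-truth features, the second-moment bounds $\EE_{\rho^n_h}[(\hat b^n_h)^2]\leq \alpha_n^2 d/n$ and $\EE_{\rho^n_h}[(\xi^n_h)^2]\leq (2H+1)^2\zeta_n$ via membership of $\hat V^{\hat\pi^n}_{h+1}/(2H+1)$ in $\Fcal^{(2)}_h$, and the elliptical potential sum over $n$. The only cosmetic difference is that you apply \pref{lem:tsbt2} once to $g_h=\hat b^n_h+\xi^n_h$ whereas the paper treats the bonus and model-error terms separately, which changes nothing beyond constants.
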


\begin{proof}

Similar to \pref{lem:optimism}, we condition on the event that the \textsc{RepLearn} guarantee \eqref{eq:rep-learn} holds, which by \pref{thm:rep_learn} happens with probability $1-\delta$.

For any fixed episode $n$ we have   
\begin{align*}
& V^{\pi^{{\star}}}_{P^{{\star}},r}-V^{\hat \pi^n}_{P^{{\star}},r}\\
\leq& V^{\pi^{{\star}}}_{\hat P,r+\hat b}-V^{\hat \pi^n}_{P^{{\star}},r}+\sqrt{|\Acal| \zeta_n} \tag{\pref{lem:optimism}}\\ 
\leq& V^{\hat \pi^n}_{\hat P,r+ \hat b}-V^{\hat \pi^n}_{P^{{\star}},r}+\sqrt{|\Acal| \zeta_n} \tag{$\hat \pi^n = \argmax_\pi V^\pi_{\hat P^n,r+\hat b^n}$}\\
=&\sum_{h=0}^{H-1}\EE_{(s_h,a_h)\sim d^{\hat \pi^n}_{P^{\star},h}}\left[\hat b_h(s_h,a_h)+ 
\EE_{\hat P_h(s'_{h}\mid s_h,a_h)}[V^{\hat \pi^n}_{\hat P,r+\hat b,h+1}(s_h')]-
\EE_{P^{\star}_h(s'_{h}\mid s_h,a_h)}[V^{\hat \pi^n}_{\hat P,r+\hat b,h+1}(s_h')] \right]
+\sqrt{|\Acal| \zeta_n}
\end{align*}
We used the 2nd form of Simulation Lemma (\pref{lem:pdl}) in the last display. 
Denote
\begin{align*}
   f_h(s,a) = \frac{1}{2H+1}\left\{\EE_{\hat P_h(s'_{h}\mid s_h,a_h)}[V^{\hat \pi^n}_{\hat P,r+\hat b,h+1}(s_h')]-
\EE_{P^{\star}_h(s'_{h}\mid s_h,a_h)}[V^{\hat \pi^n}_{\hat P,r+\hat b,h+1}(s_h')]\right\}
\end{align*}
Then, noting $\|\hat b\|_{\infty}\leq 2$, 
we have $\|V^{\hat \pi^n}_{\hat P,r+b,h+1}\|_{\infty}\leq (2H+1)$, and $\frac{1}{2H+1}V^{\hat \pi^n}_{\hat P,r+b,h+1}\in\Fcal_h$. Combining this fact with the above expansion, we have
\begin{align}
V^{\pi^{\star}}_{P^{{\star}},r} -V^{\hat \pi^n}_{P^{{\star}},r}
=\underbrace{\sum_{h=0}^{H-1}\EE_{(s_h,a_h)\sim d^{\hat \pi^n}_{P^{\star},h}}\left[\hat b_h(s_h,a_h)\right]}_{\text{(a)}}+ 
\underbrace{(2H+1)\sum_{h=0}^{H-1}\EE_{(s_h,a_h)\sim d^{\hat \pi^n}_{P^{\star},h}}\left[f_h(s_h,a_h)\right]}_{\text{(b)}}
+\sqrt{|\Acal| \zeta_n} \label{eq:regret_middle}
\end{align}

First, we calculate the first term (a) in Eq.~\pref{eq:regret_middle}. Following \pref{lem:tsbt2} and noting again $\|\hat b_h\|_\infty\leq 2$, we have

\begin{align*}
&\sum_{h=0}^{H-1}\EE_{(s_h,a_h)\sim d^{\hat \pi^n}_{P^{\star},h}}\left[\hat b_h(s_h,a_h)\right]\\
\leq &
\sum_{h=0}^{H-2}\EE_{(\tilde s,\tilde a)\sim d^{\hat \pi^n}_{P^{\star} ,h}} \|\phi^{\star}_h(\tilde s,\tilde a)\|_{\Sigma_{\gamma_h,\phi^{\star}_h}^{-1}}
\sqrt{n|\Acal|\EE_{(s,a)\sim \rho_{h+1}}\bracks{(\hat b_{h+1}(s,a))^2}
+ 4\lambda_n d} 
+\sqrt{|\Acal|\EE_{(s,a)\sim \rho_0}[(\hat b_0(s,a))^2]}.\\
\leq &
\sum_{h=0}^{H-2}\EE_{(\tilde s,\tilde a)\sim d^{\hat \pi^n}_{P^{\star} ,h}} \|\phi^{\star}_h(\tilde s,\tilde a)\|_{\Sigma_{\gamma_h,\phi^{\star}_h}^{-1}}
\sqrt{n|\Acal|\alpha_n^2\EE_{(s,a)\sim \rho_{h+1}}\bracks{\|\hat\phi_{h+1}\|^2_{\Sigma^{-1}_{\rho_{h+1},\hat\phi_{h+1}}}}
+ 4\lambda_n d} 
+\sqrt{|\Acal|\alpha_0^2\EE_{(s,a)\sim \rho_0}\left[\|\hat\phi_0\|^2_{\Sigma^{-1}_{\rho_0,\hat\phi_0}}\right]}.\\
\end{align*}

Note that we use the fact that $B=2$ when applying \pref{lem:tsbt2}.
In addition, we have that for any $h\in[H]$,
\begin{align*}
     n\EE_{(s,a)\sim \rho_h}\bracks{\|\hat \phi_h(s,a)\|^2_{\Sigma^{-1}_{\rho_h,\hat \phi_h}} }=n\text{Tr}(\EE_{\rho_h}[\hat \phi_h\hat \phi^{\top}_h]\{n\EE_{\rho_h}[\hat \phi_h\hat \phi^{\top}_h]+\lambda_n I\}^{-1} )\leq d.
\end{align*}
Then,
\begin{align*}
\sum_{h=1}^H\EE_{(s,a)\sim d^{\hat \pi^n}_{P^{\star},h }}[b_h(s,a)]
\leq
\sum_{h=0}^{H-2}\EE_{(\tilde s,\tilde a)\sim d^{\hat \pi^n}_{P^{\star} ,h}} \|\phi^{\star}_h(\tilde s,\tilde a)\|_{\Sigma_{\rho_h,\phi^{\star}_h}^{-1}}
\sqrt{|\Acal|\alpha_n^2d
+ 4\lambda_n d} 
+\sqrt{|\Acal|\alpha_1^2d/n}.
\end{align*}

Second, we calculate the term (b) in Eq.~\pref{eq:regret_middle}. Following \pref{lem:tsbt2} and noting $\|f_h(s,a)\|_\infty\leq 1$, we have 
\begin{align*}
& \sum_{h=0}^{H-1}\EE_{(s_h,a_h)\sim d^{\hat \pi^n}_{P^{\star},h}}\left[f_h(s_h,a_h)\right]\\
\leq & \sum_{h=0}^{H-2}\EE_{(\tilde s,\tilde a)\sim d^{\hat \pi^n}_{P^{\star} ,h}} \|\phi^{\star}_h(\tilde s,\tilde a)\|_{\Sigma_{\gamma_h,\phi^{\star}_h}^{-1}}
\sqrt{n|\Acal|\EE_{(s,a)\sim \rho_{h+1}}\bracks{f_{h+1}^2(s,a)}
+ 4\lambda_n d} 
+\sqrt{|\Acal|\EE_{(s,a)\sim \rho_0}[f_0^2(s,a)]}.\\
\leq & \sum_{h=0}^{H-2}\EE_{(\tilde s,\tilde a)\sim d^{\hat \pi^n}_{P^{\star} ,h}} \|\phi^{\star}_h(\tilde s,\tilde a)\|_{\Sigma_{\gamma_h,\phi^{\star}_h}^{-1}}
\sqrt{n|\Acal|\zeta_n
+ 4\lambda_n d} 
+\sqrt{|\Acal|\zeta_n}.\\
\end{align*}
where in the second inequality, we use $\EE_{s,a\sim\rho_h} [f_h^2(s,a)] \leq \zeta_n$.
Then, by combining the above calculation of the term (a) and term (b) in Eq.~\pref{eq:regret_middle}, we have:
\begin{align*}
&V^{\pi^{\star}}_{P^{{\star}},r} -V^{\hat \pi^n}_{P^{{\star}},r}\\
=&\sum_{h=0}^{H-1}\EE_{(s_h,a_h)\sim d^{\hat \pi^n}_{P^{\star},h}}\left[b_h(s_h,a_h)\right]+ 
(2H+1)\sum_{h=0}^{H-1}\EE_{(s_h,a_h)\sim d^{\hat \pi^n}_{P^{\star},h}}\left[f_h(s_h,a_h)\right]
+\sqrt{|\Acal| \zeta_n}\\
\leq &\sum_{h=0}^{H-2}\EE_{(\tilde s,\tilde a)\sim d^{\hat \pi^n}_{P^{\star} ,h}} \|\phi^{\star}_h(\tilde s,\tilde a)\|_{\Sigma_{\gamma_h,\phi^{\star}_h}^{-1}}
\sqrt{|\Acal|\alpha_n^2d
+ 4\lambda_n d} 
+\sqrt{|\Acal|\alpha_1^2d/n} +\\ &\quad(2H+1)\sum_{h=0}^{H-2}\EE_{(\tilde s,\tilde a)\sim d^{\hat \pi^n}_{P^{\star} ,h}} \|\phi^{\star}_h(\tilde s,\tilde a)\|_{\Sigma_{\gamma_h,\phi^{\star}_h}^{-1}}
\sqrt{n|\Acal|\zeta_n
+ 4\lambda_n d} 
+(2H+1)\sqrt{|\Acal|\zeta_n} +\sqrt{|\Acal| \zeta_n}.
\end{align*}
Hereafter, we take the dominating term out. First,  recall
\begin{equation*}
    \alpha_n= O(\sqrt{n|\Acal|^2\zeta_n+ 4\lambda_n d+ n \zeta_n})
\end{equation*}
Second, recall that $\gamma^n_{h}(s,a) = \frac{1}{n} \sum_{i=0}^{n-1} d^{\pi_i}_h(s,a)$, and thus 
\begin{align*}
 &\sum_{n=0}^{N-1} \EE_{(\tilde s,\tilde a)\sim d^{ \hat \pi^n}_{P^{{\star}},h }} \|\phi^{{\star}}(\tilde s,\tilde a)\|_{\Sigma_{\gamma^n_h,\phi_h^{{\star}}}^{-1}}   \leq \sqrt{N\sum_{n=1}^N \EE_{(\tilde s,\tilde a)\sim d^{\hat \pi^n}_{P^{{\star}},h }}[\phi_h^{{\star}}(\tilde s,\tilde a)^{\top}\Sigma^{-1}_{\gamma^n_h,\phi_h^{{\star}}}\phi_h^{{\star}}(\tilde s,\tilde a)]} \tag{CS inequality}\\
& \leq \sqrt{N\prns{\ln\det(\sum_{n=1}^N\EE_{(\tilde s,\tilde a)\sim d^{\hat \pi^n}_{P^{{\star}},h }}[\phi_h^{{\star}}(\tilde s,\tilde a)  \phi_h^{{\star}}(\tilde s,\tilde a)^{\top} ]  )  - \ln\det(\lambda_1 I)  }  }   \tag{\pref{lem:reduction} and $\lambda_1\leq \cdots\leq \lambda_N$}\\ 
&\leq \sqrt{dN \ln \prns{1+\frac{N}{d\lambda_1} }}.   \tag{Potential function bound, \pref{lem:potential} noting $\|\phi_h^{{\star}}(s,a)\|_2\leq 1$ for any $(s,a)$.}
\end{align*}
Finally, 
The \textsc{RepLearn} guarntee gives
\begin{align*}
    \zeta_n = O\left(d^{2}\sqrt{\frac{\log\left(dn|\Phi|/\delta\right)}{k}}\right)
\end{align*}
Combining all of the above, we have
\begin{align*}
    \sum_{n=1}^N V^{\pi^{\star}}_{P^{{\star}},r} -V^{\hat \pi^n}_{P^{{\star}},r}\leq&  O\left(H^2|\Acal|^{3/2}d^{2}n^{3/4}\log\left(dn|\Phi|/\delta\right)^{1/4}\right)
\end{align*}

This concludes the proof and gives us a sample complexity of $ O\left(\frac{H^8|\Acal|^6d^{8}\log\left(d|\Phi|/\delta\epsilon\right)}{\epsilon^{4}}\right)$.
\end{proof}

\section{Representation Learning Analysis}

In this section we prove \pref{lem:rep_learn_informal}.
Below we omit the superscript $n$ and subscript $h$ when clear from the context. Denote
\begin{align}
    \Lcal_{\lambda,\Dcal}(\phi,w,f) &= \frac{1}{|\Dcal|}\sum_{(s,a,s')\in \Dcal}\left(\phi(s,a)^\top w - f(s')\right)^2+\frac{\lambda}{|\Dcal|}\|w\|^2_2\\
    \Lcal_{\Dcal}(\phi,w,f) &= \frac{1}{|\Dcal|}\sum_{(s,a,s')\in \Dcal}\left(\phi(s,a)^\top w - f(s')\right)^2\\
    \Lcal_{\rho}(\phi,w,f) &= \EE_{(s,a)\sim\rho,s'\sim P^\star(\cdot\mid s,a)}\left(\phi(s,a)^\top w - f(s')\right)^2
\end{align}

The following lemma quantifies the complexity of our discriminator class $\Fcal_h$ using its covering number.
\begin{lemma}[Covering Number of $\Fcal_h$]\label{lem:cov_num}
The $\gamma$-covering number of Eq.~$\Fcal_h$ defined in \pref{eq:Fcal} is at most $2|\Phi_{h+1}|^2\cdot \left(\frac{2\alpha_N}{\sqrt{\lambda_N}\gamma}\right)^{2d}$.
\end{lemma}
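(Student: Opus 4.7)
The plan is to decompose $\Fcal_h = \Fcal^{(1)}_h\cup\Fcal^{(2)}_h$, cover each sub-class separately, and add the two cover sizes. Within each class every function is determined by (i) a finite choice of features from $\Phi_{h+1}$ and (ii) a continuous parameter vector in a bounded $\ell_\infty$-ball. The discrete part contributes a factor $|\Phi_{h+1}|^2$ for $\Fcal^{(1)}_h$ and $|\Phi_{h+1}|$ for $\Fcal^{(2)}_h$, and the continuous part is handled by a standard grid net, once I establish that the parameter-to-function map (with the features held fixed) is $O(1)$-Lipschitz from $\ell_\infty$ on parameters to $\|\cdot\|_\infty$ on functions.

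For $\Fcal^{(1)}_h$, I fix $\phi,\phi'\in\Phi_{h+1}$ and restrict attention to the sub-family indexed by $(\theta,\theta')\in[-1,1]^{2d}$. Since every $\phi\in\Phi_{h+1}$ is one-hot in a Block MDP, $\|\phi(s,a)\|_1=1$, so a joint $\ell_\infty$-perturbation of size $\gamma_0$ in $(\theta,\theta')$ induces at most a $2\gamma_0$ change in $f(s)$ uniformly in $s$. Using the standard fact that $[-1,1]^d$ admits an $\ell_\infty$ $\gamma_0$-net of size $\lceil 1/\gamma_0\rceil^d$ and choosing $\gamma_0=\gamma/2$, the sub-family is $\gamma$-covered by $\lceil 2/\gamma\rceil^{2d}$ functions. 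Summing over the $|\Phi_{h+1}|^2$ choices of $(\phi,\phi')$ and using $c:=\alpha_N/\sqrt{\lambda_N}\geq 1$ gives a cover of $\Fcal^{(1)}_h$ of size at most $|\Phi_{h+1}|^2(2c/\gamma)^{2d}$.

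For $\Fcal^{(2)}_h$ the analysis is parallel but must absorb $\max_a$ and $\min\{\cdot,2\}$; both are $1$-Lipschitz, so perturbing $w\in[-c,c]^d$ by $\gamma_0$ in $\ell_\infty$ changes the first summand pointwise by at most $\gamma_0/(2H+1)$, and perturbing $w'\in[-1,1]^d$ by $\gamma_0$ changes the second summand by at most $\gamma_0$, for a total pointwise change $\leq 2\gamma_0$. The parameter space $[-c,c]^d\times[-1,1]^d$ admits a $\gamma/2$-net in $\ell_\infty$ of size at most $\lceil 2c/\gamma\rceil^{d}\lceil 2/\gamma\rceil^{d}\leq(2c/\gamma)^{2d}$, so $\Fcal^{(2)}_h$ is $\gamma$-covered by at most $|\Phi_{h+1}|(2c/\gamma)^{2d}$ functions.

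Adding the two contributions and using $|\Phi_{h+1}|\leq|\Phi_{h+1}|^2$ yields the claimed bound $2|\Phi_{h+1}|^2(2\alpha_N/(\sqrt{\lambda_N}\gamma))^{2d}$. The only nontrivial step is the Lipschitz calculation for $\Fcal^{(2)}_h$, where the $\max_a$/$\min$/normalization layers must be composed carefully to get a clean pointwise bound; once the one-hot structure is invoked the rest is routine grid counting, so I do not anticipate any real obstacle, and the constants are loose enough that minor slack in the ceilings is absorbed by the $c\geq 1$ observation.
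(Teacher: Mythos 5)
Your proposal is correct and follows essentially the same route as the paper: cover $\Fcal^{(1)}_h$ and $\Fcal^{(2)}_h$ separately via $\ell_\infty$-grids on the parameter balls, use the one-hot structure of $\phi\in\Phi_{h+1}$ (and the $1$-Lipschitzness of $\max_a$ and $\min\{\cdot,2\}$) to transfer parameter perturbations to uniform function perturbations, multiply by the discrete feature choices, and absorb the sum into the stated bound via $c=\alpha_N/\sqrt{\lambda_N}\geq 1$. Your handling of the scale (using a $\gamma/2$-net to account for perturbing two parameter vectors simultaneously) is if anything slightly more careful than the paper's, and the resulting constants agree.
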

\begin{proof}
Recall that the discriminator class is defined as follows:
\begin{align*}
    \Fcal^{(1)}_h &= \left\{f(s): =  \EE_{a\sim U(\Acal)}\left[\phi(s,a)^\top \theta-\phi'(s,a)^\top \theta'\right]  \;\Big\vert \; \phi,\phi'\in\Phi_{h+1}\; \max(\|\theta\|_\infty,\|\theta'\|_\infty)\leq 1\right\},\\
    \Fcal_h^{(2)} &= \left\{ f(s): \max_a \left( \frac{r_{h+1}(s,a) + \min\left\{ w^\top \phi(s,a)  , 2\right\}}{2H+1} + {w'}^{\top} \phi(s,a)     \right) \Big \vert  \phi,\phi'\in \Phi_{h+1};  \;\|w\|_\infty\leq c,\|w'\|_\infty\leq 1 \right\}\nonumber
\end{align*}
We cover $\Fcal^{(1)}_h$ and $\Fcal^{(2)}_h$ separately. For $\Fcal^{(1)}_h$, let $\Theta$ be an $\ell_\infty$-cover of the set $\{\theta\in\RR^d:\|\theta\|_\infty\leq 1\}$ at scale $\gamma$. Then, we know $|\Theta|\leq \left(\frac{2}{\gamma}\right)^d$. Define the $\gamma$-covering set of $\Fcal^{(1)}_h$ as
\begin{equation*}
    \tilde\Fcal^{(1)}_h = \left\{ s\mapsto \left(\EE_{a\sim U(\Acal)}\left[\phi(s,a)^\top \theta-\phi'(s,a)^\top \theta'\right] \right) \mid \phi,\phi'\in\Phi_{h+1}, \theta,\theta'\in\Theta\right\}
\end{equation*}
Then, we have that for any $f\in\Fcal^{(1)}_h$, there exists a $\tilde f\in \tilde \Fcal^{(1)}_h$, s.t. $\|f-\tilde f\|_\infty\leq \gamma$, where we use the fact that $\phi(s,a)$ are one-hot vectors, and we have $|\tilde \Fcal^{(1)}_h|\leq |\Phi_{h+1}|^2\cdot \left(\frac{2}{\gamma}\right)^{2d}$.

For $\Fcal^{(2)}_h$, similarly let $\Wcal$ be an $\ell_\infty$-cover of the set $\{w\in\RR^d:\|w\|_\infty\leq c\}$ at scale $\gamma$. Let $\Wcal'$ be an $\ell_\infty$-cover of the set $\{w'\in\RR^d:\|w'\|_\infty\leq 1\}$ at scale $\gamma$. Then, we know that $|\Wcal'|\leq \left(\frac{2}{\gamma}\right)^{d}$ and $|\Wcal|\leq \left(\frac{2c}{\gamma}\right)^{d},c=\alpha_N/\sqrt{\lambda_N}$.
Define the $\gamma$-covering set of $\Fcal^{(2)}_h$ as
\begin{equation*}
    \tilde\Fcal^{(2)}_h = \left\{s\mapsto \max_a \left( \frac{r_{h+1}(s,a) + \min\left\{ w^\top \phi(s,a)  , 2\right\}}{2H+1} + {w'}^{\top} \phi(s,a)     \right) \;\Big\vert\; \phi\in\Phi_{h+1}, w\in\Wcal, w'\in\Wcal'\right\}
\end{equation*}
Then, we have that for any $g\in\Gcal'$, there exists a $\tilde g'\in \tilde \Gcal'$, s.t. 
$\|g-\tilde g\|_\infty\leq \gamma$, 
and $|\tilde \Gcal'|\leq |\Phi_{h+1}|\cdot \left(\frac{2}{\gamma}\right)^{d}\left(\frac{2\alpha_N}{\sqrt{\lambda_N}\gamma}\right)^{d}$. 
So the $\gamma$-covering number of $\Fcal_h$ is
\begin{align*}
    |\tilde\Gcal|+|\tilde\Gcal'| =& |\Phi_{h+1}|^2\cdot \left(\frac{2}{\gamma}\right)^{2d}+|\Phi_{h+1}|\cdot \left(\frac{2}{\gamma}\right)^{d}\left(\frac{2\alpha_N}{\sqrt{\lambda_N}\gamma}\right)^{d}\\
    \leq & 2|\Phi_{h+1}|^2\cdot \left(\frac{2\alpha_N}{\sqrt{\lambda_N}\gamma}\right)^{2d}
\end{align*}
where the last step is due to $\alpha_N/\sqrt{\lambda_N}\geq 1$.
\end{proof}

\begin{lemma}[Uniform Convergence for Square Loss]\label{lem:fastrate_sqloss}
Let there be a dataset $\Dcal\coloneqq \{(s_i,a_i,s'_i)\}_{i=1}^k$ collected in $k$ episodes. Denote that the data generating distribution in iteration $i$ by $d_i$, and $\rho = \frac{1}{k}\sum_{i=0}^k d_i$. Note that $d_i$ can depend on the randomness in episodes $0,...,i-1$. For a finite feature class $\Phi$ and a discriminator class $\Fcal:\Scal\rightarrow [0,L]$ with $\gamma$-covering number $\|\Fcal\|_\gamma$, we will show that, with probability at least $1-\delta$:
\begin{align*}
    &\left|\left[\Lcal_\rho(\phi,w,f)-\Lcal_\rho(\phi^\star,\theta^\star_f,f)\right]-\left[\Lcal_\Dcal(\phi,w,f)-\Lcal_\Dcal(\phi^\star,\theta^\star_f,f)\right]\right|\\
    \leq& \frac{1}{2}\left[\Lcal_\rho(\phi,w,f)-\Lcal_\rho(\phi^\star,\theta^\star_f,f)\right]+\frac{28L^2\log(\frac{2(4k)^d\cdot|\Phi|\cdot\|\Fcal\|_{L/2k}}{\delta})}{k}
\end{align*}
for all $\phi\in\Phi$, $\|w\|_\infty\leq L$ and $f\in\Fcal$, where recall that $\phi^\star$ is the true feature and $\theta^\star_f$ is defined as $\EE_{s'\sim P^\star(\cdot|s,a)}[f(s')|s,a]=\langle \phi^{\star}(s,a),\theta_f^{\star} \rangle$. 
\end{lemma}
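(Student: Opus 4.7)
The plan is to prove this as a standard one-sided Bernstein/Freedman uniform-convergence bound for excess square loss, exploiting the fact that at the population minimizer $(\phi^\star,\theta^\star_f)$ the Bayes prediction $\langle\phi^\star(s,a),\theta^\star_f\rangle=\EE[f(s')\mid s,a]$ is realized. For each fixed triple $(\phi,w,f)$ define the per-sample excess random variable
\begin{equation*}
X_i(\phi,w,f) := \bigl(\phi(s_i,a_i)^\top w - f(s_i')\bigr)^2 - \bigl(\phi^\star(s_i,a_i)^\top \theta^\star_f - f(s_i')\bigr)^2.
\end{equation*}
Because $f(s_i')$ is conditionally mean-matched by $\phi^{\star\top}\theta^\star_f$, the standard algebraic identity for square loss gives $\EE_i[X_i\mid\mathcal F_{i-1}]=\EE_{(s,a)\sim d_i}(\phi^\top w-\phi^{\star\top}\theta^\star_f)^2$, and the same identity combined with $|f|\le L$, $|\phi^\top w|\le L$, $|\phi^{\star\top}\theta^\star_f|\le L$ yields the key \emph{Bernstein condition}
\begin{equation*}
\mathrm{Var}_i[X_i\mid\mathcal F_{i-1}] \;\le\; 16L^2\cdot \EE_i[X_i\mid\mathcal F_{i-1}], \qquad |X_i|\le 4L^2.
\end{equation*}

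Given this, I would apply Freedman's inequality to the martingale difference sequence $X_i-\EE_i[X_i\mid\mathcal F_{i-1}]$ (which handles the adaptive collection since $d_i$ depends on the history). Freedman's bound gives, with probability $1-\delta'$,
\begin{equation*}
\Bigl|\tfrac1k\sum_i X_i - \tfrac1k\sum_i \EE_i[X_i\mid\mathcal F_{i-1}]\Bigr|
\;\le\; \sqrt{\tfrac{2\cdot 16L^2 \cdot \bar\mu \cdot \log(1/\delta')}{k}} + \tfrac{8L^2\log(1/\delta')}{3k},
\end{equation*}
where $\bar\mu=\frac1k\sum_i\EE_i[X_i\mid\mathcal F_{i-1}]=\Lcal_\rho(\phi,w,f)-\Lcal_\rho(\phi^\star,\theta^\star_f,f)$ is exactly the population excess loss. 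Applying AM-GM ($2\sqrt{ab}\le a/(2\cdot 16L^2)\cdot 16L^2\cdot 2 + \cdots$) converts the $\sqrt{\bar\mu\cdot\log/k}$ term into $\tfrac12\bar\mu + O(L^2\log(1/\delta')/k)$, producing the claimed form with the constant $28L^2$.

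To upgrade to a uniform bound, I would union-bound over covers. For $\phi\in\Phi$ use the finite cardinality directly. For $w$ with $\|w\|_\infty\le L$ in $\mathbb R^d$, a grid at resolution $L/(2k)$ has size at most $(4k)^d$; this accounts for the $(4k)^d$ factor in the statement. For $f\in\Fcal$ use the $L/(2k)$-cover of size $\|\Fcal\|_{L/2k}$. The discretization errors in the loss are each at most $O(L\cdot L/k)=O(L^2/k)$ per sample, hence are absorbed into the $O(L^2\log(\cdot)/k)$ term, and the total failure probability after union bound scales the log factor by $\log\bigl(|\Phi|\cdot(4k)^d\cdot\|\Fcal\|_{L/2k}/\delta\bigr)$.

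The main obstacle I anticipate is twofold. First, carrying out the Freedman step cleanly under the adaptive data collection and verifying that the conditional second-moment bound really is controlled by the conditional mean (rather than by some worst-case variance) — this relies on the fact that square-loss excess has the self-bounding property once the Bayes optimum $\theta^\star_f$ is \emph{exactly} $\EE[f(s')\mid s,a]$, which in turn relies on realizability $\phi^\star\in\Phi$ and the linear-in-$\phi^\star$ structure of conditional expectations in a low-rank / Block MDP. Second, the bookkeeping between $\ell_\infty$-covers of $w$, the $L^\infty$-cover of $\Fcal$, and the one-sided nature of the final inequality must be handled so that discretization errors on all three objects are truly $O(L^2/k)$ per sample; an isotropic $\ell_\infty$ cover of the $d$-dimensional ball of radius $L$ at scale $L/(2k)$ makes this cleanest, which is what produces the stated $(4k)^d|\Phi|\|\Fcal\|_{L/2k}$ covering complexity.
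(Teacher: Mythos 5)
Your proposal is correct and follows essentially the same route as the paper's proof: the same excess-loss martingale $X_i$ (the paper's $Y_i$), the same self-bounding variance condition $\VV[Y_i\mid\Fcal_{i-1}]\le 16L^2\,\EE[Y_i\mid\Fcal_{i-1}]$ derived from Bayes optimality of $\phi^{\star\top}\theta^\star_f$, Freedman/Azuma--Bernstein for the adaptive data, AM--GM to absorb the square-root term into $\tfrac12$ of the population excess loss, and a union bound over $\Phi$, an $\ell_\infty$-grid of $w$ of size $(4k)^d$, and an $L/2k$-cover of $\Fcal$ with the discretization error absorbed into the $O(L^2\log(\cdot)/k)$ term.
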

\begin{proof}
Note that in \textsc{RepLearn}, everything is happening at a fixed time step and we drop the time step indexing for brevity.
To start, we focus on a given $f\in\Fcal$.
We first give a high probability bound on the following deviation term:
\begin{align*}
    \abr{ \Lcal_{\rho}( \phi,w,f) - \Lcal_{\rho}(  \phi^\star,\theta^\star_f,f) - \rbr{ \Lcal_{\Dcal}( \phi,w,f) - \Lcal_{\Dcal}( \phi^\star,\theta^\star_f,f)}}.
\end{align*}
Denote $g(s_i,a_i) = \inner{\phi(s_i,a_i)}{w}$ and $g^\star(s_i,a_i) = \inner{\phi^\star(s_i,a_i)}{\theta^\star_f}$. 

At episode $i$, let $\Fcal_{i-1}$ be the $\sigma$-field generated by all the random variables over the first
$k-1$ episodes, for the random variable $Y_i \coloneqq \rbr{g(s_i,a_i) - f(s'_i)}^2 - \rbr{g^\star(s_i,a_i) - f(s'_i)}^2$, we have:
\begin{align*}
    \EE[Y_i|\Fcal_{i-1}] = {} & \EE \sbr{\rbr{g(s_i,a_i) - f(s'_i)}^2 - \rbr{g^\star(s_i,a_i) - f(s'_i)}^2} \\
    = {} & \EE \sbr {\rbr{g(s_i,a_i) + g^\star(s_i,a_i) - 2f(s'_i)} \rbr{g(s_i,a_i) - g^\star(s_i,a_i)} } \\
    = {} & \EE \sbr {\rbr{g(s_i,a_i) - g^\star(s_i,a_i)}^2 }.
\end{align*}
Here the conditional expectation is taken according to the distribution $[d_i|\Fcal_{i-1}]$. The last equality is due to the fact that
\begin{align*}
    \EE \sbr {\rbr{g^\star(s_i,a_i) - f(s'_i)} \rbr{g(s_i,a_i) - g^\star(s_i,a_i)}} = \EE_{s_i,a_i}\EE_{s'_i|s_i,a_i} \sbr {\rbr{g^\star(s_i,a_i) - f(s'_i)} \rbr{g(s_i,a_i) - g^\star(s_i,a_i)}}=0
\end{align*}

Next, for the conditional variance of the random variable, we have:
\begin{align*}
    \VV[Y_i|\Fcal_{i-1}] \le {} & \EE\sbr{Y_i^2|\Fcal_{i-1}} = {} \EE \sbr {\rbr{g(s_i,a_i) + g^\star(s_i,a_i) - 2f(s'_i)}^2 \rbr{g(s_i,a_i) - g^\star(s_i,a_i)}^2 } \\
    \le {} & 16L^2 \EE\sbr{\rbr{g(s_i,a_i) - g^\star(s_i,a_i)}^2} \\
    \le {} & 16L^2 \EE[Y_i|\Fcal_{i-1}].
\end{align*}
Noticing $Y\in[-4L^2, 4L^2]$. 

From here on, we use $\EE[Y]$ to denote the conditional expectation and $\VV[Y]$ to denote the conditional variance for all $Y_i$, since they are all the same.
Now, applying Azuma-Bernstein's inequality on $Y_1+...+Y_k$ with respect to filtration $\{\Fcal_k\}_{k\geq 0}$, with probability at least $1-\delta'$, we can bound the deviation term above as:
\begin{align*}
    & \abr{\Lcal_{\rho}( \phi,w,f) - \Lcal_{\rho}(  \phi^\star,\theta^\star_f,f) - \rbr{ \Lcal_{\Dcal}( \phi,w,f) - \Lcal_{\Dcal}( \phi^\star,\theta^\star_f,f)}} \\  
    \le{}&  \sqrt{\frac{2\VV[Y]\log \frac{2}{\delta'}}{k}} + \frac{16L^2 \log \frac{2}{\delta'}}{3k}  \\
    \le{}&  \sqrt{\frac{32L^2\EE[Y] \log \frac{2}{\delta'}}{k}} + \frac{16L^2 \log \frac{2}{\delta'}}{3k}
\end{align*}
where in the last inequality is obtained by choosing $\gamma = \frac{L}{2k}$.

Further, consider a finite point-wise cover of the function class $\Gcal \coloneqq \{g(s,a) = \inner{\phi(s,a)}{w} : \phi \in \Phi, \|w\|_\infty \le L\}$. Note that, with a $\ell_\infty$-cover $\overline{\Wcal}$ of $\Wcal = \{\|w\|_\infty \le L\}$ at scale $\gamma$, we have for all $(s,a)$ and $\phi \in \Phi$, there exists $\bar{w} \in \overline{\Wcal}$, $|\langle \phi(s,a), w - \bar{w}\rangle| \le \gamma$, and we have $|\Wcal|=\left(\frac{2L}{\gamma}\right)^d$.
Let $\tilde\Fcal$ be a $\gamma$-covering set of $\Fcal$.

Then, applying a union bound over elements in $\Phi\times\overline{\Wcal}\times\tilde \Fcal$, with probability $1-|\Phi|\cdot|\overline{\Wcal}|\cdot|\tilde \Fcal|\delta'$, for all $w \in \Wcal$, $f \in \Fcal$, we have:
\begin{align*}
    & \abr{\Lcal_{\rho}( \phi,w,f) - \Lcal_{\rho}(  \phi^\star,\theta^*_f,f) - \rbr{ \Lcal_{\Dcal}( \phi,w,f) - \Lcal_{\Dcal}( \phi^\star,\theta^*_f,f)}} \\
    \le{}&  \abr{\Lcal_{\rho}( \phi,\bar{w},f) - \Lcal_{\rho}(  \phi^\star,\theta^\star_f,f) - \rbr{ \Lcal_{\Dcal}( \phi,\bar{w},f) - \Lcal_{\Dcal}( \phi^\star,\theta^\star_f,f)}} + 4L\gamma \\
    \le{}&  \sqrt{\frac{32L^2\EE[Y] \log \frac{2}{\delta'}}{k}}  + \frac{16L^2 \log \frac{2}{\delta'}}{3k} + 4L\gamma \\
    \le{}&  \frac{1}{2}\EE[Y_{\bar w}] + \frac{16L^2 \log \frac{2}{\delta'}}{k}+ \frac{16L^2 \log \frac{2}{\delta'}}{3k} + 4L\gamma\\
    \le{}&  \frac{1}{2}\EE[Y_{w}] + 2L\gamma+\frac{16L^2 \log \frac{2}{\delta'}}{k}+ \frac{16L^2 \log \frac{2}{\delta'}}{3k} + 4L\gamma\\
    \le{}&  \frac{1}{2}\rbr{\Lcal_{\rho}( \phi,w,f) - \Lcal_{\rho}(  \phi^\star,\theta^\star_f,f)} + \frac{22L^2 \log \frac{2}{\delta'}}{k} + 6L\gamma\\
    \le{}&  \frac{1}{2}\rbr{\Lcal_{\rho}( \phi,w,f) - \Lcal_{\rho}(  \phi^\star,\theta^\star_f,f)} + \frac{28L^2 \log \frac{2}{\delta'}}{k} \tag{setting $\gamma = L/k$}
\end{align*}
where we add subscript to $Y$ to distinguish $Y_{\bar w} \coloneqq \rbr{\inner{\phi(s,a)}{\bar w} - f(s')}^2 - \rbr{g^*(s,a) - f(s')}^2$ from $Y_{ w} \coloneqq \rbr{\inner{\phi(s,a)}{w} - f(s'_i)}^2 - \rbr{g^*(s_i,a_i) - f(s'_i)}^2$.

Finally, setting $\delta = \delta'/\rbr{|\Phi||\overline{\Wcal}||\tilde\Fcal|}$, we get $\log \frac{2}{\delta'}\le \log \frac{2(4k)^{d}|\Phi||\tilde\Fcal|}{\delta}$. This completes the proof.
\end{proof} 

We will see now how the above lemma can be adapted to the regularized objective.

Below, we use $\hat w^t_{f}$ to denote $ \argmin_{w}\Lcal_{\lambda,\Dcal}(\phi^t,w,f)$, $\hat w^t_{i}$ a shorthand for $\hat w^t_{f^i}$, and $\theta^{\star}_i = \argmin_{\theta}\Lcal_{\rho}(\phi^\star,\theta,f^i)$.

\begin{lemma}[Deviation Bounds for \pref{alg:rep_learn}]\label{lem:dev_bound} Let $\tilde \epsilon = \frac{56L^2\log(\frac{2(4k)^d\cdot|\Phi|\cdot\|\Fcal\|_{L/2k}}{\delta})}{k}$. If \pref{alg:rep_learn} is called with a dataset $\Dcal$ of size $k$ and terminal loss cutoff $\ell = \frac{3}{2}\epsilon_1+\tilde \epsilon+ \frac{2\lambda L^2d}{k}$, then with probability at least $1-\delta$, for any $f\in\Fcal\subset (S\to [0,L])$ and $t\leq T$, we have
\begin{align*}
    \sum_{i\leq t}\EE_{\rho}\left[\left(\phi^t(s,a)^\top \hat w^t_i-\phi^{\star}(s,a)^\top \theta^{\star}_i\right)^2\right]&\leq t\left(\tilde{\epsilon} + \frac{2\lambda L^2d}{k}\right)\\
    \EE_{\rho}\left[\left(\phi^t(s,a)^\top w-\phi^{\star}(s,a)^\top \theta^{\star}_{t+1}\right)^2\right]&\geq \epsilon_1 \text{, for all }w\in\RR^d.
\end{align*}
Furthermore, at termination, the learned feature $\phi^t$ satisfies:
\begin{equation*}
    \max_{f\in\Fcal} \EE_{\rho} \left[(\phi^{t\top} \hat w^t_f-\phi^{{\star}\top}\theta^{\star}_f)^2\right]\leq 3\epsilon_1 + 3\tilde{\epsilon} + \frac{4\lambda L^2d}{k}.
\end{equation*}
\end{lemma}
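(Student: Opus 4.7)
The proof is driven by combining the uniform convergence bound in \pref{lem:fastrate_sqloss} with the optimality structure of the two subroutines in \pref{alg:rep_learn}. Two pieces of scaffolding are used throughout. First, the orthogonality identity $\Lcal_\rho(\phi,w,f) - \Lcal_\rho(\phi^\star,\theta^\star_f,f) = \EE_{(s,a)\sim\rho}[(\phi(s,a)^\top w - \phi^\star(s,a)^\top\theta^\star_f)^2]$ converts excess population losses into the squared prediction errors appearing in the conclusion; it follows because $\EE[f(s')\mid s,a] = \phi^\star(s,a)^\top\theta^\star_f$ under the low-rank (Block MDP) property. Second, under the Block MDP one-hot structure, every $\theta^\star_f$ satisfies $\|\theta^\star_f\|_\infty \leq L$ (each coordinate is a conditional expectation of $f\in[0,L]$) and hence $\|\theta^\star_f\|_2^2 \leq L^2 d$; the same bound applies to the population minimizer $w^\circ \coloneqq \argmin_w \Lcal_\rho(\phi^t,w,f)$. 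Write $A_{w,f}\coloneqq \Lcal_\rho(\phi^t,w,f)-\Lcal_\rho(\phi^\star,\theta^\star_f,f)$ and $B_{w,f}$ for its empirical counterpart.

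For the first inequality, I will exploit the optimality of $\phi^t$ in \featurestep. By the minimality used to define $\phi^t$ (against discriminators $f^0,\dots,f^{t-1}$) together with realizability $\phi^\star\in\Phi$, one has $\sum_{i<t}\Lcal_{\lambda,\Dcal}(\phi^t,\hat w^t_i,f^i) \leq \sum_{i<t}\Lcal_{\lambda,\Dcal}(\phi^\star,\theta^\star_i,f^i)$. Dropping the nonnegative $\lambda\|\hat w^t_i\|_2^2/k$ on the left and using $\|\theta^\star_i\|_2^2\leq L^2 d$ on the right yields $\sum_i B_{\hat w^t_i,f^i}\leq t\lambda L^2 d/k$. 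Applying \pref{lem:fastrate_sqloss} termwise in the forward direction $A\leq 2B+\tilde\epsilon$ and summing produces the claimed $t(\tilde\epsilon+2\lambda L^2 d/k)$ bound.

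For the second inequality, suppose the algorithm does not terminate at iteration $t$, so $\mathrm{Obj}(f^t)>\ell$, where $\mathrm{Obj}(f)\coloneqq \min_w \Lcal_{\lambda,\Dcal}(\phi^t,w,f)-\min_{\tilde\phi,\tilde w}\Lcal_{\lambda,\Dcal}(\tilde\phi,\tilde w,f)$. The plan is to upper-bound the first min by $\Lcal_\Dcal(\phi^t,w^\circ,f^t)+\lambda L^2 d/k$ (evaluating at $w^\circ$ and using the norm bound) and to lower-bound the second min by $\Lcal_\Dcal(\phi^\star,\theta^\star_t,f^t)-\tilde\epsilon/2$. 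The latter follows by applying \pref{lem:fastrate_sqloss} at any empirical minimizer $(\tilde\phi^\circ,\tilde w^\circ)$ and observing that $(\phi^\star,\theta^\star_t)$ globally minimizes $\Lcal_\rho(\cdot,\cdot,f^t)$, so its population excess $A\geq 0$ forces $B\geq -\tilde\epsilon/2$. Combining these gives $B_{w^\circ,f^t}>\ell-\lambda L^2 d/k-\tilde\epsilon/2$. The reverse direction of \pref{lem:fastrate_sqloss}, $A\geq \tfrac{2}{3}(B-\tilde\epsilon/2)$, applied at $w^\circ$, then yields $A_{w^\circ,f^t}>\epsilon_1$ after substituting $\ell=\tfrac{3}{2}\epsilon_1+\tilde\epsilon+2\lambda L^2 d/k$; since $w^\circ$ minimizes $A_{\cdot,f^t}$ the inequality holds for every $w$.

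For the third inequality, termination at $\phi^t$ implies $\mathrm{Obj}(f)\leq \ell$ for every $f\in\Fcal_h$, because $f^t$ is the maximizer. For any such $f$, evaluating the inner min at $(\phi^\star,\theta^\star_f)$ and dropping the regularizer on $\hat w^t_f$ gives $\mathrm{Obj}(f)\geq B_{\hat w^t_f,f}-\lambda L^2 d/k$, hence $B_{\hat w^t_f,f}\leq \ell+\lambda L^2 d/k$. Applying \pref{lem:fastrate_sqloss} forward then delivers the claimed $3\epsilon_1+3\tilde\epsilon+O(\lambda L^2 d/k)$ bound. The main obstacle throughout is the asymmetric two-sided nature of \pref{lem:fastrate_sqloss}: the forward direction ($A\leq 2B+\tilde\epsilon$) drives claims 1 and 3, while the weaker reverse direction ($A\geq \tfrac{2}{3}(B-\tilde\epsilon/2)$) drives claim 2, and this mismatch is precisely what pins down the $\tfrac{3}{2}\epsilon_1$ coefficient and the $\tilde\epsilon+2\lambda L^2 d/k$ offsets in the termination threshold $\ell$. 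A secondary but essential step is the $\|\theta^\star_f\|_2,\|w^\circ\|_2\leq L\sqrt d$ bound, which is where the Block MDP one-hot feature assumption enters and allows the regularization terms to be absorbed cleanly into lower-order additive slack.
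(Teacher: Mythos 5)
Your proposal is correct and takes essentially the same route as the paper's proof: the same excess-risk identity $\Lcal_\rho(\phi,w,f)-\Lcal_\rho(\phi^\star,\theta^\star_f,f)=\EE_\rho[(\phi^\top w-\phi^{\star\top}\theta^\star_f)^2]$, the same interplay between the forward and reverse directions of \pref{lem:fastrate_sqloss} with ERM optimality (claims 1 and 3) and with the failed termination test (claim 2), and the same $L^2d$ norm bounds to absorb the ridge terms. The only cosmetic gaps are indexing conventions and that you should also explicitly invoke $\|\hat w^t_i\|_\infty\leq L$ via \pref{lem:bound_w} when applying the uniform-convergence lemma at the empirical ridge solutions, exactly as the paper does.
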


\begin{proof}
We begin by using the result in \pref{lem:fastrate_sqloss} such that, with probability at least $1-\delta$, for all $\|w\|_\infty \le L$, $\phi \in \Phi$ and $f \in \Fcal$, we have
\begin{align*}
\left|\left[\Lcal_\rho(\phi,w,f)-\Lcal_\rho(\phi^{\star},\theta^{\star}_f,f)\right]-\left[\Lcal_\Dcal(\phi,w,f)-\Lcal_\Dcal(\phi^{\star},\theta^{\star}_f,f)\right]\right|\leq \frac{1}{2}\left[\Lcal_\rho(\phi,w,f)-\Lcal_\rho(\phi^{\star},\theta^{\star}_f,f)\right]+\tilde\epsilon/2.
\end{align*}

Thus, for the feature selection step in iteration $t$, with probability at least $1-\delta$ we have: 
\begin{align*}
    &\sum_{i\leq t} \EE_\rho \sbr{\rbr{ \phi^{t\top} \hat w^t_i - \phi^{{\star}\top} \theta^{\star}_i}^2}\\
    = {} & \sum_{i\leq t}\rbr{ \Lcal_\rho(\phi^t, \hat w^t_i, f^i) - \Lcal_\rho(\phi^{\star}, \theta^{\star}_i, f^i)} \tag{since $\EE_{s'\sim P^\star(s,a)} f^i  = (\theta_i^\star)^\top \phi^*(s,a)$}\\
    \le {} & \sum_{i\leq t} 2  \rbr{ \Lcal_{\Dcal}( \phi^t,\hat w^t_i,f^i) - \Lcal_{\Dcal}( \phi^{\star},\theta^{\star}_i,f^i)} + t\tilde{\epsilon} \tag{\pref{lem:fastrate_sqloss}, and $\|\hat w^t_i\|_\infty\leq L$ by \pref{lem:bound_w}}\\
    \leq {} & \sum_{i\leq t} 2  \rbr{ \Lcal_{\lambda,\Dcal}( \phi^t,\hat w^t_i,f^i) - \Lcal_{\lambda,\Dcal}( \phi^{\star},\theta^{\star}_i,f^i)+\frac{\lambda}{k}\|\theta_i^\star\|^2_2} + t\tilde{\epsilon} \\
    \le {} & t\left(\tilde{\epsilon} + \frac{2\lambda L^2d}{k}\right)\tag{by the optimality of $\phi^t, \hat w^t_i$ under $\Lcal_{\lambda,\Dcal}(\cdot,\cdot,f^i)$, see \pref{alg:rep_learn}~\featurestep},
\end{align*} 
which means the first inequality in the lemma statement holds. Here, we use $\|\theta^*_i\|^2_2\leq L^2d$, which is easily derived using the block MDP assumption. 

For the discriminator selected at iteration $t$, let $\bar{w} \coloneqq \argmin_{w} \Lcal_{\lambda,\Dcal} (\phi^t, w, f^{t+1})$. Using the same sample size for the adversarial test function at each non-terminal iteration with loss cutoff $\ell$, for any vector $w \in \RR^d$, $\|w\|_\infty\leq L$ we get:
\begin{align*}
    &\EE_\rho \sbr{\rbr{ \phi^{t\top} w - \phi^{{\star}\top} \theta^{\star}_{t+1}}^2} \\
    = {} & \Lcal_\rho (\phi^t, w, f^{t+1}) - \Lcal_\rho (\phi^{\star}, \theta^{\star}_{t+1}, f^{t+1}) \\
    \ge {} & \frac{2}{3} \rbr{\Lcal_{\Dcal} (\phi^t, w, f^{t+1}) - \Lcal_{\Dcal} (\phi^{\star}, \theta^{\star}_{t+1}, f^{t+1})} - \frac{\tilde{\epsilon}}{3}\tag{\pref{lem:fastrate_sqloss}}\\
    \ge {} & \frac{2}{3} \rbr{\Lcal_{\lambda,\Dcal} (\phi^t, w, f^{t+1}) - \Lcal_{\lambda,\Dcal} (\phi^{\star}, \theta^{\star}_{t+1}, f^{t+1})-\frac{\lambda L^2d}{k}} - \frac{\tilde{\epsilon}}{3}\\
    \ge {} & \frac{2}{3} \rbr{\Lcal_{\lambda,\Dcal} (\phi^t, \bar w, f^{t+1}) - \Lcal_{\lambda,\Dcal} (\phi^{\star}, \theta^{\star}_{t+1}, f^{t+1})-\frac{\lambda L^2d}{k}} - \frac{\tilde{\epsilon}}{3}\tag{$\bar{w} \coloneqq \argmin_{w} \Lcal_{\lambda,\Dcal} (\phi^t, w, f^{t+1})$}\\
    \ge {} & \frac{2\ell}{3} + \frac{2}{3} \rbr{\min_{\tilde{\phi} \in \Phi_h, \tilde w} \Lcal_{\lambda,\Dcal} ( \tilde \phi, \tilde w, f^{t+1}) - \Lcal_{\lambda,\Dcal} (\phi^{\star}, \theta^{\star}_{t+1}, f^{t+1})} -\frac{2\lambda L^2d}{3k} - \frac{\tilde{\epsilon}}{3}\\
    \ge {} & \frac{2\ell}{3} + \frac{2}{3} \rbr{\min_{\tilde{\phi} \in \Phi_h, \tilde w} \Lcal_{\Dcal} ( \tilde \phi, \tilde w, f^{t+1}) - \Lcal_{\Dcal} (\phi^{\star}, \theta^{\star}_{t+1}, f^{t+1})-\frac{\lambda L^2d}{k}} -\frac{2\lambda L^2d}{3k} - \frac{\tilde{\epsilon}}{3}\\
    \ge {} & \frac{2\ell}{3} + \frac{1}{3} \rbr{\Lcal (\tilde \phi_{t+1}, \tilde w_{t+1}, f^{t+1}) - \Lcal (\phi^{\star}, \theta^{\star}_{t+1}, f^{t+1})} -\frac{4\lambda L^2d}{3k}- \frac{2\tilde{\epsilon}}{3} \tag{ $\|\tilde w_{t+1}\|_\infty\leq L$,  $\|\theta^*_{t+1}\|_\infty\leq L$, \pref{lem:fastrate_sqloss}}\\ 
    \ge {} & \frac{2\ell}{3} -\frac{4\lambda L^2d}{3k} - \frac{\tilde{2\epsilon}}{3}. 
\end{align*}
where $(\tilde \phi_{t+1}, \tilde w_{t+1})$ denote $\argmin_{\tilde{\phi} \in \Phi_h, \tilde w} \Lcal_{\lambda,\Dcal} ( \tilde \phi, \tilde w, f^{t+1})$.
In the first inequality, we invoke \pref{lem:fastrate_sqloss} to move to empirical losses. In the fourth inequality, we add and subtract the bias correction term along with the fact that the termination condition is not satisfied for $f^{t+1}$. In the next step, we again use \pref{lem:fastrate_sqloss} for the bias correction term for $f^{t+1}$.

Thus, if we set the cutoff $\ell$ for test loss to $3\epsilon_1/2 + \tilde{\epsilon} + \frac{2\lambda L^2d}{k}$, for a non-terminal iteration $t$, for any $w \in \RR^d$ with $\|w\|_\infty \le L$, we have:
\begin{align}\label{eq:bounded_gap}
    \EE_\rho \sbr{\rbr{ \phi^{t\top} w - \phi^{{\star}\top} \theta^{\star}_{t+1}}^2} \ge \epsilon_1.
\end{align}
Now, since we know $\phi^{{\star}\top} \theta^{\star}_{t+1}=\EE_{s'\sim P^\star(s,a)} f^{t+1}\leq L$, and we know the Bayes optimal solution
\begin{equation*}
    w_{\phi^t} = \argmin_{w\in\RR^d}\EE_\rho \sbr{\rbr{ \phi^{t\top} w - \phi^{{\star}\top} \theta^{\star}_{t+1}}^2}
\end{equation*}
satisfies $\|w_{\phi^t}\|_\infty\leq L$ by \pref{lem:bound_w}. Therefore, Eq.~\eqref{eq:bounded_gap} also applies to $w_{\phi^t}$, and since $w_{\phi^t}$ is the minimizer, we have in fact for all $w\in\RR^d$, 
\begin{equation*}
    \EE_\rho \sbr{\rbr{ \phi^{t\top} w - \phi^{{\star}\top} \theta^{\star}_{t+1}}^2} \ge \epsilon_1.
\end{equation*}
which implies the second inequality in the lemma statement holds.

At the same time, for the last iteration $t$, for all $f \in \Fcal$, define $\hat w_f = \argmin_{w} \Lcal_{\lambda,\Dcal}(\phi^t,w,f)$, then the feature $\phi^t$ satisfies:
\begin{align*}
    & \Lcal_\rho(\phi^t,\hat w_f,f)-\Lcal_\rho(\phi^{\star},\theta^{\star}_f,f)\\ 
    \le {} & 2\rbr{\Lcal_{\lambda,\Dcal}(\phi^t, \hat w_f, f) - \Lcal_{\lambda,\Dcal}(\phi^{\star}, \theta^{\star}_f, f)+\frac{\lambda L^2d}{k}} + \tilde{\epsilon}\tag{\pref{lem:fastrate_sqloss}, and $\|\theta^*_f\|_\infty\leq L$ by \pref{lem:bound_w}}\\
    \le {} & 2 \rbr{\Lcal_{\lambda,\Dcal}(\phi^t, \hat w_f, f) - \Lcal_{\lambda,\Dcal}(\tilde{\phi}_f, \tilde{w}_f, f) + \Lcal_{\lambda,\Dcal}(\tilde{\phi}_f, \tilde{w}_f, f) - \Lcal_{\lambda,\Dcal}(\phi^{\star}, \theta^{\star}_f, f)} +\frac{2\lambda L^2d}{k} + \tilde{\epsilon}\\
    \le {} & 2 \rbr{\Lcal_{\lambda,\Dcal}(\phi^t, \hat w_f, f) - \Lcal_{\lambda,\Dcal}(\tilde{\phi}_f, \tilde{w}_f, f)} +\frac{2\lambda L^2d}{k} + \tilde{\epsilon} \tag{optimality of $\tilde{\phi}_f, \tilde{w}_f$ on $\Lcal_{\lambda,\Dcal}(\cdot,\cdot,f)$.}\\
    \le {} & 2\ell + \frac{2\lambda L^2d}{k} + \tilde{\epsilon}\\
    = {} & 3\epsilon_1 + 3\tilde{\epsilon} + \frac{4\lambda L^2d}{k}.
\end{align*}
This gives us the third inequality in the lemma, thus completes the proof.
\end{proof}

\begin{theorem}[Sample and Iteration Complexity of \pref{alg:rep_learn}]\label{thm:rep_learn} Let $\epsilon_1$ be set to $ 16\sqrt{2}Ld^{3/2}\tilde\epsilon^{1/2}$ and the termination threshold be set to $\ell = \frac{3}{2}\epsilon_1+\tilde \epsilon+ \frac{2\lambda L^2d}{k}$ as in \pref{lem:dev_bound}, then \pref{alg:rep_learn} terminates in at most $T = \sqrt{\frac{L^2d}{2\tilde\epsilon}}$ iterations, and returns a $\phi^t$ such that
\begin{equation*}
    \max_{f\in\Fcal} \EE_{\rho} \left[(\phi^{t\top} \hat w_f-\phi^{{\star}\top}\theta^{\star}_f)^2\right]\leq 75Ld^{3/2}\tilde\epsilon^{1/2}.
\end{equation*}
For $\Fcal_h$ defined in \pref{lem:cov_num}, we have
\begin{align*}
    \max_{f\in\Fcal_h} \EE_{\rho} \left[(\phi^{t\top} \hat w_f-\phi^{{\star}\top}\theta^{\star}_f)^2\right]\leq \zeta_n\coloneqq O\left(d^{2}\sqrt{\frac{\log\left(dk|\Phi|/\delta\right)}{k}}\right)
\end{align*}
\end{theorem}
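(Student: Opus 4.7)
The plan is to combine the per-iteration deviation bounds of Lemma~\pref{lem:dev_bound} with an elliptical-potential argument in $\RR^d$ to control the number of non-terminating iterations, and then invoke the third bound of the same lemma to convert the termination condition into the stated population error bound. Write $\tau := \tilde\epsilon + 2\lambda L^2 d/k$, $q_i(s,a) := \phi^{\star\top}(s,a)\theta^\star_i$ for the true conditional expectation of the $i$-th discriminator selected by the algorithm, and $v_i := (\Lambda^\star)^{1/2}\theta^\star_i$ with $\Lambda^\star := \EE_\rho[\phi^\star \phi^{\star\top}]$, so that $\|v_i\|_2 = \|q_i\|_\rho \le L$. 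If iteration $t+1$ is non-terminal, Lemma~\pref{lem:dev_bound} provides $\sum_{i\le t}\EE_\rho[(\hat q^t_i - q_i)^2] \le t\tau$ and $\min_w \EE_\rho[(\phi^{t\top} w - q_{t+1})^2] \ge \epsilon_1$, where $\hat q^t_i := \phi^{t\top}\hat w^t_i$.

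My first move is to convert these into a lower bound on $\|v_{t+1}\|^2_{(V_tV_t^\top + t\tau I)^{-1}}$ with $V_t := [v_1,\dots,v_t]$. Plugging $w = \sum_{i\le t} c_i \hat w^t_i$ into the second inequality and applying Minkowski plus Cauchy--Schwarz with the first inequality yields, for any $c \in \RR^t$,
\begin{align*}
\sqrt{\epsilon_1} \;\le\; \|c\|_2\sqrt{t\tau} \;+\; \Bigl\|\sum_i c_i q_i - q_{t+1}\Bigr\|_\rho.
\end{align*}
A short ridge-regression computation in $L^2(\rho)$ using the identity $V_t(V_t^\top V_t + \mu I)^{-1}V_t^\top = I - \mu(V_tV_t^\top + \mu I)^{-1}$ shows that the minimizer at ridge $\mu$ satisfies $\|c^\star\|_2 \le \|v_{t+1}\|_{(V_tV_t^\top + \mu I)^{-1}}$ and $\|\sum_i c^\star_i q_i - q_{t+1}\|_\rho \le \sqrt{\mu}\,\|v_{t+1}\|_{(V_tV_t^\top + \mu I)^{-1}}$; setting $\mu = t\tau$ collapses the two terms into $\|v_{t+1}\|^2_{(V_tV_t^\top + t\tau I)^{-1}} \ge \epsilon_1/(4t\tau)$. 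Switching to the fixed-ridge potential $\Sigma_t := \tau I + V_tV_t^\top$ preserves this bound by Loewner monotonicity, and for the theorem's $\epsilon_1$ one checks $\epsilon_1/(4t\tau) \ge 8d$ for every $t \le T := \sqrt{L^2 d/(2\tilde\epsilon)}$, so each summand of the standard log-determinant identity $\sum_{t=1}^{T-1}\log(1+\|v_{t+1}\|^2_{\Sigma_t^{-1}}) \le d\log(1 + TL^2/(d\tau))$ is bounded below by $\log(1 + 8d)$. The resulting inequality is infeasible at the claimed $T$, forcing termination by that iteration.

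Once the algorithm terminates, the third bound of Lemma~\pref{lem:dev_bound} gives $\max_{f\in\Fcal}\EE_\rho[(\hat w_f^\top \hat\phi - \theta^{\star\top}_f \phi^\star)^2] \le 3\epsilon_1 + 3\tilde\epsilon + 4\lambda L^2d/k \le 75Ld^{3/2}\sqrt{\tilde\epsilon}$, after absorbing the two lower-order terms into the leading $\epsilon_1$-term. Specializing to the class $\Fcal_h$ via Lemma~\pref{lem:cov_num} yields a $\gamma$-covering number of $\Fcal_h$ with log size $O(\log|\Phi| + d\log(1/\gamma))$, so Lemma~\pref{lem:fastrate_sqloss} gives $\tilde\epsilon = O(d\log(dk|\Phi|/\delta)/k)$ with $L = O(1)$, and substituting produces the final $\zeta_n = O(d^2\sqrt{\log(dk|\Phi|/\delta)/k})$. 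The main obstacle is the elliptical-potential step: the ridge that naturally arises from Cauchy--Schwarz is $t\tau$, which varies with the iteration, so a naive potential matrix with a single fixed ridge does not telescope against the bound it would need to. The fix I would use is to choose the potential matrix to carry the minimum ridge $\tau$ and let the $1/t$ factor live inside the per-step lower bound; this keeps the standard log-determinant inequality applicable while still giving a polynomial bound on the non-terminating iterations that comfortably fits inside the stated $T$.
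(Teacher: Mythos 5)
Your overall architecture matches the paper's: invoke the deviation bounds of \pref{lem:dev_bound}, use an elliptical-potential argument to bound the number of non-terminal iterations, then read off the population error at termination from the third bound of that lemma and specialize to $\Fcal_h$ via \pref{lem:cov_num}. Where you genuinely diverge is the iteration-counting step. The paper stays in parameter space: it forms $\Lambda_t = \sum_{i\le t}\theta^\star_i\theta^{\star\top}_i+\lambda' I$, constructs the explicit candidate $\hat w_t = W_tA_t^\top\Lambda_t^{-1}\theta^\star_{t+1}$, bounds its quality through operator-norm estimates to obtain a \emph{lower} bound on $\|\Lambda_t^{-1}\theta^\star_{t+1}\|_2$, and then closes with the generalized elliptic potential lemma (\pref{lem:gen_elliptic_pot}), which controls $\sum_t\|\Lambda_t^{-1}\theta^\star_{t+1}\|_2\le 2\sqrt{dT/\lambda'}$ without any logarithms; setting $\epsilon_1 = 32L^2d^2T\tilde\epsilon/\lambda'$ and solving then yields $T\le\sqrt{L^2d/(2\tilde\epsilon)}$ exactly. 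Your route instead whitens by $(\Lambda^\star)^{1/2}$ so that Euclidean geometry on the $v_i$ coincides with $L^2(\rho)$ geometry on the regression targets, uses a ridge-regression duality to lower-bound the Mahalanobis norm $\|v_{t+1}\|^2_{(V_tV_t^\top+t\tau I)^{-1}}\ge\epsilon_1/(4t\tau)$, and appeals to the standard log-determinant potential. This is conceptually cleaner (the candidate $w$ arises canonically rather than by an ansatz), and each individual step you cite checks out, including the two ridge identities and the Loewner monotonicity step.

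Two issues to flag. First, a minor one: with the stated $\epsilon_1=16\sqrt{2}Ld^{3/2}\tilde\epsilon^{1/2}$ and $\tau\le 2\tilde\epsilon$, the per-step lower bound works out to $\epsilon_1/(4t\tau)\ge 4d$ for $t\le T$, not $8d$; this does not affect the argument. Second, and more substantively, the final contradiction is asserted but not verified: you need $(T-1)\log(1+4d) > d\log\bigl(1+TL^2/(d\tau)\bigr)$ at $T=\sqrt{L^2d/(2\tilde\epsilon)}$, which compares a quantity of order $\sqrt{L^2d/\tilde\epsilon}\cdot\log d$ against one of order $d\log(L^2/\tilde\epsilon)$. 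This holds whenever $\tilde\epsilon$ is small enough that the theorem's conclusion is non-vacuous (i.e., $\tilde\epsilon\ll L^2/d^3$ already forces $L^2/\tilde\epsilon\gg d^3$, in which regime the square-root term dominates the logarithm), but it can fail for moderate $\tilde\epsilon$, so the claimed $T$ does not follow unconditionally from your potential bound the way it does from the paper's log-free potential lemma. You should either add the explicit verification of this inequality in the relevant regime, or switch to the paper's generalized elliptic potential lemma, which avoids the logarithm and makes the algebra close cleanly for all $\tilde\epsilon$.
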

\begin{proof}
We first incur \pref{lem:dev_bound} and get that when $\ell = \frac{3}{2}\epsilon_1+\tilde \epsilon+ \frac{2\lambda L^2d}{k}$,
\begin{align}
    \sum_{i\leq t}\EE_{\rho}\left[\left(\phi^t(s,a)^\top \hat w^t_i-\phi^{\star}(s,a)^\top \theta^{\star}_i\right)^2\right]&\leq t\left(\tilde{\epsilon} + \frac{2\lambda L^2d}{k}\right)\leq 2t\tilde\epsilon\label{eq:1}\\
    \EE_{\rho}\left[\left(\phi^t(s,a)^\top w-\phi^{\star}(s,a)^\top \theta^{\star}_{t+1}\right)^2\right]&\geq \epsilon_1\label{eq:2}
\end{align}
for all $w$ and $t\leq T$.

At round $t$, for functions $f^1,\ldots,f^t \in \Fcal$ in \pref{alg:rep_learn}, let $\theta^{\star}_i = \theta^{\star}_{f^i}$ as before and further let $\Lambda_t = \sum_{i=1}^t \theta^\star_i\theta^{\star\top}_i + \lambda' I_{d \times d}$.

Further, let $W_t = [w_{t,1} \mid w_{t,2} \mid \ldots \mid w_{t,t}] \in \RR^{d \times t}$ be the matrix with columns $W_t^i$ as the linear parameter $\hat w^t_i = \argmin_{w} \Lcal_{\lambda, \Dcal}(\phi^t, w, f^i)$. Similarly, let $A_t = [\theta^{\star}_1 \mid \theta^{\star}_2 \mid \ldots \mid \theta^{\star}_t]$.

Using the linear parameter $\theta^{\star}_{t+1}$ of the adversarial test function $f^{t+1}$, define $\hat{w}_t = W_t A_t^\top \Lambda_t^{-1} \theta^{\star}_{t+1}$. For this $\hat w_t$, we can bound its norm as: 
\begin{align*}
    \|W_t A_t^\top \Lambda_t^{-1} \theta^{\star}_{t+1}\|_2 \le \|W_t\|_2 \|A_t^\top \Lambda_t^{-1}\|_2 \|\theta^{\star}_{t+1}\|_2 \le L^2d\sqrt{\frac{t}{4\lambda'}}.
\end{align*}
Here, $\|W_t\|_2 \le L\sqrt{dt}$, $\|\theta^{\star}_{t+1}\|_2 \le L\sqrt{d}$, and $\|A_t^\top \Lambda_t^{-1}\|_{\text{op}}$ can be shown to be less than $\sqrt{1/4\lambda'}$
\footnote{Applying SVD decomposition and the property of matrix norm, $\|A_t^\top \Lambda_t^{-1}\|_{\text{op}}$ can be upper bounded by $\max_{i \le d} \frac{\sqrt{\lambda_i}}{\lambda_i + \lambda'} \le \frac{1}{\sqrt {4\lambda'}}$ where $\lambda_i$ are the eigenvalues of $A_t A_t^\top$ and the final inequality holds by AM-GM.}. From Eq.~\pref{eq:2}, we have
\begin{align*}
    \epsilon_1 \leq \EE \sbr{\rbr{ \phi^{t\top} \hat{w}_t - \phi^{{\star}\top} \theta^{\star}_{t+1}}^2} = {} & \EE \sbr{\rbr{\hat{\phi}_t^\top W_t A_t^\top \Lambda_t^{-1} \theta_{t+1}^{\star} - \phi^{\star}{}^\top \Lambda_t \Lambda_t^{-1}\theta_{t+1}^{\star}}^2}\\
    \leq {} & \| \Lambda_t^{-1} \theta_{t+1}^{\star}\|_2^2 \cdot \EE \sbr{\| \phi^{t\top} W_t A_t^\top  - \phi^{\star}{}^\top \Lambda_t\|_2^2}\\
    \leq {} & 2\| \Lambda_t^{-1} \theta_{t+1}^{\star}\|_2^2 \cdot  \EE \sbr{\| \phi^{t\top} W_t A_t^\top  - \phi^{t\top} A_t A_t^\top \|_2^2 + \lambda^{\prime 2} \|\phi^{\star}{}^\top\|_2^2} \\
    \leq {} & 2 \| \Lambda_t^{-1} \theta^{\star}_{t+1}\|_2^2 \cdot \left(\sigma_1^2(A_t) \EE\sbr{\|\phi^{t\top} W_t - \phi^{{\star}}{}^{\top} A_t\|_2^2} + \lambda^{\prime 2}\right)\\
    \leq {} & 2 \|\Lambda_t^{-1} \theta^{\star}_{t+1}\|_2^2 \cdot\left( 2L^2dt^2\tilde\epsilon + \lambda^{\prime 2}\right).
\end{align*}
The second inequality uses $(a+b)^2 \leq 2a^2 + 2b^2$. The last inequality applies the upper bound $\sigma_1(A_t) \le L\sqrt{dt}$ and the guarantee from Eq.~\pref{eq:1}. Using the fact that $t \leq T$, this implies that
\begin{align*}
    \|\Lambda_t^{-1}\theta^{\star}_{t+1}\|_2 \geq \sqrt{ \frac{\epsilon_1}{2(2L^2dT^2\tilde\epsilon + \lambda^{\prime 2})}}.
\end{align*}

We now use the generalized elliptic potential lemma to upper bound the total value of $\|\Lambda_t^{-1}\theta^{\star}_{t+1}\|_2$. From \pref{lem:gen_elliptic_pot}, if we set $\lambda' = L^2d$ and we do not terminate in $T$ rounds, then
\begin{align*}
    T\sqrt{ \frac{\epsilon_1}{2(2L^2dT^2\tilde\epsilon + \lambda^{\prime 2})}} \leq \sum_{t=1}^T \|\Lambda_t^{-1}\theta^{\star}_{t+1}\|_2 \leq 2 \sqrt{\frac{Td}{\lambda'}}.
\end{align*}
From this chain of inequalities, we can deduce
\begin{align*}
    T\epsilon_1 \leq 8 (d/\lambda')  \left( 2L^2dT^2\tilde\epsilon + \lambda^{\prime 2}\right),
\end{align*}
therefore
\begin{align*}
T \leq \frac{8d\lambda'}{\epsilon_1 - 16 L^2d^2T\tilde\epsilon/\lambda'}. \label{eq:T_upper_bound}
\end{align*}
Now, if we set $\epsilon_1 = 32L^2d^2T\tilde\epsilon/\lambda'$ in the above inequality, we can deduce that
\begin{align*}
    T \leq \frac{\lambda^{\prime 2}}{2L^2dT\tilde\epsilon}\implies T\leq\sqrt{\frac{L^2d}{2\tilde\epsilon}}.
\end{align*}
and the proper value for $\epsilon_1$ is $ 16\sqrt{2}Ld^{3/2}\tilde\epsilon^{1/2}$. The sample complexity then readily follows from \pref{lem:dev_bound}.

Combining with \pref{lem:cov_num}, and noting that $\|f\|_\infty\leq 2$ for all $f\in\Fcal_h$, we have that
\begin{align*}
    \max_{f\in\Fcal_h} \EE_{\rho} \left[(\phi^{t\top} \hat w_f-\phi^{{\star}\top}\theta^{\star}_f)^2\right]\leq \zeta_n\coloneqq O\left(d^{2}\sqrt{\frac{\log\left(dk|\Phi|/\delta\right)}{k}}\right)
\end{align*}
which gives us \pref{lem:rep_learn_informal}.
\end{proof}

\section{Reward-free Exploration}
\begin{algorithm}[t!] 
\caption{Reward-free \algname} \label{alg:reward_free}
\begin{flushleft}
  \textbf{Exploration Phase:}
\end{flushleft}
\begin{algorithmic}[1]
  \STATE {\bf  Input:} Representation classes $\{\Phi_h\}_{h=0}^{H-1}$, discriminator classes $\{\Fcal_h\}_{h=0}^{H-1}$, parameters $N, T_n,\alpha_n,\lambda_n$
  \STATE Initialize policy $\hat\pi^0 = \{\pi_0,\dots, \pi_{H-1}\}$ arbitrarily and replay buffers $\Dcal_h = \emptyset, {\Dcal}'_h = \emptyset$ for all $h$
  \FOR{$n = 1 \to N$}
    \STATE Data collection from $\hat \pi^{n-1}$: $\forall h \in [H]$,\\
    \quad $ s\sim d^{\hat \pi^{n-1}}_h, a\sim U(\Acal), s' \sim P^\star_h(s,a);$ \\
    \quad $\tilde{s} \sim d^{\hat \pi^{n-1}}_{h-1}, \tilde a\sim U(\Acal), \tilde s' \sim P^\star_{h-1}(\tilde s,\tilde a)$, $\tilde a'\sim U(\Acal), \tilde s''\sim P^\star_h(\tilde s',\tilde a')$\\
    \quad $\Dcal_h = \Dcal_h \cup \{s,a,s'\}$ and $\Dcal_h' = \Dcal_h' \cup \{\tilde s',\tilde a',\tilde s''\}$.
    \label{line:data}
    \STATE Learn representations for all $h\in[H]$:\\
        $\hat\phi^n_h = \textsc{RepLearn}\left(\Dcal_h\cup\Dcal'_h, \Phi_h, \Fcal_h, \lambda_n, T_n, \ell_n\right)$
    \vspace{0.1cm}
    \STATE Define exploration bonus for all $h\in[H]$:\\
        $\hat b^n_h(s,a) := \min\Big\{ \alpha_n \sqrt{ {\hat \phi_h^n}(s,a)^\top \Sigma^{-1}_{h} {\hat \phi_h^n}(s,a)  }, 2 \Big\}$,  
    with $\Sigma_h := \sum_{s,a,s'\sim \Dcal_h} \hat\phi^n_h(s,a) {\hat\phi^n_h}(s,a)^{\top} + \lambda_n I$.
    \label{line:bonus}
    \STATE $\left(\hat\pi^n,\hat V^n_0(s_0)\right)\leftarrow \textsc{LSVI}\big( \{\hat b^n_h\}_{h=0}^{H-1}, \{ \hat\phi^n_h \}_{h=0}^{H-1}, \{ \Dcal_h \cup \Dcal'_h \}_{h=0}^{H-1}, \lambda_n \big)$.
  \ENDFOR 
  \STATE $\tilde n = \argmin_n \hat V^n_0(s_0)$.
\end{algorithmic}
 \begin{flushleft}
  \textbf{Planning Phase:}
\end{flushleft}
\begin{algorithmic}[1]
  \STATE {\bf  Input:} Reward function $r_h(s,a)$.
  \STATE $\hat\pi\leftarrow \textsc{LSVI}\big( \{r_h+\hat b^{\tilde n}_h\}_{h=0}^{H-1}, \{ \hat\phi^{\tilde n}_h \}_{h=0}^{H-1}, \{ \Dcal^{\tilde n}_h \cup \Dcal^{\prime\tilde n}_h \}_{h=0}^{H-1}, \lambda_{\tilde n} \big)$.
\end{algorithmic}
\end{algorithm}
\paragraph{Exploration Stage}
Run \algname~with reward function set to zero and discriminator class set to
\begin{align*} 
    &\Big\{f(s){=}  \EE_{a\sim U(\Acal)}\big[\phi(s,a)^\top \theta-\phi'(s,a)^\top \theta'\big] \;|\;\mbox{$\phi,\phi'\in\Phi_{h+1}$}\; \max(\|\theta\|_\infty,\|\theta'\|_\infty){\leq} 1\Big\}\bigcup\\
    &\Big\{ f(s){=} \tfrac{\phi'(s,\pi'(s))^\top \theta + \min\{ w^\top \phi(s,\pi(s))  , 2\}}{2H+1} + {w'}^{\top} \phi(s,\pi(s))     \;|\;  \phi,\phi'\in \Phi_{h+1};  \|w\|_\infty,\|w'\|_\infty,\|\theta\|_\infty\leq c;\pi,\pi'\in\tilde\Pi \Big\}
\end{align*}
where
\begin{align*}  
    \tilde\Pi = \left\{f(\psi(s)) \;|\; f:\Zcal\rightarrow \Acal, \psi\in\Psi\right\}
\end{align*}
Then main difference to the discriminator class in the reward-driven setting is that we replace the known reward function $r_{h+1}(s,a)$ in $\Fcal^{(2)}$ with a linear term $\phi'(s,a)^\top \theta$ which covers all possible reward functions that depends on the latent states and actions, i.e., $r_{h+1}(z,a)$. We also replace $\max_a$ by a pre-defined policy class that covers all possible policies $\hat\pi$ that can be returned by \pref{alg:reward_free}. $\tilde\Pi$ has cardinality $|\Phi|\cdot |\Acal|^{|\Zcal|}$. So the log-covering number of $\Fcal_h$ does not change more than constant factors and \pref{lem:rep_learn_informal} remains to hold.

We collect all historical version of the datasets as $\{\Dcal_h^n\}$ and $\{\Dcal_h^{n\prime}\}$, bonuses as $\{\hat b_h^n\}$ and learned features as $\{\phi_h^n\}$, for all $h\in[H]$ and $n\in[N]$.

\paragraph{Planning Stage}
The key observation here is that $V^{\pi^n}_{\hat P^n,\hat b^n}$ can actually observed as the returned value $\hat V^n_0(s_0)$ of LSVI for each iteration $n$ (assuming a unique starting state). Therefore, we can choose $\tilde n = \argmin_n V^{\pi^n}_{\hat P^n,\hat b^n}$ and let the exploration stage output $\hat P^{\tilde n},\hat b^{\tilde n}$.

Then, during planning stage, simply return $\hat\pi=\argmax_{\pi}V^{\pi}_{\hat P^{\tilde n},r+\hat b^{\tilde n}}$, which is done by LSVI.
\begin{lemma}\label{lem:rf1}
For any $n\in[N]$,
\begin{align*}
    NV^{\hat\pi^{\tilde n}}_{\hat P^n,\hat b^{\tilde n}}\leq\sum_{n=0}^{N-1} V^{\hat\pi^n}_{\hat P^n,\hat b^n}\leq O\left(H^{5/2}|\Acal|^{1/2}d\log(|\Phi|/\delta)^{1/4}N^{3/4}\right).
\end{align*}
\end{lemma}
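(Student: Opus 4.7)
\textbf{Proof proposal for Lemma \ref{lem:rf1}.} The first inequality is a direct consequence of the choice of $\tilde n$; I read the LHS as having $\hat P^{\tilde n}$ in place of $\hat P^n$, which is the only interpretation consistent with the preceding setup. By construction in \pref{alg:reward_free}, LSVI returns both $\hat\pi^n$ and its value $\hat V^n_0(s_0)$, and the latter equals $V^{\hat\pi^n}_{\hat P^n, \hat b^n}(s_0)$ since $\hat\pi^n$ is greedy with respect to $\{\hat P^n_h, \hat b^n_h\}$. Since $\tilde n = \argmin_n \hat V^n_0(s_0)$, we have $V^{\hat\pi^{\tilde n}}_{\hat P^{\tilde n}, \hat b^{\tilde n}} \leq V^{\hat\pi^n}_{\hat P^n, \hat b^n}$ for every $n$, and summing over $n$ yields the first inequality.

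For the second inequality, I would mirror the proof of \pref{thm:pseudo_regret} almost verbatim, exploiting the fact that $r \equiv 0$ in the exploration phase so $V^{\hat\pi^n}_{P^\star, 0} = 0$. Applying the second form of the simulation lemma (\pref{lem:pdl}) gives
\begin{align*}
V^{\hat\pi^n}_{\hat P^n, \hat b^n}
= V^{\hat\pi^n}_{\hat P^n, \hat b^n} - V^{\hat\pi^n}_{P^\star, 0}
= \sum_{h=0}^{H-1} \EE_{(s,a) \sim d^{\hat\pi^n}_{P^\star, h}}\!\left[\hat b^n_h(s,a) + (\hat P^n_h - P^\star_h)(\cdot|s,a)^\top V^{\hat\pi^n}_{\hat P^n, \hat b^n, h+1}\right].
\end{align*}
From here the calculation proceeds identically to the treatment of terms (a) and (b) in \pref{thm:pseudo_regret}: invoke \pref{lem:tsbt2} to migrate from $d^{\hat\pi^n}_{P^\star, h}$ to $\rho^n_{h+1}$ at the cost of an elliptic-potential factor $\|\phi^\star_h\|_{\Sigma^{-1}_{\gamma^n_h, \phi^\star_h}}$, bound $\EE_{\rho^n_{h+1}}[(\hat b^n_{h+1})^2] \leq d/n$ and $\EE_{\rho^n_{h+1}}[\xi^n_{h+1}{}^2] \leq \zeta_n$ via \pref{lem:rep_learn_informal}, sum over $n$ using Cauchy--Schwarz, and close out with the standard potential bound $\sum_n \EE_{d^{\hat\pi^n}_{P^\star, h}}\|\phi^\star_h\|_{\Sigma^{-1}_{\gamma^n_h, \phi^\star_h}} \leq \sqrt{dN \ln(1 + N/(d\lambda_1))}$. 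Plugging in the parameter choices for $\alpha_n$, $\lambda_n$, $\zeta_n$ reproduces the $O(H^{5/2}|\Acal|^{1/2} d \log(|\Phi|/\delta)^{1/4} N^{3/4})$ rate.

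For this transplantation to be legal, I need two closure facts about the reward-free discriminator class: (i) $\hat\pi^n \in \tilde\Pi$, which holds because on a Block MDP the LSVI greedy policy depends on $s$ only through the learned decoder $\hat\psi^n_h$; and (ii) $\tfrac{1}{2H+1} V^{\hat\pi^n}_{\hat P^n, \hat b^n, h+1}$ belongs to the redefined $\Fcal^{(2)}_h$. Fact (ii) is precisely the design rationale for the two edits to $\Fcal^{(2)}_h$: replacing the known step reward $r_{h+1}(s,a)$ by $\phi'(s,\pi'(s))^\top\theta$ (to cover the reward contribution at fixed policy, now unknown) and replacing $\max_a$ by $\pi \in \tilde\Pi$ (to cover the greedy action of the returned policy). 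These two ingredients ensure that \pref{lem:rep_learn_informal} still delivers $\EE_{\rho^n_{h+1}}[\xi^n_{h+1}{}^2] \leq \zeta_n$ for the model-error function $\xi^n_h(s,a) := (\hat P^n_h - P^\star_h)(\cdot|s,a)^\top V^{\hat\pi^n}_{\hat P^n, \hat b^n, h+1}$.

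The main obstacle is really this book-keeping of the discriminator class rather than any new probabilistic or algorithmic idea. Once closure is established, the covering-number computation (\pref{lem:cov_num}) only picks up an additional $\log|\tilde\Pi| = \log|\Phi| + |\Zcal|\log|\Acal|$ term, which is absorbed into the existing $\log(|\Phi|/\delta)$ factor and does not change the rate. Thus the entire argument reduces to carefully verifying that the objects appearing in the regret decomposition for $r = 0$ live in the enlarged discriminator class, after which the remainder of the proof of \pref{thm:pseudo_regret} applies unchanged.
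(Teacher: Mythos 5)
Your proposal is correct and follows essentially the same route as the paper, whose entire proof of this lemma is a one-line reference to the argument of \pref{thm:pseudo_regret} (starting from the simulation-lemma expansion) together with the observation that $V^{\hat\pi^n}_{P^\star,r}=0$ when $r\equiv 0$; your handling of the first inequality via the definition of $\tilde n$ (reading the left-hand side with $\hat P^{\tilde n}$) matches the intended reading. The only addition is that you explicitly verify the discriminator-class closure conditions, which the paper addresses in the surrounding prose rather than in the proof itself, and your stated bound $\EE_{\rho^n_{h+1}}[(\hat b^n_{h+1})^2]\leq d/n$ should carry the factor $\alpha_n^2$, a harmless slip since you reinstate $\alpha_n$ when plugging in parameters.
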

\begin{proof}
This follows the exact same proof as \pref{thm:pseudo_regret} (starting from the 3rd equation), plus noting that during exploration, since the reward is $0$, $V^{\pi^n}_{P^\star,r}=0$.
\end{proof}

\begin{lemma}\label{lem:rf2}
We have
\begin{align*}
    V^{\hat\pi}_{\hat P^n,r+\hat b^n}-V^{\hat\pi}_{P^\star,r}\leq C\cdot V^{\hat\pi}_{\hat P^n,\hat b^n} + \sqrt{|\Acal|\zeta_n}
\end{align*}
for an absolute constant $C$.
\end{lemma}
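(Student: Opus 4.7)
}
The plan is to decompose the gap via the simulation lemma into a bonus term and a model-error term, then control the model-error term by the bonus using the \textsc{RepLearn} guarantee and the one-step-back inequality. This parallels the optimism argument of \pref{lem:optimism}, with the only substantive change being that we keep the model-error term expressed in terms of $\hat b^n$ rather than bounding it absolutely.

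First, by the simulation lemma (\pref{lem:pdl}),
\begin{equation*}
V^{\hat\pi}_{\hat P^n, r+\hat b^n} - V^{\hat\pi}_{P^\star, r} = V^{\hat\pi}_{\hat P^n, \hat b^n} + \sum_{h=0}^{H-1}\EE_{d^{\hat\pi}_{\hat P^n, h}}\bigl[(\hat P^n_h - P^\star_h)(\cdot\mid s,a)^\top V^{\hat\pi}_{P^\star, r, h+1}\bigr],
\end{equation*}
so it suffices to bound the last (model-error) sum by a constant multiple of $V^{\hat\pi}_{\hat P^n, \hat b^n}$ plus $\sqrt{|\Acal|\zeta_n}$. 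Let $g_h(s,a) := \EE_{\hat P^n_h}[V^{\hat\pi}_{P^\star, r, h+1}\mid s,a] - \EE_{P^\star_h}[V^{\hat\pi}_{P^\star, r, h+1}\mid s,a]$, so the model-error sum equals $\sum_h \EE_{d^{\hat\pi}_{\hat P^n, h}}[g_h]$.

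Second, I verify the hypotheses needed to invoke \pref{lem:tsbt1} with this choice of $g_h$. Because $V^{\hat\pi}_{P^\star,r,h+1} \in [0,1]$ and the Block MDP makes both conditional expectations linear in the corresponding features (with the ridge-regression coefficients being per-bucket averages of $V$, hence in $[0,1]$ in $\ell_\infty$ by a \pref{lem:bound_w}-style argument), the action-marginalized function
\begin{equation*}
\EE_{a\sim U(\Acal)} g_{h+1}(s,a) = \EE_{a\sim U(\Acal)}\bigl[\hat\phi^n_{h+1}(s,a)^\top w - \phi^\star_{h+1}(s,a)^\top w^\star\bigr]
\end{equation*}
lies in the first component $\Fcal^{(1)}_h$ of the reward-free discriminator class, and $\|g_h\|_\infty \le 2$. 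Applying \pref{lem:tsbt1} with $B=2$ and then the \textsc{RepLearn} guarantee (\pref{lem:rep_learn_informal}), which bounds $\EE_{\rho_{h+1}}[g_{h+1}^2] \le \zeta_n$, yields
\begin{equation*}
\sum_{h}\EE_{d^{\hat\pi}_{\hat P^n,h}}[g_h] \le \sum_{h=0}^{H-2}\EE_{d^{\hat\pi}_{\hat P^n,h}}\min\Bigl\{\|\hat\phi^n_h(s,a)\|_{\Sigma^{-1}_{\rho_h,\hat\phi^n_h}}\sqrt{2n|\Acal|^2\zeta_n + 4\lambda_n d},\; 2\Bigr\} + \sqrt{|\Acal|\zeta_n}.
\end{equation*}

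Third, I compare the integrand against the bonus. The choice $\alpha_n = \Theta\bigl(\sqrt{n|\Acal|^2\zeta_n + \lambda_n d}\bigr)$ from \pref{thm:main} (already used in \pref{lem:optimism}) gives $\alpha_n \ge c^{-1}\sqrt{2n|\Acal|^2\zeta_n+4\lambda_n d}$ for an absolute constant $c$, and \pref{lem:con} relates the population covariance $\Sigma_{\rho_h,\hat\phi^n_h}$ to the empirical $\Sigma_h$ used in $\hat b^n_h$ up to a further constant. Hence each summand is at most $C'\cdot \hat b^n_h(s,a)$ for some absolute constant $C'$, so the model-error sum is at most $C'\cdot V^{\hat\pi}_{\hat P^n, \hat b^n} + \sqrt{|\Acal|\zeta_n}$. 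Combining with the simulation-lemma decomposition yields the statement with $C = 1 + C'$.

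The only non-routine step is the second: confirming that the value function $V^{\hat\pi}_{P^\star,r,h+1}$ induces a $g_{h+1}$ whose action-marginal lies in $\Fcal^{(1)}_h$ with $\ell_\infty$-bounded linear weights. This is where the Block MDP structure is essential — without the one-hot $\hat\phi^n$ and $\phi^\star$, the ridge regressors need not remain in $[0,1]$, and membership in $\Fcal^{(1)}_h$ would require the $c$-scaled version used in $\Fcal^{(2)}_h$, increasing constants but not changing the argument.
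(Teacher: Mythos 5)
Your proposal is correct and follows essentially the same route as the paper's proof: simulation lemma to split off the bonus value $V^{\hat\pi}_{\hat P^n,\hat b^n}$, verification that the model-error term $g_h$ has action-marginal in $\Fcal^{(1)}_h$ with $\|g_h\|_\infty\le 2$, the one-step-back inequality of \pref{lem:tsbt1} combined with the \textsc{RepLearn} guarantee, and finally absorbing the resulting elliptic-norm term into $c\cdot\hat b^n_h$ via the definition of $\alpha_n$ and \pref{lem:con}. Your step-two discussion of why the ridge-regression weights stay $\ell_\infty$-bounded (via the \pref{lem:bound_w} argument under the block structure) is in fact slightly more careful than the paper's terse "$g_h\in\Fcal_h$", but the substance is identical.
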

\begin{proof}
Below we drop the superscript $n$.
First we apply simulation lemma (\pref{lem:pdl})
\begin{align*}
      & V^{\hat\pi}_{\hat P,r+\hat b}- V^{\hat\pi}_{P^{{\star}},r}\\
    = & \sum_{h=0}^{H-1}\EE_{(s_h,a_h)\sim d^{\hat\pi}_{\hat P,h}}\left[\hat b_h(s_h,a_h)+ \EE_{\hat P_h(s'_{h}\mid s_h,a_h)}[V^{\hat\pi}_{P^{\star},r,h+1}(s_h')]-\EE_{P^{\star}_h(s'_{h}\mid s_h,a_h)}[V^{\hat\pi}_{P^{\star},r,h+1}(s_h')] \right]\\
    = & V^{\hat\pi}_{\hat P,\hat b} +\sum_{h=0}^{H-1}\EE_{(s_h,a_h)\sim d^{\hat\pi}_{\hat P,h}}\left[\underbrace{\EE_{\hat P_h(s'_{h}\mid s_h,a_h)}[V^{\hat\pi}_{P^{\star},r,h+1}(s_h')]-\EE_{P^{\star}_h(s'_{h}\mid s_h,a_h)}[V^{\hat\pi}_{P^{\star},r,h+1}(s_h')]}_{g_h(s,a)} \right]
\end{align*}
We know $g_h\in \Fcal_h$, $\|g_h\|_\infty\leq 2$.
Therefore, by the \textsc{RepLearn} guarantee, we have
\begin{equation*}
    \EE_{(s,a)\sim \rho_h}\bracks{g_h^2(s,a)}\leq \zeta_n, \EE_{(s,a)\sim \beta_{h}}\bracks{g_h^2(s,a)}\leq \zeta_n
\end{equation*}

Then, we apply one-step-back trick to $\sum_{h=0}^{H-1}\EE_{(s_h,a_h)\sim d^{\hat\pi}_{\hat P,h}}g_h(s,a)$,
\begin{align*}
&\sum_{h=0}^{H-1}\EE_{(s,a)\sim d^{\hat\pi}_{\hat P,h }}[g_h(s,a)]\\  
\leq& 
\sum_{h=0}^{H-2}\EE_{(\tilde s,\tilde a)\sim d^{\hat\pi}_{\hat P ,h}} \|\phi_h(\tilde s,\tilde a)\|_{\Sigma_{\rho_h,\phi_h}^{-1}} \cdot
\sqrt{n|\Acal|^2\EE_{(s,a)\sim \beta_{h}}\bracks{g_{h+1}^2(s,a)}+ 4\lambda_n d+ n \zeta_n} 
+\sqrt{|\Acal|\EE_{(s,a)\sim \rho_0}[g_0^2(s,a)]}\\
\leq& 
\sum_{h=0}^{H-2}\EE_{(\tilde s,\tilde a)\sim d^{\hat\pi}_{\hat P ,h}} \|\phi_h(\tilde s,\tilde a)\|_{\Sigma_{\rho_h,\phi_h}^{-1}} \cdot
\sqrt{n|\Acal|^2\zeta_n+ 4\lambda_n d+ n \zeta_n} 
+\sqrt{|\Acal|\zeta_n}\\
\leq& 
\sum_{h=0}^{H-2}\EE_{(\tilde s,\tilde a)\sim d^{\hat\pi}_{\hat P ,h}} \alpha_n\cdot\|\phi_h(\tilde s,\tilde a)\|_{\Sigma_{\rho_h,\hat \phi_h}^{-1}}
+\sqrt{|\Acal|\zeta_n}\\
\leq& 
\sum_{h=0}^{H-2}\EE_{(\tilde s,\tilde a)\sim d^{\hat\pi}_{\hat P ,h}} c\cdot\hat b(\tilde s, \tilde a)
+\sqrt{|\Acal|\zeta_n} \tag{by \pref{lem:con}}\\
\end{align*}
\end{proof}

\begin{theorem}[Reward-free PAC bound]
Let $\pi^\star$ be the optimal policy under $P^\star$ and $r$. Then,
\begin{align*}
    V^{\pi^\star}_{P^\star,r}-V^{\hat\pi}_{P^\star,r}\leq O\left(H^{5/2}|\Acal|^{1/2}d\log(|\Phi|/\delta)^{1/4}N^{-1/4}\right)
\end{align*}
\end{theorem}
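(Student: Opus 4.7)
The plan is to decompose the suboptimality of $\hat\pi$ into three pieces (an optimism slack, a planning error, and an exploration residual) and bound each using machinery already developed in this appendix. Write
\begin{align*}
V^{\pi^\star}_{P^\star,r} - V^{\hat\pi}_{P^\star,r}
= \bigl(V^{\pi^\star}_{P^\star,r} - V^{\pi^\star}_{\hat P^{\tilde n}, r+\hat b^{\tilde n}}\bigr)
 + \bigl(V^{\pi^\star}_{\hat P^{\tilde n}, r+\hat b^{\tilde n}} - V^{\hat\pi}_{\hat P^{\tilde n}, r+\hat b^{\tilde n}}\bigr)
 + \bigl(V^{\hat\pi}_{\hat P^{\tilde n}, r+\hat b^{\tilde n}} - V^{\hat\pi}_{P^\star,r}\bigr).
\end{align*}
The middle term is $\le 0$ because the planning step returns $\hat\pi = \arg\max_\pi V^{\pi}_{\hat P^{\tilde n}, r+\hat b^{\tilde n}}$ by construction of \textsc{LSVI}.

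For the first term, I would reuse the almost-optimism argument behind \pref{lem:optimism} but instantiated with the reward-free discriminator class stated just before \pref{alg:reward_free}. The key observation is that this expanded $\Fcal_h$ is rich enough to contain $V^{\pi^\star}_{P^\star, r, h+1}$ for any reward $r$ that is linear in $\phi^\star$: the $\phi'(s,\pi'(s))^\top\theta$ term captures the reward component and the $w^\top\phi(s,\pi(s))\wedge 2$, $w'^\top\phi(s,\pi(s))$ terms capture the bonus and the transition-propagated value, with $\pi'\in\tilde\Pi$ covering the greedy policy of $V^{\pi^\star}_{P^\star,r,h+1}$ (which is determined by the latent state). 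Repeating the computation of \pref{lem:optimism} verbatim yields $V^{\pi^\star}_{\hat P^{\tilde n}, r+\hat b^{\tilde n}} \ge V^{\pi^\star}_{P^\star, r} - \sqrt{|\Acal|\zeta_N}$.

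For the third term, I would invoke \pref{lem:rf2} directly (it is already stated for a generic $\hat\pi$ and any iteration $n$, with $r$ any reward linear in $\phi^\star$), obtaining
\begin{equation*}
V^{\hat\pi}_{\hat P^{\tilde n}, r+\hat b^{\tilde n}} - V^{\hat\pi}_{P^\star, r}
\le C\cdot V^{\hat\pi}_{\hat P^{\tilde n}, \hat b^{\tilde n}} + \sqrt{|\Acal|\zeta_N}.
\end{equation*}
Now the crucial monotonicity observation: in the exploration phase the reward passed to \textsc{LSVI} is zero, so $\hat\pi^{\tilde n}$ is precisely the optimal policy for the bonus-only MDP $(\hat P^{\tilde n}, \hat b^{\tilde n})$. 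Hence $V^{\hat\pi}_{\hat P^{\tilde n}, \hat b^{\tilde n}} \le V^{\hat\pi^{\tilde n}}_{\hat P^{\tilde n}, \hat b^{\tilde n}}$, and by the $\arg\min$ choice of $\tilde n$ together with \pref{lem:rf1} we get
\begin{equation*}
V^{\hat\pi^{\tilde n}}_{\hat P^{\tilde n}, \hat b^{\tilde n}}
\le \tfrac{1}{N}\sum_{n=0}^{N-1} V^{\hat\pi^n}_{\hat P^n,\hat b^n}
\le O\!\left(H^{5/2}|\Acal|^{1/2} d\log(|\Phi|/\delta)^{1/4} N^{-1/4}\right).
\end{equation*}

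Summing the three pieces, the $\sqrt{|\Acal|\zeta_N}$ contributions are of lower order in $N$ (since $\zeta_N = \tilde O(d^2 N^{-1/2})$) and are absorbed into the stated rate. The main obstacle I anticipate is verifying the optimism step: one must check that the discriminator class used in the reward-free phase is rich enough to contain the relevant value functions (across all rewards linear in $\phi^\star$ and all candidate greedy policies of the planner), and that the enlargement of $\tilde\Pi$ to $|\Phi|\cdot|\Acal|^{|\Zcal|}$ policies blows up $\log|\Fcal_h|$ only by a factor $|\Zcal|\log|\Acal|$, so the \textsc{RepLearn} guarantee \pref{lem:rep_learn_informal} (and hence the chosen parameters from \pref{thm:main}) still go through with only polylog changes.
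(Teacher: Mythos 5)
Your proposal is correct and follows essentially the same route as the paper: the same three-way decomposition (optimism at $\pi^\star$ via \pref{lem:optimism} with the enlarged reward-free discriminator class, nonpositivity of the middle term from the planner's argmax property, and \pref{lem:rf2} for the third term), followed by the same monotonicity step $V^{\hat\pi}_{\hat P^{\tilde n},\hat b^{\tilde n}}\le V^{\hat\pi^{\tilde n}}_{\hat P^{\tilde n},\hat b^{\tilde n}}$ and the bound from \pref{lem:rf1}. The concerns you flag about the richness of the reward-free discriminator class and the covering-number blowup from $\tilde\Pi$ are exactly the points the paper addresses in the setup preceding \pref{alg:reward_free}, so nothing further is needed.
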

\begin{proof}
We have
\begin{align*}
    V^{\pi^\star}_{P^\star,r}-V^{\hat\pi}_{P^\star,r}
    \leq & \sqrt{|\Acal|\zeta_n}+ V^{\pi^\star}_{\hat P^{\tilde n},r+\hat b^{\tilde n}}-V^{\hat\pi}_{P^\star,r}
    \tag{Apply \pref{lem:optimism} to $\pi^*$}\\
    \leq & \sqrt{|\Acal|\zeta_n}+ V^{\hat\pi}_{\hat P^{\tilde n},r+\hat b^{\tilde n}}-V^{\hat\pi}_{P^\star,r}
    \tag{$\pi^{\star}\in \tilde \Pi$ and $\hat\pi = \argmax_\pi V^\pi_{\hat P^{\tilde n},r+\hat b^{\tilde n}}$}\\
    \leq & 2\sqrt{|\Acal|\zeta_n}+ CV^{\hat\pi}_{\hat P^{\tilde n},\hat b^{\tilde n}}
    \tag{by \pref{lem:rf2}}\\
    \leq & 2\sqrt{|\Acal|\zeta_n}+ CV^{\hat\pi^n}_{\hat P^{\tilde n},\hat b^{\tilde n}}
    \tag{$\hat \pi^n = \argmax_\pi V^{\pi}_{\hat P^n,\hat b^n}$}\\
    \leq & O\left(H^{5/2}|\Acal|^{1/2}d\log(|\Phi|/\delta)^{1/4}N^{-1/4}\right)
    \tag{by \pref{lem:rf1}}\\
\end{align*}
\end{proof}

\begin{theorem}[Model Estimation Error]
For any policy $\pi$, we have
\begin{align*}
    \sum_{h=1}^{H}\EE_{d^\pi_{P^\star,h}}\left[\left\|P_h^\star(\cdot|s,a)-\hat P_h^{\tilde n}(\cdot|s,a)\right\|_{TV}\right]\leq O\left(H^{5/2}|\Acal|^{1/2}d\log(|\Phi|/\delta)^{1/4}N^{-1/4}\right)
\end{align*}
\end{theorem}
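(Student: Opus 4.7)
The plan is to adapt the reward-free PAC bound machinery (especially \pref{lem:rf1} and \pref{lem:rf2}) to control an expected TV error in place of an expected value suboptimality. The starting point is a pointwise upper bound of the form
\[
\bigl\|P^\star_h(\cdot|s,a) - \hat P^{\tilde n}_h(\cdot|s,a)\bigr\|_{TV} \le C\,\hat b^{\tilde n}_h(s,a) + e_h(s,a),
\]
where $e_h(s,a)$ captures residual representation error. This bound follows from the low-rank structure: for any test function $f$ with $\|f\|_\infty \le 1$, $\EE_{s' \sim P^\star_h(s,a)}[f(s')] = \phi^\star(s,a)^\top \theta^\star_f$ with $\|\theta^\star_f\|_\infty \le 1$ by \pref{lem:bound_w}, and $\EE_{s' \sim \hat P^{\tilde n}_h(s,a)}[f(s')] = \hat\phi^{\tilde n}(s,a)^\top \hat w_f$ by construction of $\hat P^{\tilde n}_h$. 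Their difference decomposes via Cauchy--Schwarz into a $\|\hat\phi^{\tilde n}\|_{\Sigma_h^{-1}}$ piece, absorbed into $\hat b^{\tilde n}_h$ up to the $\alpha_{\tilde n}$ scaling, plus a feature-mismatch piece exactly of the type that \pref{lem:rep_learn_informal} controls on average.

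After the pointwise inequality, I would take expectation under $d^\pi_{P^\star,h}$, sum over $h$, and handle each piece using the reward-free chain. The bonus term $\sum_h \EE_{d^\pi_{P^\star,h}}[\hat b^{\tilde n}_h(s,a)]$ is handled as in \pref{lem:rf2}: apply the one-step-back inequality \pref{lem:tsbt2} under the true dynamics, use $\EE_{\rho_h}\bigl[(\hat b^{\tilde n}_h)^2\bigr] = O(d)$ from the definition of $\hat b^{\tilde n}_h$ in terms of $\Sigma_h^{-1}$, and then invoke the elliptical potential bound together with the selection rule $\tilde n = \arg\min_n \hat V^n_0(s_0)$ and \pref{lem:rf1} to pass from the cumulative $\tilde O(H^{5/2}|\Acal|^{1/2}d\log^{1/4}(|\Phi|/\delta)\,N^{3/4})$ bound over $n$ to a per-iteration bound of $\tilde O(H^{5/2}|\Acal|^{1/2}d\log^{1/4}(|\Phi|/\delta)\,N^{-1/4})$ at the selected $\tilde n$. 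The residual error term is treated in parallel: one-step-back reduces $\EE_{d^\pi_{P^\star,h}}[e_h]$ to $\sqrt{\EE_{\rho_{h+1}}[e_{h+1}^2]}$ up to low-order terms, which \pref{lem:rep_learn_informal} bounds by $\sqrt{\zeta_n}$ provided the implicit test function lies in $\Fcal_h$.

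The main obstacle is that total variation demands \emph{uniform} control over all bounded test functions $f$, while \pref{lem:rep_learn_informal} only guarantees small prediction error for $f$ in the parameterized discriminator class $\Fcal_h$. I would overcome this by exploiting the Block MDP structure: because $\psi^\star \in \Psi$, the class $\Fcal^{(1)}_h$ already covers all bounded functions of the true latent state $\psi^\star(s)$ (modulo the $\EE_{a\sim U(\Acal)}$ averaging that aligns with the bonus normalization). Since $P^\star_h(\cdot|s,a)$ only depends on $\psi^\star(s)$ and $a$, and two observations lying in the same block $\psi^{\star -1}(z)$ are statistically indistinguishable by any linear functional of $P^\star_h$, restricting the TV supremum to $\Fcal^{(1)}_h$ is without loss as far as $P^\star_h$ is concerned, while the within-block discrepancy of $\hat P^{\tilde n}_h$ is precisely the $\|\hat\phi^{\tilde n}\|_{\Sigma_h^{-1}}$ quantity already absorbed by the bonus. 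This reduction from arbitrary bounded $f$ to $f \in \Fcal^{(1)}_h$ is the delicate step and the only place where Block structure is essential; the remainder of the proof is a mechanical reuse of the chain of lemmas leading to the reward-free PAC bound.
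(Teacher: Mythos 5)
Your overall architecture differs from the paper's, and it has a gap at the crucial step. The paper does not start from a pointwise bound integrated against $d^\pi_{P^\star,h}$; it first changes the occupancy measure from $P^\star$ to $\hat P^{\tilde n}$ by treating $f_h(s,a)=\|P^\star_h(\cdot|s,a)-\hat P^{\tilde n}_h(\cdot|s,a)\|_{TV}$ as a reward and applying the simulation lemma (\pref{lem:pdl}), which costs a factor of $(2H+1)$ and yields $\sum_h\EE_{d^\pi_{P^\star,h}}[f_h]\leq(2H+1)\sum_h\EE_{d^\pi_{\hat P^{\tilde n},h}}[f_h]$. Only then does it bound the TV error by the bonus via a one-step-back argument in the \emph{learned} model, replace the arbitrary $\pi$ by $\hat\pi^{\tilde n}$ using $\hat\pi^{\tilde n}=\argmax_\pi\EE_{d^\pi_{\hat P^{\tilde n},h}}[\hat b^{\tilde n}_h]$ (an optimality that holds only under $\hat P^{\tilde n}$), and finally invoke \pref{lem:rf1}.

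Your route --- take expectations under $d^\pi_{P^\star,h}$ and apply \pref{lem:tsbt2} --- gets stuck. That lemma leaves you with $\sum_h\EE_{(\tilde s,\tilde a)\sim d^\pi_{P^\star,h}}\|\phi^\star_h(\tilde s,\tilde a)\|_{\Sigma^{-1}_{\gamma^{\tilde n}_h,\phi^\star_h}}$ for an \emph{arbitrary} policy $\pi$ at the \emph{single} iteration $\tilde n$. The elliptical potential argument only controls $\sum_{n}\EE_{d^{\hat\pi^n}_{P^\star,h}}\|\phi^\star_h\|_{\Sigma^{-1}_{\gamma^{n}_h,\phi^\star_h}}$, i.e., the sum over $n$ with the policy matched to the covariance being accumulated; for a fixed $\tilde n$ and a $\pi$ that visits latent states the mixture $\gamma^{\tilde n}_h$ has not covered, the term is of order $1/\sqrt{\lambda_{\tilde n}}$ and does not decay with $N$. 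Likewise, \pref{lem:rf1} bounds $V^{\hat\pi^{\tilde n}}_{\hat P^{\tilde n},\hat b^{\tilde n}}$ --- the cumulative bonus under the \emph{learned} model for the bonus-\emph{maximizing} policy --- and cannot be applied to $\sum_h\EE_{d^\pi_{P^\star,h}}[\hat b^{\tilde n}_h]$ without first performing the change of measure to $d^\pi_{\hat P^{\tilde n},h}$ and the policy-optimality substitution. The missing idea is exactly that simulation-lemma change of measure into the learned model's occupancy, which is what simultaneously removes the dependence on the arbitrary $\pi$ and on the single iteration $\tilde n$ (and is the source of the extra factor of $H$ in the bound). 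Your concern that TV demands uniform control over all bounded test functions while \textsc{RepLearn} only covers $\Fcal_h$ is legitimate --- the paper's own ``one-step back'' step for the TV term is terse on this point --- but it is secondary to the occupancy-measure gap above.
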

\begin{proof}
We have
\begin{align*}
    &\sum_{h=1}^{H}\EE_{d^\pi_{P^\star,h}}\left[\left\|P_h^\star(\cdot|s,a)-\hat P_h^{\tilde n}(\cdot|s,a)\right\|_{TV}\right]
    \\
    =& \underbrace{\left(\sum_{h=1}^{H}\EE_{d^\pi_{\hat P_h^{\tilde n},h}}\left[\left\|P_h^\star(\cdot|s,a)-\hat P_h^{\tilde n}(\cdot|s,a)\right\|_{TV}\right]-\sum_{h=1}^{H}\EE_{d^\pi_{P^\star,h}}\left[\left\|P_h^\star(\cdot|s,a)-\hat P_h^{\tilde n}(\cdot|s,a)\right\|_{TV}\right]\right)}_{\text{term (a)}}\\
    &\qquad\qquad\qquad\qquad\qquad\qquad\qquad\qquad\qquad\qquad\qquad\qquad+\underbrace{\sum_{h=1}^{H}\EE_{d^\pi_{\hat P_h^{\tilde n},h}}\left[\left\|P_h^\star(\cdot|s,a)-\hat P_h^{\tilde n}(\cdot|s,a)\right\|_{TV}\right]}_{\text{term (b)}}
\end{align*}
Since $f(s,a) = \left\|P_h^\star(\cdot|s,a)-\hat P_h^{\tilde n}(\cdot|s,a)\right\|_{TV}\in [0,2]$, treating it as a reward function and apply simulation lemma, we have
\begin{align*}
    \text{term (a)} = & \sum_{h=1}^{H}\EE_{d^\pi_{\hat P_h^{\tilde n},h}}\left[P_h^\star V^{\pi}_{P^\star,f,h+1}(s,a)-\hat P_h^{\tilde n}V^{\pi}_{P^\star,f,h+1}(s,a)\right]
    \\
    \leq & 2H\sum_{h=1}^{H}\EE_{d^\pi_{\hat P_h^{\tilde n},h}}\left[\left\|P_h^\star(\cdot|s,a)-\hat P_h^{\tilde n}(\cdot|s,a)\right\|_{TV}\right]
\end{align*}
Therefore, we have
\begin{align*}
    & \sum_{h=1}^{H}\EE_{d^\pi_{P^\star,h}}\left[\left\|P_h^\star(\cdot|s,a)-\hat P_h^{\tilde n}(\cdot|s,a)\right\|_{TV}\right]
    \\
    = & \text{term (a)}+\text{term (b)}
    \\
    = & (2H+1)\sum_{h=1}^{H}\EE_{d^\pi_{\hat P_h^{\tilde n},h}}\left[\left\|P_h^\star(\cdot|s,a)-\hat P_h^{\tilde n}(\cdot|s,a)\right\|_{TV}\right]
    \\
    \preceq & (2H+1)\sum_{h=0}^{H-1}\EE_{d^\pi_{\hat P_h^{\tilde n},h}}\left[\hat b_h^{\tilde n}(s,a)\right] \tag{one-step back}
    \\
    \leq & (2H+1)\sum_{h=0}^{H-1}\EE_{d^{\pi^{\tilde n}}_{\hat P_h^{\tilde n},h}}\left[\hat b_h^{\tilde n}(s,a)\right]\tag{$\pi^{\tilde n}=\argmax_{\pi}\EE_{d^\pi_{\hat P_h^{\tilde n},h}}\left[\hat b_h^{\tilde n}(s,a)\right]$}
    \\
    \leq & O\left(H^{3}|\Acal|^{1/2}d\log(|\Phi|/\delta)^{1/4}N^{-1/4}\right) \tag{\pref{lem:rf1}}
\end{align*}
\end{proof}

\section{Auxiliary Lemmas}

\begin{lemma}[bounded LSVI solution for Block MDP]\label{lem:bound_w}
For any $\phi\in\Phi$, $f:\Scal\rightarrow[0,L]$, dataset $\Dcal = \{(s_i,a_i,s_i')\}_{i=1}^n$, the ridge regression solution
\begin{equation}
    \hat w_f = \argmin_{w} \sum_{i=1}^n(\phi(s,a)^\top w - f(s'))^2+\lambda\|w\|_2^2
\end{equation}
satisfies $\|\hat w_f\|_\infty\leq L$.
\end{lemma}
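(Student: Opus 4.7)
The plan is to exploit the Block MDP one-hot structure of $\phi \in \Phi$ (explicitly, $\phi(s,a) = e_{(\psi(s), a)}$ for some decoder $\psi: \mathcal{S} \to \mathcal{Z}$) to get a coordinate-wise closed form for $\hat w_f$ that makes the $\ell_\infty$ bound immediate. Because each $\phi(s_i, a_i)$ is a standard basis vector in $\mathbb{R}^{|\mathcal{Z}||\mathcal{A}|}$, the empirical covariance $\Sigma := \sum_{i=1}^n \phi(s_i, a_i)\phi(s_i, a_i)^\top + \lambda I$ is diagonal, with diagonal entry at coordinate $(z,a)$ equal to $N(z,a) + \lambda$, where $N(z,a) := |\{i : \psi(s_i) = z, a_i = a\}|$. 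Consequently $\Sigma^{-1}$ is also diagonal.

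Next I would write the standard ridge regression solution $\hat w_f = \Sigma^{-1} \sum_{i=1}^n \phi(s_i, a_i) f(s_i')$ coordinate-wise. The cross term $\sum_{i=1}^n \phi(s_i, a_i) f(s_i')$ is a vector whose $(z,a)$ entry is $\sum_{i: \psi(s_i) = z, a_i = a} f(s_i')$, simply because $\phi(s_i,a_i)$ picks out the $(z,a)$ coordinate iff $(\psi(s_i), a_i) = (z,a)$. Combining with the diagonal inverse, I get the clean formula
\begin{equation*}
    [\hat w_f]_{(z,a)} \;=\; \frac{\sum_{i: \psi(s_i) = z,\, a_i = a} f(s_i')}{N(z,a) + \lambda}.
\end{equation*}

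The final step is a one-line bound: since $f(s_i') \in [0, L]$, the numerator lies in $[0, L \cdot N(z,a)]$, so
\begin{equation*}
    0 \;\le\; [\hat w_f]_{(z,a)} \;\le\; \frac{L \cdot N(z,a)}{N(z,a) + \lambda} \;\le\; L,
\end{equation*}
for every coordinate $(z,a)$, which yields $\|\hat w_f\|_\infty \le L$.

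There is no real obstacle here; the entire argument rests on the diagonal structure of $\Sigma$ induced by one-hot $\phi$, which is the defining feature of Block MDP representations as used throughout the paper. The only mild subtlety is handling coordinates $(z,a)$ with $N(z,a) = 0$, but in that case the numerator is zero as well so $[\hat w_f]_{(z,a)} = 0$, consistent with the bound.
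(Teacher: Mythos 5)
Your proof is correct and follows essentially the same route as the paper: both exploit the one-hot Block MDP structure to obtain the coordinate-wise closed form $[\hat w_f]_{(z,a)} = \frac{\sum_{i:\psi(s_i)=z,\,a_i=a} f(s_i')}{N(z,a)+\lambda}$ and bound it by $L$ using $f \in [0,L]$. Your explicit treatment of the diagonal covariance and the $N(z,a)=0$ edge case is slightly more detailed than the paper's one-line argument, but the substance is identical.
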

\begin{proof}
In Block MDP, $\hat w_f$ takes the following closed-form
\begin{align*}
    \hat w_f(z,a) = \frac{1}{N_\Dcal(\phi(s,a)+\lambda}\sum_{i:\phi(s_i,a_i)=(z,a)}f(s'_i)\leq \frac{N_\Dcal(\phi(s,a))}{N_\Dcal(\phi(s,a)+\lambda}L\leq L
\end{align*}
as needed.
\end{proof}
The following is a standard inequality to prove regret bounds for linear models. Refer to \citet[Lemma G.2.]{agarwal2020pc}.
\begin{lemma}\label{lem:reduction}
Consider the following process. For $n=1,\cdots,N$, $M_n=M_{n-1}+G_n$ with $M_0=\lambda_0 I$ and $G_n$ being a positive semidefinite matrix with eigenvalues upper-bounded by $1$. We have that:
\begin{align*}
    2\ln \det (M_N)-2\ln \det(\lambda_0 I)\geq \sum_{n=1}^N\text{Tr}(G_n M^{-1}_{n-1}). 
\end{align*}
\end{lemma}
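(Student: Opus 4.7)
The plan is to reduce the sum $\sum_n \mathrm{Tr}(G_n M_{n-1}^{-1})$ to a telescoping sum of log-determinant increments via the matrix determinant lemma, and then control each increment pointwise on the eigenvalue spectrum with a scalar inequality of the form $2\ln(1+x) \geq x$. Concretely, I would first write $M_n = M_{n-1}^{1/2}(I + A_n)M_{n-1}^{1/2}$ where $A_n := M_{n-1}^{-1/2} G_n M_{n-1}^{-1/2}$, which is well-defined and PSD since $M_{n-1} \succeq \lambda_0 I \succ 0$ and $G_n \succeq 0$. Taking determinants, one obtains the identity
\begin{equation*}
\ln\det(M_n) - \ln\det(M_{n-1}) = \ln\det(I + A_n) = \sum_{i} \ln(1 + \lambda_i(A_n)),
\end{equation*}
and by cyclicity of the trace, $\mathrm{Tr}(G_n M_{n-1}^{-1}) = \mathrm{Tr}(M_{n-1}^{-1/2} G_n M_{n-1}^{-1/2}) = \mathrm{Tr}(A_n) = \sum_i \lambda_i(A_n)$.

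Next, I would argue that each eigenvalue of $A_n$ lies in a range where the scalar inequality $2\ln(1+x) \geq x$ is valid. Since $G_n \preceq I$ by assumption on its eigenvalues and $M_{n-1} \succeq \lambda_0 I$ (as $M_n - M_{n-1} = G_n \succeq 0$ for all $n$), we get $A_n \preceq \lambda_0^{-1} I$, so $\lambda_i(A_n) \in [0, \lambda_0^{-1}]$. For the regime in which the lemma is invoked (with $\lambda_0 \geq 1$, as is the standard normalization when $G_n \preceq I$), this means $\lambda_i(A_n) \in [0,1]$, on which a direct one-variable calculus check yields $2\ln(1+x) \geq x$ (the function $f(x) = 2\ln(1+x)-x$ satisfies $f(0)=0$ and $f'(x) = \tfrac{2}{1+x} - 1 \geq 0$ for $x \leq 1$).

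Applying this inequality eigenvalue-by-eigenvalue and summing over $i$ gives
\begin{equation*}
2\bigl[\ln\det(M_n) - \ln\det(M_{n-1})\bigr] = 2\sum_i \ln(1+\lambda_i(A_n)) \geq \sum_i \lambda_i(A_n) = \mathrm{Tr}(G_n M_{n-1}^{-1}).
\end{equation*}
Finally, summing this inequality over $n=1,\ldots,N$ telescopes the left-hand side into $2\ln\det(M_N) - 2\ln\det(M_0) = 2\ln\det(M_N) - 2\ln\det(\lambda_0 I)$, yielding the claimed bound.

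The only mildly subtle step is checking the eigenvalue range for the scalar inequality; everything else is bookkeeping with the determinant/trace identities. If $\lambda_0 < 1$, one would instead invoke $\ln(1+x) \geq \tfrac{x}{1+x}$ (valid for all $x\geq 0$) and absorb the $(1+\lambda_0^{-1})$ factor into a multiplicative constant, but for the application in the regret proof the standard setting $\lambda_0 \geq 1$ suffices and the clean factor of $2$ in the lemma statement is then tight.
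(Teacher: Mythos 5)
Your proof is correct and is essentially the standard argument behind this lemma --- the paper itself gives no proof, deferring to Lemma G.2 of \citet{agarwal2020pc}, and that argument is exactly your telescoping of $\ln\det$ increments via $M_n = M_{n-1}^{1/2}(I+A_n)M_{n-1}^{1/2}$ combined with the scalar bound $x \leq 2\ln(1+x)$ applied to the eigenvalues of $A_n$. Your caveat about the eigenvalue range is well taken: as literally stated for arbitrary $\lambda_0>0$ the inequality can fail (e.g.\ in dimension one with $\lambda_0 = 0.1$ and $G_1 = 1$, the left side is $2\ln 11 \approx 4.8$ while the right side is $10$), but in the paper's application $\lambda_n = \Theta(d\ln(n|\Phi|/\delta)) \geq 1$, so your $[0,1]$ range check suffices and the factor of $2$ is indeed the right constant.
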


\begin{lemma}[Potential Function Lemma]\label{lem:potential}
Suppose $\text{Tr}(G_n)\leq B^2$. 
\begin{align*}
      2\ln \det (M_N)-2\ln \det(\lambda_0 I)\leq d\ln\prns{1+\frac{NB^2}{d\lambda_0}}. 
\end{align*}
\end{lemma}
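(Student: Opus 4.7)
The plan is to prove this by the arithmetic-geometric mean inequality applied to the eigenvalues of $M_N$. Since $\lambda_0 > 0$ and each $G_n$ is PSD (as assumed in \pref{lem:reduction}), the matrix $M_N = \lambda_0 I + \sum_{n=1}^N G_n$ is symmetric positive definite with strictly positive eigenvalues $\mu_1, \dots, \mu_d$. The determinant equals $\prod_i \mu_i$ and the trace equals $\sum_i \mu_i$, so AM-GM immediately yields
\[
\det(M_N) \;=\; \prod_{i=1}^d \mu_i \;\leq\; \left(\frac{1}{d}\sum_{i=1}^d \mu_i\right)^d \;=\; \left(\frac{\mathrm{Tr}(M_N)}{d}\right)^d.
\]

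Next I would bound the trace using linearity and the hypothesis $\mathrm{Tr}(G_n) \leq B^2$:
\[
\mathrm{Tr}(M_N) \;=\; \mathrm{Tr}(\lambda_0 I) + \sum_{n=1}^N \mathrm{Tr}(G_n) \;\leq\; d\lambda_0 + NB^2.
\]
Substituting this into the AM-GM bound and dividing by $\det(\lambda_0 I) = \lambda_0^d$,
\[
\frac{\det(M_N)}{\det(\lambda_0 I)} \;\leq\; \left(\frac{d\lambda_0 + NB^2}{d\lambda_0}\right)^d \;=\; \left(1 + \frac{NB^2}{d\lambda_0}\right)^d.
\]
Taking logarithms gives $\ln\det(M_N) - \ln\det(\lambda_0 I) \leq d\ln(1+NB^2/(d\lambda_0))$, which yields the stated inequality (the factor $2$ on the left-hand side appears to be a cosmetic artifact aligning with the factor $2$ in \pref{lem:reduction}; if desired, one can interpret the stated bound as a doubled upper bound on $\ln\det(M_N/\lambda_0 I)$, which is still valid).

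There is no genuine obstacle in the argument: it is a two-line application of AM-GM plus trace linearity. The only things to verify are that $M_N$ is strictly positive definite so that its eigenvalues and determinant are positive (hence logarithms are well-defined), which follows from $\lambda_0 > 0$ and PSD-ness of each $G_n$, and that the trace bound is summed correctly over the $N$ updates. Because the argument only uses the scalar quantity $\mathrm{Tr}(G_n)$ and not the eigenvalue bound on $G_n$ used in \pref{lem:reduction}, this lemma applies in the broader regime relevant to the regret computation in \pref{thm:pseudo_regret}.
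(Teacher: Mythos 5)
Your proof is correct and is exactly the standard trace--determinant (AM--GM) argument; the paper states \pref{lem:potential} without proof, and this is clearly the intended derivation. One point needs sharpening: what you actually establish is $\ln\det(M_N)-\ln\det(\lambda_0 I)\leq d\ln\left(1+\frac{NB^2}{d\lambda_0}\right)$, and the factor $2$ in the stated lemma is not merely cosmetic --- taken literally, the statement is false. For a counterexample, take $d=1$, $\lambda_0=1$, $G_n=B^2$ (scalar), so $M_N=1+NB^2$: the claim would read $2\ln(1+NB^2)\leq \ln(1+NB^2)$, which fails whenever $NB^2>0$ (and adding the eigenvalue bound from \pref{lem:reduction} does not help, e.g.\ $B^2=1$). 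Your parenthetical rescue --- that the stated bound is ``still valid'' as a doubled upper bound --- does not work either: doubling the left-hand side would require doubling the right-hand side to $2d\ln(\cdot)$. The correct reading is that the factor $2$ is a typo carried over from \pref{lem:reduction}; indeed, in the proof of \pref{thm:pseudo_regret} the authors apply the inequality in your no-factor-$2$ form, $\ln\det(\cdot)-\ln\det(\lambda_1 I)\leq d\ln\left(1+\frac{N}{d\lambda_1}\right)$ (having also silently dropped the $2$ from \pref{lem:reduction}), so the discrepancy only perturbs absolute constants absorbed into the $O(\cdot)$ bounds. Your closing observation is also right: unlike \pref{lem:reduction}, this lemma needs only $\mathrm{Tr}(G_n)\leq B^2$ and no bound on the eigenvalues of $G_n$.
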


\begin{lemma}[Generalized Elliptic Potential Lemma, Lemma 24 of \citep{modi2021model}]
\label{lem:gen_elliptic_pot}
For any sequence of vectors $\theta^{\star}_1, \theta^{\star}_2 \ldots, \theta^{\star}_T \in \RR^{d \times T}$ where $\|\theta^{\star}_i\| \le L\sqrt{d}$, for $\lambda \ge L^2d$, we have:
\begin{align*}
    \sum_{t=1}^T \|\Sigma_t^{-1}\theta^{\star}_{t+1}\|_2 \le 2\sqrt{\frac{dT}{\lambda}}.
\end{align*}
\end{lemma}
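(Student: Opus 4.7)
The plan is to prove the generalized elliptic potential bound via a telescoping argument on $\operatorname{Tr}(\Sigma_t^{-1})$, using Sherman–Morrison to expose the quantity $\|\Sigma_t^{-1}\theta^{\star}_{t+1}\|_2^2$, combined with Cauchy–Schwarz at the end to convert the $\ell_2^2$ bound to the stated $\ell_1$ bound. Here $\Sigma_t := \sum_{i=1}^t \theta^{\star}_i \theta^{\star\top}_i + \lambda I$ as introduced in the proof of Theorem~\ref{thm:rep_learn}.

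First I would apply the Sherman–Morrison identity to $\Sigma_{t+1} = \Sigma_t + \theta^{\star}_{t+1}\theta^{\star\top}_{t+1}$, obtaining
\begin{equation*}
\Sigma_{t+1}^{-1} \;=\; \Sigma_t^{-1} \;-\; \frac{\Sigma_t^{-1}\theta^{\star}_{t+1}\theta^{\star\top}_{t+1}\Sigma_t^{-1}}{1 + \theta^{\star\top}_{t+1}\Sigma_t^{-1}\theta^{\star}_{t+1}}.
\end{equation*}
Taking the trace and using $\operatorname{Tr}(uu^{\top}) = \|u\|_2^2$ yields the key identity
\begin{equation*}
\operatorname{Tr}(\Sigma_t^{-1}) - \operatorname{Tr}(\Sigma_{t+1}^{-1}) \;=\; \frac{\|\Sigma_t^{-1}\theta^{\star}_{t+1}\|_2^2}{1 + \|\theta^{\star}_{t+1}\|_{\Sigma_t^{-1}}^2}.
\end{equation*}

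Next, I would control the denominator using the hypotheses $\|\theta^{\star}_{t+1}\|_2 \le L\sqrt{d}$ and $\lambda \ge L^2 d$: since $\Sigma_t \succeq \lambda I$, we have $\Sigma_t^{-1} \preceq \lambda^{-1}I$ and therefore $\|\theta^{\star}_{t+1}\|_{\Sigma_t^{-1}}^2 \le \|\theta^{\star}_{t+1}\|_2^2 / \lambda \le L^2 d/\lambda \le 1$. Substituting this bound into the identity gives
\begin{equation*}
\|\Sigma_t^{-1}\theta^{\star}_{t+1}\|_2^2 \;\le\; 2\bigl(\operatorname{Tr}(\Sigma_t^{-1}) - \operatorname{Tr}(\Sigma_{t+1}^{-1})\bigr).
\end{equation*}
Summing over $t = 1,\dots,T$, the right-hand side telescopes and is upper-bounded by $2\operatorname{Tr}(\Sigma_1^{-1}) \le 2\operatorname{Tr}(\lambda^{-1}I) = 2d/\lambda$, so
\begin{equation*}
\sum_{t=1}^{T}\|\Sigma_t^{-1}\theta^{\star}_{t+1}\|_2^2 \;\le\; \frac{2d}{\lambda}.
\end{equation*}

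Finally, I would apply Cauchy–Schwarz to pass from the $\ell_2^2$ bound to the $\ell_1$ bound:
\begin{equation*}
\sum_{t=1}^T \|\Sigma_t^{-1}\theta^{\star}_{t+1}\|_2 \;\le\; \sqrt{T\sum_{t=1}^T\|\Sigma_t^{-1}\theta^{\star}_{t+1}\|_2^2} \;\le\; \sqrt{2dT/\lambda} \;\le\; 2\sqrt{dT/\lambda},
\end{equation*}
which is the desired bound. The only non-routine step is recognizing that Sherman–Morrison on the trace produces exactly $\|\Sigma_t^{-1}\theta^{\star}_{t+1}\|_2^2$ in the numerator (rather than the more familiar $\|\theta^{\star}_{t+1}\|_{\Sigma_t^{-1}}^2$ appearing in the log-det version of the elliptic potential); once that observation is in hand, the condition $\lambda \ge L^2 d$ is precisely what is needed to absorb the $1 + \|\theta^{\star}_{t+1}\|_{\Sigma_t^{-1}}^2 \le 2$ factor, and the rest is a telescoping argument plus Cauchy–Schwarz.
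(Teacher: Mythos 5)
Your proof is correct and takes essentially the same route as the proof the paper relies on for this lemma (it cites Lemma 24 of \citet{modi2021model} rather than reproving it): use $\operatorname{Tr}(\Sigma_t^{-1})$ as the potential, apply Sherman--Morrison to expose $\|\Sigma_t^{-1}\theta^{\star}_{t+1}\|_2^2$ in the trace difference, invoke $\lambda \ge L^2 d$ to bound the denominator $1+\|\theta^{\star}_{t+1}\|^2_{\Sigma_t^{-1}}$ by $2$, and finish by telescoping and Cauchy--Schwarz. All steps check out, including the bound $\operatorname{Tr}(\Sigma_1^{-1}) \le d/\lambda$ and the final absorption of $\sqrt{2}$ into the constant $2$.
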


Next, we provide an important lemma to ensure the concentration of the bonus term. The version for fixed $\phi$ is proved in \citet[Lemma 39]{zanette2021cautiously}.  Here, we take a union bound over the whole feature $\phi\in \Phi$.

\begin{lemma}[Concentration of the Bonus] \label{lem:con}
Set $\lambda_n=\Theta(d\ln(n|\Phi|/\delta))$ for any $n$. Let $\Dcal=\{s_i,a_i\}_{i=0}^{n-1}$ be a stochastic sequence of data where $(s_i,a_i)\sim \rho_i$ where $\rho_i$ can depend on the history of time steps $1,...,i-1$. Let $\rho = \frac{1}{n}\sum_{i=0}^{n-1}\rho$ and
define
\begin{align*}
    \Sigma_{\rho,\phi}=n\EE_{\rho}[\phi(s,a)\phi^{\top}(s,a)]+\lambda_n I,\quad \hat \Sigma_{n,\phi}=\sum_{i=0}^{n-1} \phi(s_i,a_i)\phi^{\top}(s_i,a_i)+\lambda_n I. 
\end{align*} 
Then, with probability $1-\delta$, we have
\begin{align*}
  \forall n \in \NN^{+},\forall \phi\in \Phi, c_1 \|\phi(s,a)\|_{\Sigma^{-1}_{\rho,\phi}}\leq \|\phi(s,a)\|_{\hat \Sigma^{-1}_{n,\phi}} \leq  c_2 \| \phi(s,a)\|_{\Sigma^{-1}_{\rho,\phi}}. 
\end{align*}
\end{lemma}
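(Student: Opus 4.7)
}

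My plan is to reduce the claim to a standard matrix Freedman/Bernstein argument applied for each fixed feature $\phi$, and then take union bounds over the finite class $\Phi$ and over $n \in \NN^+$. The key intermediate statement I will aim for is a two-sided PSD sandwich
\begin{equation*}
 (1/2)\,\Sigma_{\rho,\phi} \;\preceq\; \hat\Sigma_{n,\phi} \;\preceq\; (3/2)\,\Sigma_{\rho,\phi},
\end{equation*}
from which the desired inequality on $\|\phi(s,a)\|_{\hat\Sigma_{n,\phi}^{-1}}$ versus $\|\phi(s,a)\|_{\Sigma_{\rho,\phi}^{-1}}$ follows immediately with $c_1 = \sqrt{2/3}$ and $c_2 = \sqrt{2}$, since $A \preceq B$ implies $B^{-1} \preceq A^{-1}$ and therefore $x^\top B^{-1} x \leq x^\top A^{-1} x$ for any vector $x$.

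For a fixed $\phi\in\Phi$, I would work with the symmetrized deviation
\begin{equation*}
Z_\phi \;:=\; \Sigma_{\rho,\phi}^{-1/2}\Big(\hat\Sigma_{n,\phi}-\Sigma_{\rho,\phi}\Big)\Sigma_{\rho,\phi}^{-1/2}
\;=\; \sum_{i=0}^{n-1} X_i,
\end{equation*}
where $X_i = \Sigma_{\rho,\phi}^{-1/2}\bigl(\phi(s_i,a_i)\phi(s_i,a_i)^\top - \EE_{\rho_i}[\phi\phi^\top]\bigr)\Sigma_{\rho,\phi}^{-1/2}$ is a martingale-difference sequence of symmetric matrices adapted to the filtration of the history. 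Each $X_i$ is bounded in operator norm by $\|\phi\|_\infty^2/\lambda_{\min}(\Sigma_{\rho,\phi}) \leq 1/\lambda_n$ (using $\|\phi\|_2\le 1$), and the conditional variance proxy $\sum_i \EE[X_i^2 \mid \mathcal{F}_{i-1}]$ has operator norm at most $1/\lambda_n$ by a direct computation that telescopes $\sum_i \EE_{\rho_i}[\phi\phi^\top]/n \preceq \Sigma_{\rho,\phi}$. Then the matrix Freedman inequality (e.g.\ Tropp 2011) yields
\begin{equation*}
\Pr\!\left[\|Z_\phi\|_{\mathrm{op}} \geq t\right] \;\leq\; 2d\,\exp\!\left(-\frac{t^2/2}{1/\lambda_n + t/(3\lambda_n)}\right),
\end{equation*}
which is at most $\delta'$ provided $\lambda_n \geq C\,\log(d/\delta')$ for a universal $C$. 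Choosing $t=1/2$ and $\delta' = \delta/(2 n^2 |\Phi|)$, and recalling the hypothesis $\lambda_n = \Theta(d\ln(n|\Phi|/\delta))$, makes this inequality hold with the stated probability, delivering the PSD sandwich for this particular $(\phi,n)$.

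The last step is a union bound: since $\Phi$ is finite and we only need countably many values of $n$, summing $\delta' = \delta/(2 n^2 |\Phi|)$ over $\phi\in\Phi$ and $n\in\NN^+$ costs only an additional $\sum_n 1/n^2 \le \pi^2/6$ factor, which is absorbed into the constant in the definition of $\lambda_n$. Once the sandwich holds simultaneously for every $\phi$ and every $n$, the two inequalities on the weighted norms follow from the elementary fact that $A \preceq B$ implies $\|x\|_{B^{-1}} \leq \|x\|_{A^{-1}}$. The one subtle point, which I expect to be the mildest obstacle, is handling the data-dependent nature of $\rho_i$ in the variance computation; here the filtration argument goes through verbatim because $\EE_{\rho_i}[\phi\phi^\top]$ is $\mathcal{F}_{i-1}$-measurable, exactly as in Lemma~39 of \citet{zanette2021cautiously}, whose single-$\phi$ version I would cite directly and extend by the union bound described above.
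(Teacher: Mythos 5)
Your high-level structure --- a fixed-$\phi$ relative concentration of $\hat\Sigma_{n,\phi}$ around $\Sigma_{\rho,\phi}$, followed by union bounds over the finite class $\Phi$ and over $n$, and the reduction from a PSD sandwich to the two-sided norm inequality --- matches what the paper does: the paper gives no self-contained proof and simply invokes the single-$\phi$ result of \citet[Lemma 39]{zanette2021cautiously} and union-bounds over $\Phi$. However, the specific mechanism you propose for the fixed-$\phi$ step has a genuine gap. You define $X_i = \Sigma_{\rho,\phi}^{-1/2}\bigl(\phi(s_i,a_i)\phi(s_i,a_i)^\top - \EE_{\rho_i}[\phi\phi^\top]\bigr)\Sigma_{\rho,\phi}^{-1/2}$ and assert that this is a matrix martingale-difference sequence. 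It is not: since each $\rho_i$ may depend on the history, the normalizer $\Sigma_{\rho,\phi} = \sum_{i}\EE_{\rho_i}[\phi\phi^\top] + \lambda_n I$ is a random matrix measurable only with respect to the full history $\mathcal{F}_{n-1}$, so $X_i$ is not $\mathcal{F}_i$-measurable, and $\EE[X_i\mid\mathcal{F}_{i-1}]\neq 0$ in general because the normalizer is correlated with $\phi(s_i,a_i)$ through the future $\rho_j$, $j>i$. Your closing remark that ``$\EE_{\rho_i}[\phi\phi^\top]$ is $\mathcal{F}_{i-1}$-measurable'' addresses the centering but not the normalization, which is exactly where the adaptivity bites; matrix Freedman therefore does not apply to $Z_\phi$ as written.

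The standard repair --- and, by all indications, the argument behind the cited Lemma 39 --- avoids normalizing by a random matrix: for each fixed unit vector $x$, apply scalar Freedman's inequality to the genuine martingale differences $(x^\top\phi(s_i,a_i))^2 - \EE_{\rho_i}[(x^\top\phi)^2]$, use the variance bound $\EE_{\rho_i}[(x^\top\phi)^4]\le\EE_{\rho_i}[(x^\top\phi)^2]$ together with AM--GM to get $|x^\top\hat\Sigma_{n,\phi}x - x^\top\Sigma_{\rho,\phi}x|\le \tfrac{1}{2}\,x^\top\Sigma_{\rho,\phi}x + O(\log(1/\delta'))$, and then union bound over a constant-resolution net of the unit sphere before absorbing the additive term into $\lambda_n$. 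This is also why the lemma requires $\lambda_n = \Theta(d\ln(n|\Phi|/\delta))$ with an explicit factor of $d$: the sphere net has $e^{O(d)}$ elements. Your matrix-Freedman route, were it valid, would only need $\lambda_n = \Theta(\ln(dn|\Phi|/\delta))$; the mismatch with the stated choice of $\lambda_n$ is itself a signal that the intended argument is the covering one. The remaining ingredients of your writeup --- the implication from $A\preceq B$ to $\|x\|_{B^{-1}}\le\|x\|_{A^{-1}}$, the constants $c_1=\sqrt{2/3}$ and $c_2=\sqrt{2}$, and the $\delta/(2n^2|\Phi|)$ union bound --- are fine.
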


\begin{lemma}[Simulation Lemma]\label{lem:pdl} Given two MDPs $(P', r+ b)$ and $(P, r)$, for any policy $\pi$, we have:
\begin{align*}
       V^{\pi}_{P',r+b}-V^{\pi}_{P,r}=\sum_{h=1}^H\EE_{(s_h,a_h)\sim d^\pi_{P',h}}\left[b_h(s_h,a_h)+ \EE_{P'_h(s'_{h}\mid s_h,a_h)}[V^{\pi}_{P,r,h+1}(s_h')]-\EE_{P_h(s'_{h}\mid s_h,a_h)}[V^{\pi}_{P,r,h+1}(s_h')] \right]
\end{align*}
and
\begin{align*}
       V^{\pi}_{P',r+b}-V^{\pi}_{P,r}=\sum_{h=1}^H\EE_{(s_h,a_h)\sim d^\pi_{P,h}}\left[b_h(s_h,a_h)+ \EE_{P'_h(s'_{h}\mid s_h,a_h)}[V^{\pi}_{P,r+b,h+1}(s_h')]-\EE_{P_h(s'_{h}\mid s_h,a_h)}[V^{\pi}_{P,r+b,h+1}(s_h')] \right]
\end{align*}
\end{lemma}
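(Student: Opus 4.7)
The plan is to prove each of the two identities separately by a telescoping (backward induction) argument on the horizon index $h$, using nothing beyond the Bellman equations for the two MDPs. The two identities differ only in which model we ``roll out'' under when taking the outer expectation — $P'$ in the first version and $P$ in the second — with a dual change in which value function appears in the inner model-mismatch term.

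For the first identity, I would start from the one-step Bellman expansion of $V^{\pi}_{P',r+b,h}(s) - V^{\pi}_{P,r,h}(s)$. Expanding both value functions and taking $a\sim \pi_h(s)$, the reward parts combine to give $b_h(s,a)$, and what is left is a difference between two ``next-step'' expectations — one under $P'_h$ of $V^{\pi}_{P',r+b,h+1}$ and one under $P_h$ of $V^{\pi}_{P,r,h+1}$. I would then add and subtract $\EE_{P'_h(\cdot\mid s,a)}[V^{\pi}_{P,r,h+1}(s')]$, which splits this into (i) a model-mismatch term $\EE_{P'_h}[V^{\pi}_{P,r,h+1}]-\EE_{P_h}[V^{\pi}_{P,r,h+1}]$ evaluated on the reference value function, and (ii) a recursion term $\EE_{P'_h}[V^{\pi}_{P',r+b,h+1}-V^{\pi}_{P,r,h+1}]$ that is exactly the original quantity one step later, integrated against $P'$.

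Unrolling this recursion from $h=0$ down to the terminal $V^{\pi}_{\cdot,H}\equiv 0$ and taking the outer expectation $s\sim d^{\pi}_{P',h}$ produces the sum in the first identity. The key bookkeeping fact that makes the telescope collapse is the compositional identity $\EE_{s\sim d^{\pi}_{P',h},\,a\sim\pi_h(s),\,s'\sim P'_h(\cdot\mid s,a)}[\,\cdot\,]=\EE_{s'\sim d^{\pi}_{P',h+1}}[\,\cdot\,]$, which is immediate from the definition of $d^{\pi}_{P'}$. For the second identity, the only change is to add and subtract $\EE_{P_h(\cdot\mid s,a)}[V^{\pi}_{P',r+b,h+1}(s')]$ instead, which swaps the roles: the model-mismatch term now carries $V^{\pi}_{P',r+b,h+1}$, while the recursion is integrated against $P_h$, so the outer expectation becomes $s\sim d^{\pi}_{P,h}$.

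The proof uses no concentration, function-approximation, or Block-MDP structure; it is purely a Bellman identity, and the only real work is the careful bookkeeping of step indices, of which model appears in each conditional expectation, and of which value function ($V^{\pi}_{P,r}$ vs.\ $V^{\pi}_{P',r+b}$) survives in the model-mismatch term after the add/subtract trick. I therefore do not anticipate any substantive obstacle — the main care needed is simply to keep the two versions symmetric and to check the boundary condition at $h=H$.
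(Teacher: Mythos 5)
Your proposed proof is correct, and it is the standard argument: the paper itself gives no proof of this lemma, stating it as a classical fact and only remarking that it continues to hold for the unnormalized kernel $\hat P$ because the occupancy measures and Bellman backups are defined inductively in the same way. Your telescoping derivation is exactly what is needed, and it is worth noting that nothing in it uses $\sum_{s'}P'_h(s'\mid s,a)=1$, so it covers the unnormalized case the paper relies on. One substantive point your derivation surfaces: for the second identity, the add-and-subtract of $\EE_{P_h(\cdot\mid s,a)}\bigl[V^{\pi}_{P',r+b,h+1}(s')\bigr]$ yields a model-mismatch term carrying $V^{\pi}_{P',r+b,h+1}$ (the value under the \emph{primed} model), whereas the lemma as printed writes $V^{\pi}_{P,r+b,h+1}$. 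Your version is the correct one, and it is also the version the paper actually uses when it invokes the lemma in the pseudo-regret proof (there $P'=\hat P$, $P=P^\star$, and the mismatch term carries $V^{\hat\pi^n}_{\hat P,r+\hat b,h+1}$), so the printed statement contains a typo that your proof implicitly corrects. The only other discrepancy is cosmetic: the lemma indexes the sum from $h=1$ to $H$ while your unrolling (and the paper's applications) run from $h=0$ to $H-1$ with the boundary condition $V^{\pi}_{\cdot,H}\equiv 0$; your indexing is the consistent one.
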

We note that since both the occupancy measure and Bellman updates under $\hat P$ are defined in the exact same way as if $\hat P$ is a proper probability matrix, the classic simulation lemma also applies to $\hat P$.

\section{Experiment Details}
\label{sec:app:exp}

\subsection{Setup Details and Hyperparameters}
\textbf{CombLock environment Details} For the design of the diabolical combination lock environment in our comparison with \homer, we follow the design as in \homer, but we present the environment hyperparameters in Table.~\ref{app:table:comblock} for completeness. We also provide a more detailed explanation of how we record result in Fig.~\ref{tbl:results}b: after each policy update, we perform 20 i.i.d. rollouts using the latest policy and record the mean returns in these 20 evaluation runs. If one algorithm can get a mean return of 1 for 5 consecutive policy updates, we count the algorithm solving the environment and record the number of episodes it uses.

\begin{table}[ht] 
\centering
\begin{tabular}{ccc} 
\toprule
                                                & Value                     \\ 
\hline
Horizon                                         & 6,12,25,50,100            \\
Switch probability                              & 0.5                       \\
Anti reward                                     & 0.1                       \\
Anti reward probability                         & 0.5                       \\
Final reward                                    & 1                         \\
Number of actions                               & 10                        \\
Observation noise std                           & 0.1                       \\
Random seeds                                    & 1,12,123,1234,12345       \\
\toprule
\end{tabular}
\caption{Hyperparameters for the rich observation comblock environment with sparse and anti-shaped reward.}
\label{app:table:comblock}
\end{table}

\paragraph{\algname Implementation for CombLock Environment} For the dense reward diabolical combination lock environment, we first use a random policy to collect $10000$ episodes of samples before our first iteration of feature learning. We perform this sample warm-up procedure for $H=50$ and $H=100$ experiments only. We maintain a separate buffer for each timestep $h$, and in practice we mix the samples in $\Dcal$ and $\Dcal'$ together. For each buffer, we limit the size of the buffer to 10000 and update the buffer with  first-in-first-out procedure. Between each update we rollout $50 \times H$ episodes to collect data. For the optimization method we use SGD with a momentum factor of $0.99$. Finally due to a different latent state distribution, we use softmax with temperature  0.1 for $\phi_0$ and a temperature of 1 for all the other timesteps. We provide the full list of hyperparameters in Table.~\ref{app:table:hyperparam:ours_comblock}.

\begin{table}[ht] 
\centering
\begin{tabular}{ccc} 
\toprule
                                                & Value Considered          & Final Value  \\ 
\hline
Decoder $\phi$ learning rate                    & \{1e-2, 5e-3, 1e-3\}      & 1e-2         \\
Discriminator $f$ learning rate                 & \{1e-2, 5e-3, 1e-3\}      & 1e-2         \\
Discriminator $f$ hidden layer size             & \{256,512\}               & 256          \\
RepLearn Iteration $T$                          & \{10,20,30\}              & 30           \\
Decoder $\phi$ number of gradient steps         & \{32,64,128,256\}         & 64           \\
Discriminator $f$ number of gradient steps      & \{128,256\}               & 128          \\
Decoder $\phi$ batch size                       & \{128,256,512\}           & 512          \\
Discriminator $f$ batch size                    & \{128,256,512\}           & 512          \\
RepLearn regularization coefficient $\lambda$   & \{1,0.1,0.01\}            & 0.01         \\
Decoder $\phi$ softmax temperature              & \{1,0.5,0.1\}             & 1            \\
Decoder $\phi_0$ softmax temperature            & \{1,0.1\}                 & 0.1          \\
LSVI bonus coefficient $\beta$                  & \{10, $\frac{H}{5}$\}     & $\frac{H}{5}$\\
LSVI regularization coefficient $\lambda$       & \{1\}                     & 1            \\
Buffer size                                     & \{1e5\}                   & 1e5          \\
Update frequency                                & \{50,100\}                & 50           \\
Warm up samples ($H=50,100$)                    & \{10000\}                 & 10000        \\
\toprule
\end{tabular}
\caption{Hyperparameters for \algname in sparse reward comblock experiment.}
\label{app:table:hyperparam:ours_comblock}
\end{table}

\paragraph{Baseline Implementation for CombLock Environment} In this section we provide the hyperparameteres we use for PPO-RND in Table.~\ref{app:table:comblock:ppo} and LSVI-UCB in Table.~\ref{app:table:hyperparam:comblock:rff}.\footnote{We use the public code for PPO-RND, which is available at \href{https://github.com/mbhenaff/PCPG}{here}.} For the RFF feature, we choose the bandwidth with median trick. 

\begin{table}[ht] 
\centering
\begin{tabular}{cc} 
\toprule
                                                & Value                     \\ 
\hline
Learning rate                                   & 1e-3                      \\
Hidden layer size                               & 64                        \\
$\tau_{\text{GAE}}$                             & 0.95                      \\
Gradient clipping                               & 5.0                       \\
Entropy bonus                                   & 0.01                      \\
Clip ratio                                      & 0.2                       \\
Minibatch size                                  & 160                       \\
Optimization epoch                              & 5                         \\
Intrinsic reward normalization                  & False                     \\
Intrinsic reward coefficient                    & 1e3                       \\
Extrinsic reward coefficient                    & 1.0                       \\
\toprule
\end{tabular}
\caption{Hyperparameters for PPO-RND in comblock sparse reward experiment.}
\label{app:table:comblock:ppo}
\end{table}

\begin{table}[ht] 
\centering
\begin{tabular}{ccc} 
\toprule
                                                & Value Considered          & Final Value  \\ 
\hline
LSVI bonus coefficient $\beta$                  & \{10, $\frac{H}{5}$\}     & $\frac{H}{5}$\\
LSVI regularization coefficient $\lambda$       & \{1\}                     & 1            \\
Buffer size                                     & \{1e5,5e5,1e6\}           & 1e6          \\
Update frequency                                & \{50,100,250\}            & 250          \\
Kernel bandwidth                                & Median Trick              & 5            \\
Feature dimension $H=6$                         & \{200\}                   & 200          \\
Feature dimension $H=12$                        & \{200,500\}               & 500          \\
Feature dimension $H=25$                        & \{500,800,1000\}          & 1000         \\
\toprule
\end{tabular}
\caption{Hyperparameters for for LSVI-UCB with RFF feature in comblock sparse reward experiment.}
\label{app:table:hyperparam:comblock:rff}
\end{table}

\paragraph{Simplex Feature Experiment Details} We first present detailed description of dynamics of the new MDP. Given a observation-action pair $s,a$, the simplex feature is given by $z(s) = \text{softmax}(I_3 R^{-1} s / \tau_{\text{env}})$, where $R^{-1} \in \mathbb{R}^{\text{dim}_s \times \text{dim}_s}$ is the inverse of the Hadamard matrix, and $I_3 \in \mathbb{R}^{3 \times \text{dim}_s}$, a matrix with a $3 \times 3$ identity matrix at its first three columns and zero everywhere else. Thus the ground truth feature is given by $\phi^{\ast} = z(s) \otimes a$. With a observation-action pair $s,a$, the environment first sample a latent state according to the probability simplex $z(s)$, and then transit according to the action $a$ and the transition rules of the original comblock environment. For our results in Fig.~\ref{exp:fig:simplex}, we use $\tau_{\text{env}} = 0.2$. The moving average is cross 50 evaluations (and 20 i.i.d. rollouts per evaluation). We use the same set of hyperparameters for \algname as in Table.~\ref{app:table:hyperparam:ours_comblock} and for LSVI as in Table.~\ref{app:table:hyperparam:comblock:rff}.

\paragraph{Dense Reward Environment Details} In the dense reward environment, we remove the anti-shaped reward and the agent receives rewards while staying in the good states. We keep the final reward if the correct actions are taken in the last layer. We provide the hyperparameters of the environment in Table.~\ref{app:table:comblock_dense}. We provide the hyperparameters of \algname for the dense reward environment in Table.\ref{app:table:hyperparam:ours_comblock_dense} (note only the LSVI bonus coefficients are different). We provide the hyperparameters of PPO for the dense reward environment in Table.\ref{app:table:comblock:ppo_dense}.

\begin{table}[ht] 
\centering
\begin{tabular}{ccc} 
\toprule
                                                & Value                     \\ 
\hline
Horizon                                         & 30                        \\
Switch probability                              & 0.5                       \\
Step reward                                     & 0.1                       \\
Final reward                                    & 1                         \\
Reward probability                              & 1                         \\
Number of actions                               & 10                        \\
Observation noise std                           & 0.1                       \\
Random seeds                                    & 1,12,123,1234,12345       \\
\toprule
\end{tabular}
\caption{Hyperparameters for the rich observation comblock environment with dense reward.}
\label{app:table:comblock_dense}
\end{table}

\begin{table}[ht] 
\centering
\begin{tabular}{ccc} 
\toprule
                                                & Value Considered          & Final Value  \\ 
\hline
Decoder $\phi$ learning rate                    & \{1e-2, 5e-3, 1e-3\}      & 1e-2         \\
Discriminator $f$ learning rate                 & \{1e-2, 5e-3, 1e-3\}      & 1e-2         \\
Discriminator $f$ hidden layer size             & \{256,512\}               & 256          \\
RepLearn Iteration $T$                          & \{10,20,30\}              & 30           \\
Decoder $\phi$ number of gradient steps         & \{32,64,128,256\}         & 64           \\
Discriminator $f$ number of gradient steps      & \{128,256\}               & 128          \\
Decoder $\phi$ batch size                       & \{128,256,512\}           & 512          \\
Discriminator $f$ batch size                    & \{128,256,512\}           & 512          \\
RepLearn regularization coefficient $\lambda$   & \{1,0.1,0.01\}            & 0.01         \\
Decoder $\phi$ softmax temperature              & \{1,0.5,0.1\}             & 1            \\
Decoder $\phi_0$ softmax temperature            & \{1,0.1\}                 & 0.1          \\
LSVI bonus coefficient $\beta$                  & \{$\frac{H}{5}$,$\frac{H}{50}$\}     & $\frac{H}{50}$\\
LSVI regularization coefficient $\lambda$       & \{1\}                     & 1            \\
Buffer size                                     & \{1e5\}                   & 1e5          \\
Update frequency                                & \{50,100\}                & 50           \\
Warm up samples                                 & \{0\}                     & 0            \\
\toprule
\end{tabular}
\caption{Hyperparameters for \algname in sparse reward comblock experiment.}
\label{app:table:hyperparam:ours_comblock_dense}
\end{table}

\begin{table}[ht] 
\centering
\begin{tabular}{ccc} 
\toprule
                                                & Value Considered  & Value                     \\ 
\hline
Learning rate                                   &\{1e-3,5e-4,1e-4\} & 1e-3                      \\
Hidden layer size                               &\{64\}             & 64                        \\
$\tau_{\text{GAE}}$                             &\{0.95\}           & 0.95                      \\
Gradient clipping                               &\{5.0\}            & 5.0                       \\
Entropy bonus                                   &\{0.01,0.001\}     & 0.01                      \\
Clip ratio                                      &\{0.2\}            & 0.2                       \\
Minibatch size                                  &\{160\}            & 160                       \\
Optimization epoch                              &\{5\}              & 5                         \\
\toprule
\end{tabular}
\caption{Hyperparameters for PPO in comblock sparse reward experiment.}
\label{app:table:comblock:ppo_dense}
\end{table}

\begin{figure}[ht]
    \centering
    \includegraphics[width=1\linewidth]{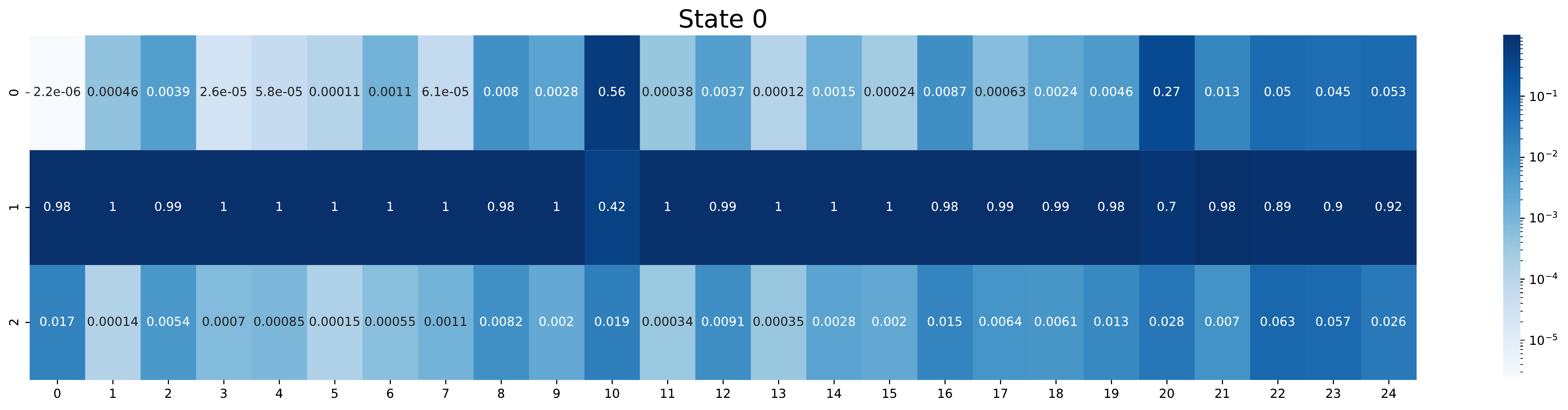}
    \centering
    \includegraphics[width=1\linewidth]{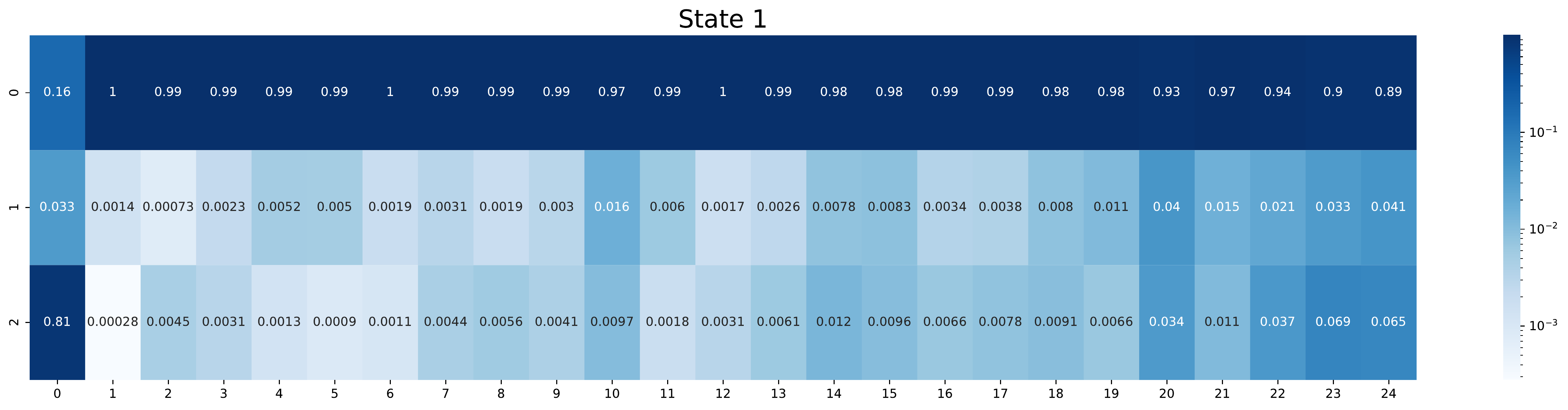}
    \centering
    \includegraphics[width=1\linewidth]{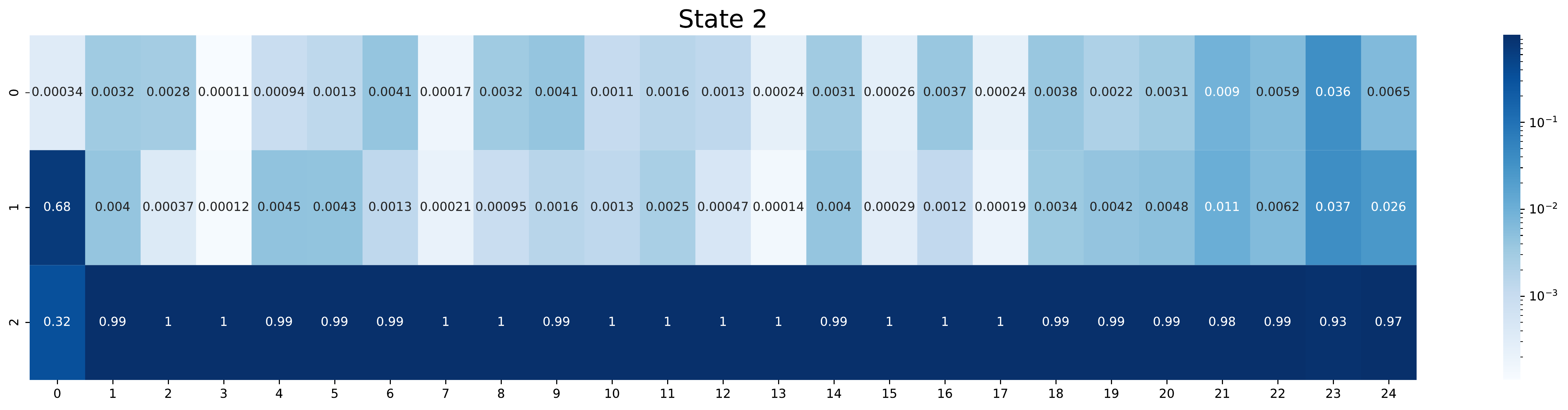}
    \caption{Visualization of the decoder. The $h$-th column in the $i$-th image denotes the averaged decoded states from the 50 observations generated by latent state $z_{i,h}$, for $i \in \{0,1,2\}$ and $h \in [25]$. The heatmap color transits in log scale. Note that we have near one-hot encoding in most of the levels, except $h=10$ and $h=20$. This is because we have the same optimal actions for the good states in $h=10$ and $h=20$ thus we still recover the ground-truth features (see text for more details). Also note that the optimal features are recovered up to permutation (e.g., in this case we decode state 0 as state 1 and state 1 as state 0, but this is still the optimal decoding).}
    \label{app:fig:decoder}
\end{figure}

\subsection{Visualization of the decoder}
\label{app:visualization_latent}
In Figure~\ref{app:fig:decoder}, we visualize the decoder on the combination lock example (with the Block MDP structure). We run \algname on the Block MDP combination lock until it solves the problem (i.e., achieve the optimal total reward). Denote the learned decoders as $\hat\psi_h$ for all $h \in [H]$. Note that $\hat\psi_h$ maps from state (i.e., observation) $s$ to a 3-dimensional vector in the simplex (since we use softmax, i.e., $\hat{\psi}(s) = \text{softmax}(A_h s / \tau)$, as defined in the implementation section). Ideally, we hope that $\hat\psi_h$ can output pretty deterministic distribution over 3 latent states (i.e., $\hat\psi_h(s)$ is close to a one-hot encoding vector), and can decode the latent state (up to permutation). 

We test the decoders as follows. For each state $z_{i;h}$ for $i \in \{0,1,2\}$ and $h \in [H]$, where in this section we use $H=25$, we sample 50 observations for each state (following the emission distribution described in the environment section), and we take the average of the 50 decoded states (decoded by $\hat \psi_h$). 

In Fig.~\ref{app:fig:decoder}, we demonstrate the decoded states. The $h$-th column in the $i$-th image denotes the average of the 50 decoded states from observations generated from $z_{i,h}$, for $i \in \{0,1,2\}$ and $h \in [25]$ (i.e., the image number denotes the ground truth state, the x-axis denotes the timestep, and the y-axis in each image denotes the averaged value of the decoded states on each dimension).

Interestingly, we notice that our decoder $\hat\psi_h$ fails to decode the two good latent states (i.e., state 0 and state 1) confidently at $h = 10$ and $h = 20$. However, this is not a failure case. The reason is that at $h = 10$ and $h = 20$, the two good latent states share the same optimal action (i.e., the action that transits  the agent from a good state to the next two good states). Namely, the two good states at $h = 10$ (and $h=20$) share the same transition.  Hence, there is no need for the decoder to distinguish these two states. Note that our decoders still successfully differentiate the bad state and the two good states at $h = 10$ and $20$. This phenomenon is also observed in \homer. Also note that for $h= 0$, the decoder is only required to distinguish state 0 from state 1, because the initial distribution is uniform over state 0 and state 1 only, and assigns 0 mass to state 2 (i.e., we never reach state 2 in $h=0$).

\end{document}